\documentclass[fontsize=12pt]{scrarticle}

\usepackage{preamble}
\usepackage{preamble_math}
\usepackage{biblatex_setup}

\begin{document}

\maketitle
\begin{longtable}[c]{p{0.9\textwidth}}
\textbf{\small{}Abstract. }{\small{}Physical AI is being successfully applied to data which does not follow the traditional paradigm of \emph{independent and identically distributed} (i.i.d.) samples. In fact, physical AI is often trained on data which is not random at all, and is instead derived from chaotic dynamical systems like turbulence.

We aim to explain the empirical success of these methods using the example of \emph{generative adversarial networks} (GANs), whose statistical learning theory under the i.i.d. assumption is generally well understood. We prove that it is possible, using an infinite-dimensional model of \emph{generative adversarial learning} (GAL), to learn the invariant distribution of a sufficiently chaotic dynamical system from a single deterministically evolving time series of its states or measurements thereof, and give explicit rates for the convergence to the solution in terms of the Jensen-Shannon divergence.}{\small\par}

\smallskip{}

\textbf{\small{}Keywords:}{\small{} Ergodic Dynamical Systems $\bullet$ Generative Adversarial Learning
$\bullet$ Statistical
Learning Theory $\bullet$ Concentration Inequalities}


\end{longtable}

\smallskip{}

\section{Introduction}
Physical artificial intelligence is a new major trend in machine learning. In particular, many authors propose to replace traditional computational fluid dynamics (CFD) solvers by neural-network-based learning methods. Prominent examples are physics informed neural networks (PINN) \cite{raissi2019physics}, operator learning methods like DeepONet \cite{lu2021learning}, and Fourier neural operators (FNO) \cite{li2020fourier}. 

Another line of research, starting with \cite{bode2023applying,bode2021using,drygala2022,Fukami:2019,kim2021unsupervised,Kim:2020,King:2017,wang2018esrgan}, suggests the use of generative learning for the fast generation of solutions to turbulent fluid dynamical systems \cite{alkin2024universal,doganay2026learning,klemmer2022spate,oommen2026learning,ross2025world,skorokhodov2022stylegan}. While early work focused on the generation of state snapshots, later works also address the learning of dynamics. This line of work provides extensive numerical evidence for successful learning of chaotic dynamical systems, both in terms of visual similarity and physics-based statistical evaluation. However, there is little to no explanation given as to why these approaches achieve the observed level of performance.

While there exists a substantial body of work dealing with the statistical learning theory of generative models, we cannot directly apply these works to the generative learning of chaotic dynamical systems. In particular, the predominant paradigm of statistical learning theory is learning from independent and identically distributed (i.i.d.) data samples \cite{shalev2014understanding}. But learning from the observation of a single trajectory of a deterministic system, as is often done in physical AI approaches, is certainly far from the i.i.d. assumption. One could resort to learning from multiple trajectories with i.i.d. initial conditions, but this is generally avoided due to considerable burn-in times until CFD solutions develop a statistically stationary turbulent state.

In \cite{drygala2022}, the authors propose replacing the i.i.d. assumption by \emph{ergodicity}, which captures an important statistical property of chaotic dynamical systems. The authors also sketch a mathematical argument to support this approach to learning from physical systems. But no detailed mathematical argument is provided and, in particular, no rates of convergence in the observation time are given.

The goal of this article is to establish a framework in which statistical learning from chaotic but deterministic data is mathematically justified, and to provide rates of convergence in analogy to standard results of statistical learning theory.

Though it is not specific to them, we demonstrate our approach on the example of generative adversarial networks (GANs). GANs were first introduced in \cite{goodfellow2014}. They have since shown remarkable potential, particularly for image generation. Several variants have been developed, including Conditional GAN \cite{mirza2014}, Wasserstein GAN \cite{arjovsky2017}, StyleGAN \cite{karras2021} and more. However, we focus on the original setting of \cite{goodfellow2014}, sometimes referred to as \enquote{vanilla GAN}.

The statistical learning theory of GANs was studied, for instance, in \cite{biau2020}, which links them to the Jensen-Shannon divergence and provides a first proof of consistency. Later papers on the topic include \cite{asatryan2023,liang2021,puchkin2024}. These papers succeed in proving quantitative rates of convergence in various finite- and infinite-dimensional settings, always under the i.i.d. assumption, which enables the use of the law of large numbers, the central limit theorem and other classical results of probability theory.

One possible weaker assumption is the aforementioned ergodicity, which is also used in \cite{drygala2022}. A measure-preserving dynamical system is ergodic if and only if any integrable observable of its states satisfies the strong law of large numbers in the following sense: Let $(M,\cA)$ be a measurable space and $(X_n)_{n\in\N}$ the $M$-valued sequence of states of the system, initialized according to its invariant distribution $\mu = \P \circ X_1^{-1}$. Then for every $f\in L^1(M,\cA,\mu)$, we have
\begin{equation*}
    \frac 1n \sum_{k=1}^n f(X_k) \xrightarrow[n\to\infty]{} \E_{X\sim\mu}[f(X)] \quad \text{almost surely}.
\end{equation*}
This result is known as Birkhoff's pointwise ergodic theorem. While such a generalization of the law of large numbers is sufficient to prove consistency, we require stronger results to prove quantitative rates of convergence. In this, we follow \cite{asatryan2023}, where the authors use McDiarmid's inequality, an exponential concentration inequality for functions of the first $n$ entries of an i.i.d. sequence. Our approach requires dynamical systems that satisfy a similar exponential concentration result, and we use \emph{Young towers} as a representative example. Under this assumption on the underlying dynamical system, we prove the same $\cO(n^{-\tfrac 12})$ rate of almost sure convergence with respect to the Jensen-Shannon divergence as in the i.i.d. setting.

\section{Generative Adversarial Learning}\label{sec:gal}

Given a sequence of training data samples $(Y_i)_{i\in\N}$ following some distribution $\mu\in \cM_1^+(\R^d)$, the goal of generative learning is to be able to generate new samples $(X_i)_{i\in\N}$ following the same or a similar distribution. In other words, to sample from a learned representation of $\mu$.

In generative adversarial learning (GAL), we are trying to learn a transport map $\phi$ (the \emph{generator}) from a noise measure $\nu$ to an approximation $\phi_*\nu=\nu\circ \phi^{-1}$ of $\mu$. The key feature that makes the learning adversarial is that, instead of scoring the generator's output using a difficult to compute negative log-likelihood loss, one introduces a new map $\xi$ (the \emph{discriminator}), which is trained along with the generator in a minimax game, see Figure~\ref{fig:GAN-illustration}. 

\begin{figure}[ht]
    \centering
    \begin{tikzpicture}[
        every node/.style={align=center},
        font=\footnotesize,
        distribution/.style={rectangle, rounded corners, draw=black, fill=black!5, very thick, minimum height=1cm},
        network/.style={rectangle, rounded corners, draw=blue, fill=blue!5, very thick, minimum height=1cm},
        sample/.style={rectangle, double copy shadow, very thick, minimum height=1cm},
        results/.style={rectangle, rounded corners, draw=black, fill=black!5, very thick, minimum height=26mm,minimum width=8mm},
        ]
    
        \node[distribution] (noise_distribution) at (0,0) {Noise \\ Measure $\nu$};
        \node[sample, draw=black!20!red, fill=red!5] (noise_samples) at (2.7,0) {Noise \\ Samples $Z_i$};
        \node[distribution] (true_distribution) at (5.4,3) {True \\ Measure $\mu$};
        \node[network] (generator) at (5.4,0) {Generator $\phi$};
        \node[sample, draw=black!30!green, fill=green!5] (true_samples) at (8.1,3) {Real \\ Samples $Y_i$};
        \node[sample, draw=black!20!red, fill=red!5] (generated_samples) at (8.1,0) {Fake \\ Samples $\widetilde Y_i$};
        \node[network] (discriminator) at (11.2,1.5) {Discriminator $\xi$};
        
        \node[results] (result_box) at  (13.6,1.5) {};
        \node[rectangle, draw=black, top color=black!20!green, bottom color=black!10!red, minimum height=15mm,minimum width=5mm] at (result_box) {};
        \node at (result_box)[above=8mm] {\color{black!30!green} Real};
        \node at (result_box)[below=8mm] {\color{black!20!red} Fake};
    
        \draw[->, thick] (noise_distribution.east) -- (noise_samples.west);
        \draw[->, thick] (noise_samples.east) -- (generator.west);
        \draw[->, thick] (true_distribution.east) -- (true_samples.west);
        \draw[->, thick] (generator.east) -- (generated_samples.west);
        \draw[->,thick] (true_samples.east) .. controls +(right:10mm) and +(left:10mm) .. (discriminator.west);
        \draw[->,thick] (generated_samples.east) .. controls +(right:10mm) and +(left:10mm) .. (discriminator.west);
        \draw[->,thick] (discriminator.east) -- (result_box.west);
    \end{tikzpicture}
    \caption{Illustration of generative adversarial learning. The generator $\phi$ transforms noise samples $Z_i\sim \nu$ into fake samples $\widetilde Y_i=\phi(Z_i)\sim \phi_*\nu$. The real samples $Y_i$ are generally taken from some training dataset with distribution $\mu$. The discriminator $\xi$ transforms samples into a \enquote{credibility score} between 0 and 1.}
    \label{fig:GAN-illustration}
\end{figure}

The credibility score assigned by the discriminator takes the role of the log-likelihood. The generator aims to maximize the credibility assigned to its fake samples $\phi(Z_i)$ generated from noise. And the discriminator aims to maximize the credibility assigned to true data samples $Y_i$ while minimizing the credibility assigned to fake samples $\phi(Z_i)$. The upside here is that the loss functions can be expressed in terms of $\phi$ and $\xi$ directly, without taking any inverses or derivatives (beyond the usual first derivative for gradient descent).

In the following, we ignore neural networks and gradient descent. Instead, we focus on the theoretical underpinnings of the method, which is why we speak of generative adversarial \emph{learning}, as opposed to \emph{networks}. In fact, we later consider infinite-dimensional hypothesis spaces, which neural networks can only approximate.

In doing so, we closely follow \cite{asatryan2023} for the remainder of this section. In particular, the ERM error decomposition result for generative adversarial learning is theirs, albeit with some modifications to allow for separate treatment of the real data and the noise samples, since the loss of the independence assumption on the former represents the primary contribution of this paper.

The quality of the approximation is measured by the Jensen-Shannon divergence, which is strongly related to the above algorithm, as shown by Proposition~\ref{prop:optimal-discriminators}. To define it, we first need to recall the definition of the Kullback-Leibler divergence.

\begin{definition}[Kullback-Leibler divergence] $ $\\
    Let $\varnothing\neq B\in \cB(\R^d)$ and $\mu,\nu\in \cM_1^+(B)$. The \emph{Kullback-Leibler divergence} between $\mu$ and $\nu$ is
    \begin{equation*}
        \dkl(\mu \| \nu) = \begin{cases}
            \int \frac{d\mu}{d\nu}\log \left(\frac{d\mu}{d\nu}\right) \, d\nu & \text{if} \ \mu \ll \nu,\\
            \infty & \text{else},
        \end{cases}
    \end{equation*}
    where $\frac{d\mu}{d\nu}$ denotes the Radon-Nikodym derivative of $\mu$ w.r.t. $\nu$ and $\mu \ll \nu$ denotes that $\mu$ is absolutely continuous with respect to $\nu$.
\end{definition}

\begin{definition}[Jensen-Shannon divergence] $ $\\
    Let $\varnothing\neq B\in \cB(\R^d)$ and $\mu,\nu\in \cM_1^+(B)$. The \emph{Jensen-Shannon divergence} between $\mu$ and $\nu$ is
    \begin{equation*}
        \djs(\mu,\nu) = \frac{1}{2} \left[\dkl\left(\mu \,\middle\|\, \frac{\mu+\nu}{2}\right) + \dkl\left(\nu \,\middle\|\, \frac{\mu+\nu}{2}\right)\right].
    \end{equation*}
\end{definition}

For simplicity's sake, we restrict ourselves to data on the $d$-dimensional unit cube $[0,1]^d$. Especially in the context of image generation, this is not unreasonable, as any compact range of color values can be rescaled to fit into $[0,1]$. Most arguments transfer directly to compact and convex sets $A\subseteq \R^d$ with nonempty interior. However, in that case, the explicit hypothesis spaces in Section~\ref{sec:infinite-dim-model}, particularly Theorem~\ref{thm:hypothesis-sufficiency}, require significant modification.

\begin{assumption}[general setting for GAL]\label{ass:general-setting} $ $\\
    Let $d\in \N$. We consider the \emph{true measure} $\mu\in \cM_1^+([0,1]^d)$ and the \emph{true data process} $(Y_i)_{i\in\N}$ with $Y_i\sim \mu$ for all $i\in\N$, the \emph{noise measure} $\lambda = \cU([0,1]^d)$ and the i.i.d. \emph{noise process} $(Z_i)_{i\in\N}$ with $Z_i\sim \lambda$ for all $i\in \N$. Generators are chosen from the \emph{hypothesis space of generators} $\cH^G \subseteq \{ \phi : [0,1]^d\to [0,1]^d \ \text{measurable}\}$ and for each $\phi\in \cH^G$, we define the push-forward measure $\mu_\phi = \phi_*\lambda$. Discriminators are chosen from the \emph{hypothesis space of discriminators} $\cH^D \subseteq \{ \xi : [0,1]^d\to (0,1) \ \text{measurable}\}$.
    The \emph{risk function} $\cL: \cH^G \times \cH^D \to \R$ is given by
    \begin{equation*}
        \cL(\phi,\xi) = \frac 12 \Bigl[\E_{Y\sim \mu}\bigl[\log \xi(Y)\bigr] + \E_{Z\sim \lambda}\bigl[\log\bigl(1-\xi(\phi(Z))\bigr)\bigr]\Bigr]
    \end{equation*}
    and the corresponding \emph{minimax problem} by
    \begin{equation}\label{eq:minimax-problem}
        \min_{\phi\in \cH^G} \max_{\xi\in \cH^D} \cL(\phi,\xi).
    \end{equation}
    The \emph{empirical risk function} $\hat\cL : \cH^G \times \cH^D \times \N \times \Omega \to \R$ is given by
    \begin{equation*}
        \hat\cL (\phi, \xi, n) = \frac{1}{2n} \sum_{i=1}^n \log \xi(Y_i) + \frac{1}{2n} \sum_{i=1}^n \log\bigl(1-\xi(\phi(Z_i))\bigr)
    \end{equation*}
    and for each $n\in\N$, the corresponding \emph{empirical minimax problem} is
    \begin{equation}\label{eq:empirical-minimax-problem}
        \min_{\phi\in \cH^G} \max_{\xi\in \cH^D} \hat\cL(\phi,\xi,n).
    \end{equation}
\end{assumption}

Note that for any $\phi\in \cH^G,\, \xi\in \cH^D$ and $n\in\N$, the empirical risk $\hat\cL(\phi,\xi,n)$ is a random variable. Its value depends on the values of the first $n$ samples $(Y_1,\dotsc,Y_n)$ and $(Z_1,\dotsc,Z_n)$.

As announced earlier, we want to establish a connection between the risk function and the Jensen-Shannon divergence. To this end, we require a further assumption on $\mu$, $\cH^G$ and $\cH^D$.

\begin{assumption}[existence of strictly positive densities]\label{ass:densities} $ $\\
    In the setting of Assumption~\ref{ass:general-setting}, we additionally require $\mu \ll \lambda$ with $f_\mu = \frac{d\mu}{d\lambda}$ strictly positive on $[0,1]^d$ and $\mu_\phi \ll \lambda$ with $f_\phi = \frac{d\mu_\phi}{d\lambda}$ strictly positive on $[0,1]^d$, for all $\phi\in \cH^G$.
\end{assumption}

Under this assumption, the best possible discriminator (among all measurable functions, not just $\cH^D$) for any given generator $\phi\in \cH^G$ is known, as is the maximal risk it attains. This result is originally due to \cite[Chapter~2]{biau2020}, though we state it here in the notation of \cite{asatryan2023}.

\begin{proposition}[optimal discriminators]\label{prop:optimal-discriminators} $ $\\
    Let Assumption~\ref{ass:densities} hold and let $\cD = \{ \xi: [0,1]^d \to (0,1) \ \text{measurable}\}$ be the space of all feasible discriminators. Then for $\phi\in\cH^G$, $\xi_\phi = \frac{f_\mu}{f_\mu+f_\phi} \in \cD$ is well-defined and is the $\lambda$-a.s. unique solution to $\max_{\xi \in \cD} \cL(\phi,\xi) = \djs(\mu,\mu_\phi) - \log(2)$.
\end{proposition}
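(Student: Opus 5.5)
The plan is to rewrite the risk as a single integral against $\lambda$ and maximize the integrand pointwise. By Assumption~\ref{ass:densities} we have $d\mu = f_\mu\,d\lambda$. By the change of variables for push-forward measures, $\E_{Z\sim\lambda}[\log(1-\xi(\phi(Z)))] = \int \log(1-\xi)\,d\mu_\phi = \int \log(1-\xi) f_\phi\,d\lambda$. Hence for every $\xi\in\cD$,
\begin{equation*}
    \cL(\phi,\xi) = \frac12\int_{[0,1]^d} \Bigl[f_\mu(x)\log\xi(x) + f_\phi(x)\log\bigl(1-\xi(x)\bigr)\Bigr]\,d\lambda(x).
\end{equation*}
Both summands of the integrand are nonpositive, so the integral is well defined in $[-\infty,0]$ and no integrability issues arise. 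Well-definedness of $\xi_\phi$ is immediate: $f_\mu,f_\phi>0$ everywhere, so $\xi_\phi$ is measurable with values in $(0,1)$, i.e.\ $\xi_\phi\in\cD$.

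Next comes the pointwise optimization. For fixed $a,b>0$, the function $g(t)=a\log t+b\log(1-t)$ on $(0,1)$ is strictly concave, since $g''(t) = -a/t^2-b/(1-t)^2<0$. Its unique maximizer is $t^*=a/(a+b)$. Applying this with $a=f_\mu(x)$ and $b=f_\phi(x)$ shows that the integrand at $\xi_\phi(x)$ dominates the integrand at $\xi(x)$ for every $x$, so $\cL(\phi,\xi)\le\cL(\phi,\xi_\phi)$. For uniqueness, suppose $\cL(\phi,\xi)=\cL(\phi,\xi_\phi)$. The difference of the integrands is then a nonnegative function with zero integral, provided $\cL(\phi,\xi_\phi)$ is finite, which the next step shows. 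So the difference vanishes $\lambda$-a.e., and strict uniqueness of the pointwise maximizer gives $\xi=\xi_\phi$ $\lambda$-a.s.

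Finally, I evaluate the maximum. Writing $m=\tfrac12(\mu+\mu_\phi)$, which has density $\tfrac12(f_\mu+f_\phi)$, we get
\begin{equation*}
    \log\xi_\phi=\log\frac{f_\mu}{\tfrac12(f_\mu+f_\phi)}-\log 2, \qquad \log(1-\xi_\phi)=\log\frac{f_\phi}{\tfrac12(f_\mu+f_\phi)}-\log 2.
\end{equation*}
Here $f_\mu/\tfrac12(f_\mu+f_\phi) = d\mu/dm$ and similarly for $\mu_\phi$. Substituting gives
\begin{equation*}
    \cL(\phi,\xi_\phi) = \tfrac12\bigl[\dkl(\mu\|m)+\dkl(\mu_\phi\|m)\bigr]-\tfrac12(\log 2+\log 2) = \djs(\mu,\mu_\phi)-\log 2.
\end{equation*}
This uses that $\mu,\mu_\phi\ll m$ and that $\int f_\mu\,d\lambda=\int f_\phi\,d\lambda=1$. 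Both KL terms are finite, because $d\mu/dm\le 2$ and $d\mu_\phi/dm\le 2$, which confirms the finiteness needed in the uniqueness step.

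The main obstacle is the measure-theoretic care in the uniqueness and equality steps rather than any computation. One must justify splitting the integral, which is fine since everything is nonpositive. One must also make sure that comparing integrals is legitimate when $\cL(\phi,\xi)$ may equal $-\infty$ for some $\xi$; in that case $\xi$ is trivially not a maximizer.
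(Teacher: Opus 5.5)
Your proof is correct and is essentially the standard argument of \cite{goodfellow2014,biau2020}: rewrite $\cL(\phi,\cdot)$ as a single $\lambda$-integral and maximize $t\mapsto a\log t+b\log(1-t)$ pointwise. The paper does not reprove this; it simply cites \cite[Chapter~2]{biau2020}, and your extra care (integrating in $[-\infty,0]$ so the splitting is legitimate, and using $d\mu/dm\le 2$, $d\mu_\phi/dm\le 2$ for $m=\tfrac12(\mu+\mu_\phi)$ to get a finite maximum for the $\lambda$-a.s.\ uniqueness step) supplies exactly the measure-theoretic details the citation leaves implicit.
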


If we had $\{\xi_\phi \mid \phi \in \cH^G\} \subseteq \cH^D$, the search for a generator $\phi$ that minimizes $\djs(\mu,\mu_\phi)$ would relate to the minimax problem \eqref{eq:minimax-problem} via $\argmin_{\phi\in \cH^G} \djs(\mu,\mu_\phi) = \argmin_{\phi\in\cH^G} \max_{\xi\in\cH^D} \cL(\phi,\xi)$.

The empirical minimax problem \eqref{eq:empirical-minimax-problem} is an approximation of \eqref{eq:minimax-problem}, so it stands to reason that solving the former also yields an approximate solution to the latter. The primary goal of both \cite{asatryan2023} and this thesis is the quantification of this approximation error and proving that -- under additional assumptions -- it converges to $0$ in the large sample limit $n\to\infty$. To this end, we first separate the risk function into two components.

\begin{definition}[separate risk functions] $ $\\
    We define the separate (empirical) risk functions 
    \begin{align*}
        \cL^\mu(\xi) &= \E_{Y\sim \mu}\bigl[\log \xi(Y)\bigr] & \cL^\lambda(\phi,\xi) &= \E_{Z\sim \lambda}\bigl[\log\bigl(1-\xi(\phi(Z))\bigr)\bigr]\\
        \hat \cL^\mu(\xi,n) &= \frac{1}{n}\sum_{i=1}^n \log \xi(Y_i) & \hat\cL^\lambda(\phi,\xi,n) &= \frac{1}{n} \sum_{i=1}^n \log\bigl(1-\xi(\phi(Z_i))\bigr)
    \end{align*}
    for data sampled from the true measure $\mu$, respectively the noise measure $\lambda$.
\end{definition}

Finally, we can decompose the aforementioned approximation error into several terms that are easier to control separately.

\begin{proposition}[error decomposition]\label{prop:error-decomposition} $ $\\
    Let Assumption~\ref{ass:densities} hold. For $n\in\N$ fixed, let $\hat\phi_n$ solve \eqref{eq:empirical-minimax-problem}. Then
    \begin{equation*}
        \djs(\mu,\mu_{\hat\phi_n}) \leq \emodelg + \emodeld + \egenmu(n) + \egenlambda(n),
    \end{equation*}
    where the \emph{model errors} are defined as
    \begin{align*}
        \emodelg &= \inf_{\phi\in \cH^G} \djs(\mu,\mu_\phi), &
        \emodeld &= \sup_{\phi\in \cH^G} \inf_{\xi\in \cH^D} \cL(\phi,\xi_\phi) - \cL(\phi,\xi),
    \intertext{and the \emph{generalization errors} are defined as}
        \egenmu(n) &= \sup_{\xi\in \cH^D} \Abs{ \hat\cL^\mu(\xi,n) - \cL^\mu(\xi) }, &
        \egenlambda(n) &= \sup_{\substack{\phi\in \cH^G \\ \xi\in \cH^D}} \Abs{ \hat\cL^\lambda(\phi,\xi,n) - \cL^\lambda(\phi,\xi) }.
    \end{align*}
\end{proposition}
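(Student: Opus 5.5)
The plan is a standard ERM chain that passes from the Jensen-Shannon divergence to the population risk and then to the empirical risk, paying the model and generalization errors at each transfer. First observe the two basic facts we will use repeatedly. By the definitions of the separate risks,
\[
\cL(\phi,\xi) = \tfrac12\bigl[\cL^\mu(\xi)+\cL^\lambda(\phi,\xi)\bigr] \quad\text{and}\quad \hat\cL(\phi,\xi,n) = \tfrac12\bigl[\hat\cL^\mu(\xi,n)+\hat\cL^\lambda(\phi,\xi,n)\bigr].
\]
Hence, uniformly over $\phi\in\cH^G$ and $\xi\in\cH^D$,
\[
\Abs{\hat\cL(\phi,\xi,n)-\cL(\phi,\xi)} \le \tfrac12\bigl[\egenmu(n)+\egenlambda(n)\bigr].
\]
Write $\varepsilon := \tfrac12[\egenmu(n)+\egenlambda(n)]$.

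\textbf{Step 1: from the divergence to the population risk.} By Proposition~\ref{prop:optimal-discriminators},
\[
\djs(\mu,\mu_{\hat\phi_n}) = \cL(\hat\phi_n,\xi_{\hat\phi_n}) + \log 2.
\]
By definition of $\emodeld$, for every $\xi\in\cH^D$ we have
\[
\cL(\hat\phi_n,\xi_{\hat\phi_n}) \le \cL(\hat\phi_n,\xi) + \bigl(\cL(\hat\phi_n,\xi_{\hat\phi_n})-\cL(\hat\phi_n,\xi)\bigr).
\]
Taking the infimum over $\xi$ of the bracket gives
\[
\cL(\hat\phi_n,\xi_{\hat\phi_n}) \le \sup_{\xi\in\cH^D}\cL(\hat\phi_n,\xi) + \emodeld.
\]
More carefully: for any $\eta>0$ pick $\xi$ with bracket $\le \emodeld+\eta$; then $\cL(\hat\phi_n,\xi)\le\sup_{\xi'}\cL(\hat\phi_n,\xi')$, and we let $\eta\to0$.

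\textbf{Step 2: switch to the empirical risk and use optimality of $\hat\phi_n$.} The uniform bound gives
\[
\sup_{\xi}\cL(\hat\phi_n,\xi)\le \sup_\xi \hat\cL(\hat\phi_n,\xi,n)+\varepsilon.
\]
Since $\hat\phi_n$ solves \eqref{eq:empirical-minimax-problem}, for any $\phi\in\cH^G$
\[
\sup_\xi \hat\cL(\hat\phi_n,\xi,n) \le \sup_\xi\hat\cL(\phi,\xi,n) \le \sup_\xi \cL(\phi,\xi)+\varepsilon.
\]

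\textbf{Step 3: back to the divergence.} Since $\cH^D\subseteq\cD$, Proposition~\ref{prop:optimal-discriminators} yields
\[
\sup_{\xi\in\cH^D}\cL(\phi,\xi)\le \max_{\xi\in\cD}\cL(\phi,\xi) = \djs(\mu,\mu_\phi)-\log 2.
\]
Chaining Steps 1--3,
\[
\djs(\mu,\mu_{\hat\phi_n}) \le \djs(\mu,\mu_\phi) + \emodeld + 2\varepsilon.
\]
Taking the infimum over $\phi\in\cH^G$ yields $\emodelg$, and $2\varepsilon=\egenmu(n)+\egenlambda(n)$, which is exactly the claim.

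\textbf{Main obstacle.} The only delicate points are bookkeeping. First, the suprema and infima need not be attained, which is handled by the $\eta$-approximation above; we also treat ``$\hat\phi_n$ solves'' as attaining the outer minimum of the sup. Second, the chain requires the values involved to be finite, or the inequalities to be interpreted in $[-\infty,\infty]$. If some generalization error is infinite, the bound is trivial. Otherwise $\emodeld\ge0$ because $\xi_\phi$ is a maximizer over $\cD\supseteq\cH^D$. The factor bookkeeping, where two uses of the uniform bound at $\tfrac12$ each produce the full sum, is the point to check most carefully.
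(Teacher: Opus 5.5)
Your proposal is correct and follows the paper's proof essentially step for step. It uses the same chain: from $\cL(\hat\phi_n,\xi_{\hat\phi_n})$ to $\sup_{\xi}\cL(\hat\phi_n,\xi)+\emodeld$, then to the empirical risk, then optimality of $\hat\phi_n$, then back to $\djs(\mu,\mu_\phi)-\log 2$, and finally the infimum over $\phi$. The only difference is cosmetic: you use $\tfrac12[\egenmu(n)+\egenlambda(n)]$ directly as the uniform bound on $|\hat\cL-\cL|$, whereas the paper works with $\egen(n)=\sup_{\phi,\xi}|\hat\cL(\phi,\xi,n)-\cL(\phi,\xi)|$ and only at the end bounds $2\egen(n)\le\egenmu(n)+\egenlambda(n)$.
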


\begin{proof}
    This proof is adapted from \cite[Theorem~2.3]{asatryan2023}. Let
    \begin{equation*}
        \egen(n) = \sup_{\substack{\phi\in \cH^G \\ \xi\in \cH^D}} \Abs{\hat\cL(\phi,\xi,n) - \cL(\phi,\xi)}.
    \end{equation*}
    
    Let $\phi^*\in \cH^G$ and $\xi^*\in \cH^D$ be arbitrary and fixed. Then we have
    \begin{equation*}
        \cL(\hat  \phi_n, \xi_{\hat\phi_n}) = \cL(\hat\phi_n,\xi^*) + \cL(\hat  \phi_n, \xi_{\hat\phi_n}) - \cL(\hat\phi_n,\xi^*) \leq \sup_{\xi\in\cH^D} \cL(\hat\phi_n,\xi) + \cL(\hat  \phi_n, \xi_{\hat\phi_n}) - \cL(\hat\phi_n,\xi^*).
    \end{equation*}
    Since $\xi^*\in \cH^D$ was arbitrary, taking the infimum over $\xi^*\in \cH^D$ on the right yields
    \begin{align*}
        \cL(\hat  \phi_n, \xi_{\hat\phi_n}) &\leq \sup_{\xi\in\cH^D} \cL(\hat\phi_n,\xi) + \emodeld\\
        &\leq \sup_{\xi\in \cH^D} \hat\cL(\hat\phi_n,\xi,n) + \emodeld + \egen(n)\\
        &= \min_{\phi\in \cH^D} \sup_{\xi\in \cH^D} \hat\cL(\phi,\xi,n) + \emodeld + \egen(n)\\
        &\leq \sup_{\xi\in \cH^D} \hat\cL(\phi^*,\xi,n) + \emodeld + \egen(n)\\
        &\leq \sup_{\xi\in \cH^D} \cL(\phi^*,\xi) + \emodeld + 2\egen(n) \\
        &\leq \cL(\phi^*,\xi_{\phi^*}) + \emodeld + 2\egen(n).
    \end{align*}
    By Proposition~\ref{prop:optimal-discriminators}, adding $\log(2)$ on both sides yields
    \begin{equation*}
        \djs(\mu,\mu_{\hat\phi_n}) \leq \djs(\mu,\mu_{\phi^*}) + \emodeld + 2\egen(n).
    \end{equation*}
    Since $\phi^*\in \cH^G$ was arbitrary, taking the infimum over $\phi^*\in \cH^G$ on the right yields
    \begin{equation*}
        \djs(\mu,\mu_{\hat\phi_n}) \leq \emodelg + \emodeld + 2\egen(n).
    \end{equation*}
    Finally, the triangle inequality immediately yields $2\egen(n) \leq \egenmu(n) + \egenlambda(n)$.
\end{proof}

While the model errors are deterministic, the generalization errors are random variables, since they depend on the empirical risk functions $\hat \cL^\mu$ respectively $\hat\cL^\lambda$, which in turn depend on the first $n$ samples $(Y_1,\dotsc,Y_n)$ respectively $(Z_1,\dotsc,Z_n)$. Note also that the generalization errors are the suprema of continuous stochastic processes over separable sets and thus measurable.

Towards proving that $\djs(\mu,\mu_{\hat\phi_n}) \to 0$ as $n\to \infty$, we must first eliminate the model errors. To this end, in Section~\ref{sec:infinite-dim-model}, we provide stronger assumptions on the true measure $\mu$ and an explicit choice of hypothesis spaces $\cH^G$ and $\cH^D$ that satisfy Assumption~\ref{ass:densities} and yield no model error.

Afterwards, in Section~\ref{sec:rates}, we prove that the two generalization errors also converge to zero almost surely with an $\cO(n^{-\frac 12})$ (up to a polylogarithm) rate, under an additional assumption that the training data is sampled from a suitable dynamical system, which we detail in Section~\ref{sec:young-towers}.

\section{An Infinite-Dimensional Model of Generative Adversarial Learning}\label{sec:infinite-dim-model}

We consider once again the setting of generative adversarial learning outlined in Assumption~\ref{ass:general-setting}. Throughout this section, the dimension $d\in \N$ as well as the regularity $k\in \N$, $k\geq 2$ and the Hölder coefficient $\alpha\in (0,1]$ are considered fixed.

Our goal is to give assumptions on $\mu$ and a corresponding explicit choice of $\cH^G$ and $\cH^D$ such that the empirical risk minimizer $\hat\phi_n$ always exists, the error decomposition of Proposition~\ref{prop:error-decomposition} is applicable, and we have $\emodelg=\emodeld=0$.

We begin with a regularity assumption on the density $f_\mu$ of $\mu$. However, unlike Assumption~\ref{ass:densities}, we do not yet make any assumptions about the existence of generated densities $f_\phi$.

\begin{assumption}[regularity of the true measure]\label{ass:true-measure-hoelder} $ $\\
    In the setting of Assumption~\ref{ass:general-setting}, we additionally require $\mu \ll \lambda$ with $f_\mu = \frac{d\mu}{d\lambda} \in C^{k,\alpha}([0,1]^d,\R)$, where $k\geq 2$ and $\alpha\in (0,1]$, and that $f_\mu$ be bounded away from zero by $\kappa = \min_{y\in [0,1]^d} f_\mu(y) > 0$.
\end{assumption}

In the following, we present a class of Hölder differentiable hypothesis spaces $\cH^G$ and $\cH^D$ from \cite{asatryan2023}, which meet the criteria outlined above if the true measure fulfills Assumption~\ref{ass:true-measure-hoelder}, starting with the generators. The definition of the spaces $C^{k,\alpha}$ is recapped in Appendix~\ref{ap:hoelder}.

\begin{definition}[hypothesis space of generators] $ $\\
    Let $K,\hat K > 0$. We define $\chg$ as the set of all $\phi: [0,1]^d \to [0,1]^d$ where
    \begin{itemize}
        \item $\phi\in C^{k,\alpha}([0,1]^d,\R^d)$ with $\norm{\phi}_{C^{k,\alpha}} \leq K$.
        \item $\phi$ is bijective and $\phi^{-1}\in C^{k,\alpha}([0,1]^d,\R^d)$ with $\norm{\phi^{-1}}_{C^{k,\alpha}} \leq \hat K$.
    \end{itemize}
\end{definition}

The hypothesis space of discriminators is chosen to match the hypothesis space of generators, in order to obtain optimal discriminators for all generators.

\begin{definition}[hypothesis space of discriminators] $ $\\
    Let $B\in (0,\tfrac 12)$ and $C_1,C_2>0$. We define $\chd$ as the set of all $\xi\in [0,1]^d \to (0,1)$ where
    \begin{itemize}
        \item $\xi$ is bounded away from zero and one by $B\leq \xi \leq 1-B$.
        \item The Fréchet derivative of $\xi$ satisfies $\norm{D\xi}_\infty \leq C_1$.
        \item $\xi\in C^{k-1,\alpha}([0,1]^d,\R)$ with $\norm{\xi}_{C^{k-1,\alpha}}\leq C_2$.
    \end{itemize}
\end{definition}

Assumption~\ref{ass:true-measure-hoelder} already implies that $\mu$ has a strictly positive Lebesgue density. The next results, which are proven in \cite[Chapter~3]{asatryan2023}, show that each generated measure $\mu_\phi$ does as well, and that the model errors vanish in Proposition~\ref{prop:error-decomposition}.

\begin{proposition}[existence of strictly positive densities for $\cH^G$]\label{prop:generator-densities} $ $\\
    Let $K,\hat K > 0$. For each $\phi\in \chg$, we have $\mu_\phi \ll \lambda$ and its Lebesgue density is given by $f_\phi(y) = \abs{D\phi^{-1}(y)} \in C^{k-1,\alpha}([0,1]^d,\R)$. Consequently, Assumption~\ref{ass:densities} is fulfilled.
\end{proposition}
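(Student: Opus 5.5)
The plan is to apply the change-of-variables formula to the diffeomorphism $\phi^{-1}$ and then read off regularity and positivity of the resulting density; here $\abs{D\phi^{-1}(y)}$ denotes $\abs{\det D\phi^{-1}(y)}$.

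First, since $\phi\in\chg$ is bijective with $\phi,\phi^{-1}\in C^{k,\alpha}$ and $k\geq 2$, $\phi$ is a $C^1$-diffeomorphism of $[0,1]^d$ onto itself. For a Borel set $A\subseteq[0,1]^d$ we have $\mu_\phi(A)=\lambda(\phi^{-1}(A))$. Writing $\phi^{-1}(A)=\int \mathbb 1_{\phi^{-1}(A)}(z)\,dz$ and substituting $z=\phi^{-1}(y)$ gives
\begin{equation*}
\mu_\phi(A)=\int_A \abs{\det D\phi^{-1}(y)}\,dy .
\end{equation*}
This needs the change-of-variables theorem on the closed cube. I would apply it on the open interior $(0,1)^d$: the homeomorphism $\phi$ maps boundary to boundary by invariance of domain, so it maps the interior onto the interior. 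The boundary has Lebesgue measure zero, and its image under the Lipschitz map $\phi^{-1}$ is also null. This yields $\mu_\phi\ll\lambda$ with $f_\phi=\abs{\det D\phi^{-1}}$.

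Second, regularity. The entries of $D\phi^{-1}$ lie in $C^{k-1,\alpha}$, and $\det$ is a polynomial in them. Hölder spaces $C^{k-1,\alpha}$ on the compact convex cube form an algebra, by the product rule together with $|fg(x)-fg(y)|\le \|f\|_\infty|g(x)-g(y)|+\|g\|_\infty|f(x)-f(y)|$. Hence $\det D\phi^{-1}\in C^{k-1,\alpha}$.

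Third, strict positivity, which I expect to be the only delicate point. Differentiating $\phi\circ\phi^{-1}=\mathrm{id}$ gives $D\phi(\phi^{-1}(y))\,D\phi^{-1}(y)=I$, so $\det D\phi^{-1}(y)\neq 0$ everywhere. By continuity on the connected cube, the determinant has constant sign. Since it is nonvanishing and continuous on a compact set, $\abs{\det D\phi^{-1}}$ has a positive minimum.

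Absolute value preserves $C^{k-1,\alpha}$ regularity here because the determinant never crosses zero: $\abs{\det D\phi^{-1}}=\pm\det D\phi^{-1}$ with a fixed sign. A quantitative lower bound is also available. From $\abs{\det D\phi}\le C(d)K^d$ on the cube, we get $f_\phi\ge (C(d)K^d)^{-1}$. Combined with $f_\mu>0$ from Assumption~\ref{ass:true-measure-hoelder}, this verifies Assumption~\ref{ass:densities}.
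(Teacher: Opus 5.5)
Your proof is correct and follows the standard route the paper relies on; the paper gives no proof of its own and defers to \cite[Chapter~3]{asatryan2023}. That route is: change of variables for $\phi^{-1}$ to get $f_\phi=|\det D\phi^{-1}|$, the algebra property of $C^{k-1,\alpha}$ for regularity, and $\det D\phi^{-1}(y)=1/\det D\phi(\phi^{-1}(y))$ for strict positivity, which your bound $f_\phi\geq (C(d)K^d)^{-1}$ even makes uniform over the hypothesis space. Two small points deserve to be made explicit: the product estimate for top-order derivatives also uses that lower-order derivatives are Lipschitz, hence $\alpha$-Hölder, on the bounded convex cube; and your first identity should read $\lambda(\phi^{-1}(A))=\int \mathbb{1}_{\phi^{-1}(A)}(z)\,dz$.
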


\begin{theorem}[sufficiency of $\cH^G$ and $\cH^D$]\label{thm:hypothesis-sufficiency} $ $\\
    Let Assumption~\ref{ass:true-measure-hoelder} hold. There exists a function $\phi_\mu \in C^{k,\alpha}([0,1]^d,\R^d)$ with $(\phi_\mu)_*\lambda = \mu$ and if $K,\hat K$ are sufficiently large, we have $\phi_\mu\in \chg$.

    If, additionally, $C_1, C_2$ are sufficiently large and $B$ is sufficiently small, then the optimal discriminators satisfy $\xi_{\phi,\phi'} = \frac{f_{\phi}}{f_{\phi}+f_{\phi'}} \in \chd$ for all $\phi,\phi'\in\chg$. In particular, we have $\xi_\phi = \frac{f_\mu}{f_\mu + f_\phi} \in \chd$ for all $\phi\in \chg$. 
    
    Altogether, given appropriate choice of $K,\hat K, B, C_1,C_2$, we have $\emodelg = \emodeld = 0$.
\end{theorem}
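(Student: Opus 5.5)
We prove the three claims of Theorem~\ref{thm:hypothesis-sufficiency} in order: construct $\phi_\mu$, check that the optimal discriminators lie in $\chd$, and then read off the vanishing model errors.

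\textbf{Construction of $\phi_\mu$.} We use the Knothe--Rosenblatt rearrangement, a triangular transport map. Define the marginal densities
\begin{equation*}
f_j(y_1,\dotsc,y_j)=\int_{[0,1]^{d-j}} f_\mu(y_1,\dotsc,y_d)\,dy_{j+1}\cdots dy_d ,
\end{equation*}
and the conditional distribution functions
\begin{equation*}
F_j(y_j\mid y_1,\dotsc,y_{j-1})=\frac{\int_0^{y_j} f_j(y_1,\dotsc,y_{j-1},t)\,dt}{f_{j-1}(y_1,\dotsc,y_{j-1})}.
\end{equation*}
The map $T=(F_1,\dotsc,F_d)$ pushes $\mu$ to $\lambda$. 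We set $\phi_\mu=T^{-1}$, computed coordinate by coordinate.

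\textbf{Regularity of $T$ and $\phi_\mu$.} Each $f_j$ inherits $C^{k,\alpha}$ regularity from $f_\mu$ by differentiating under the integral. Each $f_j$ is also bounded below by $\kappa$. Hence the quotients and integrals defining $T$ are $C^{k,\alpha}$; the integral in $y_j$ even gains one derivative in that variable. The Jacobian of $T$ is triangular with diagonal entries $f_j/f_{j-1}\ge \kappa/\norm{f_\mu}_\infty>0$, so $T$ is a bijection of $[0,1]^d$. The inverse function theorem together with the Hölder chain and quotient rules then gives $\phi_\mu\in C^{k,\alpha}$. For $K,\hat K$ at least $\norm{\phi_\mu}_{C^{k,\alpha}}$ and $\norm{T}_{C^{k,\alpha}}$ respectively, we get $\phi_\mu\in\chg$.

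\textbf{Optimal discriminators.} By Proposition~\ref{prop:generator-densities}, $f_\phi=\abs{\det D\phi^{-1}}\in C^{k-1,\alpha}$. We need uniform bounds over $\chg$, in terms of $K,\hat K$ only.
\begin{itemize}
\item The $C^{k-1,\alpha}$ norm of $f_\phi$ is controlled by a polynomial in $\hat K$, since the determinant is a polynomial in the entries of $D\phi^{-1}$.
\item The lower bound is uniform: $\det D\phi^{-1}(y)=1/\det D\phi(\phi^{-1}(y))$ and $\abs{\det D\phi}\le c_d K^d$, so $f_\phi\ge c_d^{-1}K^{-d}=:m>0$.
\item The upper bound is $f_\phi\le M:=c_d\hat K^d$.
\end{itemize}
Consequently $\xi_{\phi,\phi'}=f_\phi/(f_\phi+f_{\phi'})$ takes values in $[m/(m+M),M/(m+M)]$, which gives $B\le m/(m+M)$.

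The map $(a,b)\mapsto a/(a+b)$ is smooth on $[m,M]^2$. Composing it with $C^{k-1,\alpha}$ functions therefore yields a $C^{k-1,\alpha}$ function, with norm bounded in terms of $m$, $M$, $\hat K$. This determines $C_2$. The gradient bound $\norm{D\xi}_\infty\le C_1$ follows since $k-1\ge1$. Because $\phi_\mu\in\chg$ gives $f_\mu=f_{\phi_\mu}$, the case $\xi_\phi=\xi_{\phi_\mu,\phi}$ is covered.

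\textbf{Vanishing model errors.} Since $\mu_{\phi_\mu}=\mu$, we have $\emodelg\le\djs(\mu,\mu)=0$. Since $\xi_\phi\in\chd$, choosing $\xi=\xi_\phi$ in the infimum gives $\emodeld\le0$. Proposition~\ref{prop:optimal-discriminators} shows $\cL(\phi,\xi_\phi)\ge\cL(\phi,\xi)$, so $\emodeld\ge0$.

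\textbf{Main obstacle.} The hardest step is the Hölder regularity of the Knothe--Rosenblatt map up to the boundary of the cube. The concern is that the conditional distribution functions keep full $C^{k,\alpha}$ regularity jointly in all variables, not just in $y_j$. The inverse map also requires a Hölder inverse function theorem on a closed domain.

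I would handle this with three tools:
\begin{itemize}
\item explicit differentiation under the integral sign, which the compactness of the cube permits;
\item the standard algebra property of $C^{k,\alpha}$ together with composition estimates;
\item a coordinatewise inversion of the triangular structure, where each step solves a scalar equation with derivative bounded below, so the one-dimensional implicit function theorem suffices.
\end{itemize}
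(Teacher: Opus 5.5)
Your proposal is correct and follows essentially the same route as the argument the paper relies on. The paper gives no proof of its own and defers to \cite[Chapter~3]{asatryan2023}, where $\phi_\mu$ is the inverse of the (Knothe--)Rosenblatt transformation (the construction the paper's conclusion alludes to), and where the discriminator claim comes from uniform upper and lower bounds and $C^{k-1,\alpha}$ control of $f_\phi=|\det D\phi^{-1}|$ in terms of $K,\hat K$, just as in your proposal; the details you leave implicit are routine, namely that $\det D\phi^{-1}$ has constant sign on the connected cube so the absolute value costs no regularity, and that $\djs\geq 0$ gives $\emodelg\geq 0$.
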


We formalize this model as another assumption.

\begin{assumption}[Hölder model]\label{ass:model-hoelder}$ $\\
    Let Assumption~\ref{ass:true-measure-hoelder} hold. We additionally require that $\cH^G = \chg$ and $\cH^D = \chd$ with the constants $K,\hat K, B,C_1,C_2$ chosen as in Theorem~\ref{thm:hypothesis-sufficiency}.
\end{assumption}

It remains to prove that the empirical risk minimizer $\hat\phi_n$ actually exists. For this, we require the compactness of the hypothesis spaces and continuity of the target functions. Since the Hölder spaces are infinite-dimensional, we cannot expect compactness in their native topologies. However, the embedding into a slightly weaker $C^{k,\alpha'}$-topology, for any fixed $\alpha' \in (0,\alpha)$, is proven to be compact in \cite[Chapter~3]{asatryan2023}.

The following result proves not just continuity but even Lipschitz continuity of the (empirical) risk functions, and will be important in Section~\ref{sec:rates} as well. 

\begin{proposition}[Lipschitz continuity of the target functions]\label{prop:target-lipschitz-continuity}$ $\\
    Let Assumption~\ref{ass:model-hoelder} hold. For $n\in\N$, $\phi,\phi' \in \chg$ and $\xi,\xi'\in \chd$ we have
    \begin{align*}
        \Abs{\cL^\mu(\xi) - \cL^\mu(\xi')} &\leq \frac{1}{B} \norm{\xi-\xi'}_\infty, \\
        \Abs{\hat \cL^\mu(\xi,n)-\hat \cL^\mu(\xi',n)} &\leq \frac{1}{B} \norm{\xi-\xi'}_\infty, \\
        \Abs{\cL^\lambda(\phi,\xi) - \cL^\lambda(\phi',\xi')} &\leq \frac{1}{B} \bigl(\norm{\xi-\xi'}_\infty + C_1 \norm{\phi-\phi'}_\infty\bigr), \\
        \Abs{\hat \cL^\lambda(\phi,\xi,n)-\hat \cL^\lambda(\phi',\xi',n)} &\leq \frac{1}{B} \bigl(\norm{\xi-\xi'}_\infty + C_1 \norm{\phi-\phi'}_\infty\bigr).
    \end{align*}
    The same holds when $\norm{\cdot}_\infty$ is replaced by any $C^{\widetilde k}$ or $C^{\widetilde k,\widetilde \alpha}$-norm, $\widetilde k\in \N_0$, $\widetilde\alpha \in (0,1]$.
\end{proposition}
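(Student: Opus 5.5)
The plan is to reduce everything to one elementary fact: the logarithm is Lipschitz on intervals bounded away from zero. Every discriminator $\xi\in\chd$ satisfies $B\le \xi\le 1-B$, so both $\xi(y)$ and $1-\xi(y)$ lie in $[B,1-B]\subseteq[B,1]$. On $[B,1]$ the mean value theorem gives $\abs{\log s-\log t}\le \frac1B\abs{s-t}$. This single inequality, applied pointwise and then averaged, should yield all four bounds.

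\textbf{The $\mu$-part.} For fixed $y$ we have $\abs{\log\xi(y)-\log\xi'(y)}\le \frac1B\abs{\xi(y)-\xi'(y)}\le\frac1B\norm{\xi-\xi'}_\infty$. Taking the expectation over $Y\sim\mu$ gives the bound for $\cL^\mu$. Taking the empirical mean over $Y_1,\dots,Y_n$ gives the bound for $\hat\cL^\mu$ for every $\omega$, since the pointwise bound is uniform in $y$. Measurability and integrability are not an issue, because all integrands are bounded by $\abs{\log B}$.

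\textbf{The $\lambda$-part.} Fix $z$ and insert the mixed term $\xi(\phi'(z))$:
\begin{equation*}
\Abs{\log(1-\xi(\phi(z)))-\log(1-\xi'(\phi'(z)))}\le \frac1B\Bigl(\abs{\xi(\phi(z))-\xi(\phi'(z))}+\abs{\xi(\phi'(z))-\xi'(\phi'(z))}\Bigr).
\end{equation*}
The second difference is at most $\norm{\xi-\xi'}_\infty$. For the first, $[0,1]^d$ is convex, so the segment between $\phi(z)$ and $\phi'(z)$ stays in the domain of $\xi$. The mean value inequality together with $\norm{D\xi}_\infty\le C_1$ then gives $\abs{\xi(\phi(z))-\xi(\phi'(z))}\le C_1\abs{\phi(z)-\phi'(z)}\le C_1\norm{\phi-\phi'}_\infty$, where the norm on $\R^d$ is the one matching the operator norm used for $D\xi$. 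Averaging over $Z\sim\lambda$, or over $Z_1,\dots,Z_n$, yields the third and fourth inequalities.

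\textbf{Other norms.} Each $C^{\widetilde k}$- and $C^{\widetilde k,\widetilde\alpha}$-norm contains the sup-norm of the function as a summand, so it dominates $\norm{\cdot}_\infty$. The right-hand sides can therefore only increase, and the bounds remain valid.

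I expect the only point needing care to be the chain-rule and mean-value step in the $\lambda$-part. It relies on convexity of the domain and on the compatibility between the vector norm on $\R^d$ and the operator norm defining $\norm{D\xi}_\infty$. The rest is routine.
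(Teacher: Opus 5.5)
Your proposal is correct and follows the same route as the paper, which only says the proof is analogous to \cite[Lemma~4.1]{asatryan2023}; that argument rests on the same two facts you use. First, $\log$ is $\frac1B$-Lipschitz on $[B,1-B]$, an interval containing the values of both $\xi$ and $1-\xi$; second, each discriminator is $C_1$-Lipschitz by the mean value theorem on the convex cube, applied after inserting the mixed term $\xi(\phi'(z))$ and followed by averaging, with the extension to $C^{\widetilde k}$- and $C^{\widetilde k,\widetilde\alpha}$-norms holding because these norms dominate $\norm{\cdot}_\infty$.
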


\begin{proof}
    The proof is analogous to \cite[Lemma~4.1]{asatryan2023}.
\end{proof}

Using this result, we can prove even more than just the existence of empirical risk minimizers.

\begin{theorem}[existence of solutions for \eqref{eq:minimax-problem} and \eqref{eq:empirical-minimax-problem}]\label{thm:erm-existence} $ $\\
    Let Assumption~\ref{ass:model-hoelder} hold. Then for all $n\in \N$, the minima and maxima in the minimax problem \eqref{eq:minimax-problem} and the empirical minimax problem \eqref{eq:empirical-minimax-problem} are attained. In particular, the empirical risk minimizer $\hat\phi_n$ from Proposition~\ref{prop:error-decomposition} always exists.
\end{theorem}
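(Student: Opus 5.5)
The plan is to apply the extreme value theorem (a continuous real function on a compact set attains its extrema) in a topology where both hypothesis spaces are compact. Fix $\alpha'\in(0,\alpha)$. The first step is to show that $\chg$ and $\chd$ are compact in the $C^{k,\alpha'}$- and $C^{k-1,\alpha'}$-topologies respectively, using the compact embedding from \cite[Chapter~3]{asatryan2023}.

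For this it suffices to show that each space is closed in the weaker topology, since it is bounded in the stronger one. By Arzelà--Ascoli, a sequence $\phi_m\in\chg$ has a subsequence converging in $C^{k,\alpha'}$ to some $\phi$. Because the Hölder seminorm bounds pass to pointwise limits, we get $\norm{\phi}_{C^{k,\alpha}}\le K$. The inverses $\phi_m^{-1}$ are likewise precompact, so after a further subsequence $\phi_m^{-1}\to\psi$ with $\norm{\psi}_{C^{k,\alpha}}\le\hat K$. Passing to the limit in $\phi_m\circ\phi_m^{-1}=\mathrm{id}=\phi_m^{-1}\circ\phi_m$ (using uniform convergence and equicontinuity) gives $\phi\circ\psi=\psi\circ\phi=\mathrm{id}$. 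Hence $\phi$ is bijective with $\phi^{-1}=\psi$, so $\phi\in\chg$. The argument for $\chd$ is the same: the bounds $B\le\xi\le1-B$, $\norm{D\xi}_\infty\le C_1$ (using $k-1\ge1$) and the $C^{k-1,\alpha}$-norm bound are all preserved under uniform convergence of derivatives up to order $k-1$.

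The second step is continuity. By Proposition~\ref{prop:target-lipschitz-continuity}, $\cL$ and, for every fixed $n$ and $\omega$, $\hat\cL(\cdot,\cdot,n)$ are Lipschitz on $\chg\times\chd$ with respect to $\norm{\cdot}_\infty$. They are therefore continuous in the product of the weaker Hölder topologies, which are finer than the sup-norm topology. Hence for each $\phi$ the map $\xi\mapsto\cL(\phi,\xi)$ attains its maximum on the compact set $\chd$, and likewise for $\hat\cL$.

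The third step is to show that $g(\phi)=\max_{\xi\in\chd}\cL(\phi,\xi)$ is continuous, so that its minimum over the compact $\chg$ is attained. This follows from the uniform Lipschitz bound: for all $\xi$,
\[
\abs{\cL(\phi,\xi)-\cL(\phi',\xi)}\le \tfrac{C_1}{B}\norm{\phi-\phi'}_\infty ,
\]
and taking the supremum over $\xi$ gives $\abs{g(\phi)-g(\phi')}\le\frac{C_1}{B}\norm{\phi-\phi'}_\infty$. The empirical version is handled identically, pathwise for each $\omega$ and each $n$, and its minimizer is $\hat\phi_n$.

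The main obstacle is the closedness of $\chg$ in the first step, specifically ensuring that bijectivity and the bound on the inverse survive the limit. I would handle this through the simultaneous precompactness of $(\phi_m^{-1})$ as described, rather than trying to invert the limit map directly.
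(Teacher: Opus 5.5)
Your proposal is correct and follows the same route as the paper: the hypothesis spaces are compact in the weaker $C^{k,\alpha'}$ and $C^{k-1,\alpha'}$ topologies, and Proposition~\ref{prop:target-lipschitz-continuity} makes the (empirical) risk Lipschitz continuous. The differences are only in detail. The paper cites compactness from \cite{asatryan2023}, whereas you prove closedness yourself, including that bijectivity survives the limit via the inverses. You also make explicit the uniform-in-$\xi$ Lipschitz bound showing that $\phi\mapsto\max_{\xi}\cL(\phi,\xi)$ is continuous, which the paper leaves implicit.
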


\begin{proof}
    Let $0<\alpha'<\alpha$, then by Proposition~\ref{prop:target-lipschitz-continuity} we have
    \begin{align*}
        \Abs{\cL(\phi,\xi) - \cL(\phi',\xi')} &\leq \frac{1}{B} \norm{\xi-\xi'}_{C^{k-1,\alpha'}} + \frac{C_1}{2B} \norm{\phi-\phi'}_{C^{k,\alpha'}}, \\
        \Abs{\hat \cL(\phi,\xi,n)-\hat \cL(\phi',\xi',n)} &\leq \frac{1}{B} \norm{\xi-\xi'}_{C^{k-1,\alpha'}} + \frac{C_1}{2B} \norm{\phi-\phi'}_{C^{k,\alpha'}}.
    \end{align*}
    By \cite[Proposition~3.6]{asatryan2023} and \cite[Proposition~3.11]{asatryan2023}, $\chg$ and $\chd$ are compact in the $C^{k,\alpha'}$- and $C^{k-1,\alpha'}$-topologies respectively. The assertion follows because all minima and maxima are attained for a minimax problem with Lipschitz continuous target function.
\end{proof}

Since Assumption~\ref{ass:densities} is fulfilled by Proposition~\ref{prop:generator-densities}, and the existence of an empirical risk minimizer is guaranteed by Theorem~\ref{thm:erm-existence}, we have $\djs(\mu,\mu_{\hat\phi_n}) \leq \egenmu(n) + \egenlambda(n)$ by Proposition~\ref{prop:error-decomposition} and Theorem~\ref{thm:hypothesis-sufficiency}. It remains to prove that these generalization errors converge to zero with suitable rates. In order to do so for $\egenmu(n)$, we finally need to specify our assumptions on the true data process $(Y_i)_{i\in\N}$.

\section{A Chaotic Deterministic Data Model}\label{sec:young-towers}

\subsection{Ergodic Dynamical Systems}

The motivation for this paper is learning the invariant distribution of a chaotic dynamical system from a single time series, as in \cite{drygala2022}. This requires said dynamical system to satisfy conditions that can substitute for the law of large numbers and related results of the classical i.i.d. setting. A simple class of such systems turns out to be that of ergodic dynamics. The definitions below follow \cite[Chapters~1 \& 5]{eisner2015}.

\begin{definition}[dynamical system; measure-preserving system] $ $\\
    A \emph{dynamical system} $(M,\cA,T)$ consists of a measurable space $(M,\cA)$ together with a measurable map $T:M\to M$ called the \emph{dynamics}.
    
    A \emph{measure-preserving system} $(M,\cA,\mu,T)$ is a dynamical system $(M,\cA,T)$ endowed with a probability measure $\mu\in \cM_1^+(M,\cA)$ such that $\mu = \mu\circ T^{-1}$. We call $\mu$ an \emph{invariant measure} for $(M,\cA,T)$.
\end{definition}

\begin{definition}[invariant $\sigma$-algebra; ergodicity] $ $\\
    Let $(M,\cA,\mu,T)$ be a measure-preserving system. The \emph{invariant $\sigma$-algebra} is
    \begin{equation*}
        \cI_T = \{ A\in \cA \mid A = T^{-1}(A)\}.
    \end{equation*}
    We call the system \emph{ergodic} if $\cI_T$ is $\mu$-trivial, i.e., $\mu(A)\in \{0,1\}$ for all $A\in\cI_T$.

    In this case we also call $T$ an \emph{ergodic transformation} on $(M,\cA,\mu)$.
\end{definition}

We now fix a probability space $(\Omega,\cF,\P)$. Let any random variable or stochastic process whose domain is not explicitly specified be defined on this space.

If $\X=(X_n)\nin$ is a stochastic process taking values in a measurable space $(M,\cA)$, we refer to $\P\circ \X^{-1}\in \cM_1^+(M^\N,\cA^{\otimes\N})$ as the (joint) distribution of the process $\X$.

\begin{definition}[ergodic process] $ $\\
    Let $(M,\cA)$ be a measurable space and $\X=(X_n)\nin$ an $M$-valued stochastic process. $\X$ is called an \emph{ergodic process} if the left-shift
    \begin{equation*}
        \tau:  M^\N \to  M^\N,\quad (x_n)\nin \mapsto (x_{n+1})\nin
    \end{equation*}
    is an ergodic transformation on $( M^\N, \cA^{\otimes\N},\P\circ \X^{-1})$.
\end{definition}

\begin{remark}[ergodic processes of observables]\label{rem:observable-processes} $ $\\
    Ergodic processes can be defined in a very natural manner on ergodic systems. Let $(M,\cA,\mu,T)$ be an ergodic system, $(\widetilde M,\widetilde\cA)$ a measurable space and $g: M\to \widetilde M$ a measurable observable of the system. We define the $\widetilde M$-valued process $\X=(X_n)\nin$ of values of the observable $g$ under repeated application of $T$ by $X_{n} = g\circ T^{n-1}$. Then $\X$ is an ergodic process.

    As a special case of the above, we obtain what we refer to as the \emph{canonical process} of the system $(M,\cA,\mu,T)$. Let $g=\Id_M$, then we have $X_1(x)=x$ and $X_{n+1}(x)=T(X_n)$, i.e., the (deterministic) process of states of the system under repeated application of $T$.
\end{remark}

In \cite{asatryan2023}, the authors use the uniform law of large numbers to prove that $\egen(n)\to 0$ almost surely as $n\to\infty$. At this point, if we simply required $(Y_i)_{i\in\N}$ to be an ergodic process, we could already prove consistency along a similar line, using Birkhoff's ergodic theorem together with the Lipschitz continuity of Proposition~\ref{prop:target-lipschitz-continuity}. See \cite{drygala2022} for a sketch of such an argument.

To prove not just consistency but quantitative rates of convergence, however, we require stronger results than the ergodic theorem. The proof in \cite{asatryan2023} utilizes McDiarmid's inequality (and its special case, Hoeffding's inequality), but there is no analogous statement for general ergodic systems. While there are several narrower classes of dynamical systems that satisfy such exponential concentration inequalities -- some others, we mention in Section~\ref{sec:conclusion} -- we focus here on one such class to exemplify our approach.

\subsection{Young Towers}

For this section, let $(M,g)$ be a smooth Riemannian manifold, $\d$ the distance on it, $\cA=\cB(M)$ the Borel-$\sigma$-algebra and $\mu$ the Riemannian volume on $M$. Such Riemannian manifolds include $\R^n$, the $(n-1)$-dimensional unit sphere $\S^{n-1}$ in $\R^n$, and the $n$-torus $\T^n=\R^n / \Z^n$ \cite{lee2018}. We consider dynamical systems $(M,\cA,T)$, where $T$ is either a $C^k$-diffeomorphism with $k\geq 2$ or a $C^{1,\alpha}$-diffeomorphism for some $\alpha\in (0,1]$. We collectively refer to either as a \emph{$C^{1+\epsilon}$-diffeomorphism}. Since the $\sigma$-algebra is always the Borel-$\sigma$-algebra, we usually omit $\cA$ and simply speak of dynamical systems $(M,T)$ or measure-preserving systems $(M,\mu,T)$.

\emph{Young towers} can be seen as a broad generalization of Axiom A systems and their symbolic dynamics (see e.g. \cite[Chapter~3]{bowen2008} for an introduction to Axiom A and Anosov systems) and were first introduced by Lai-Sang Young in \cite{young1998}.

To be precise, a \emph{uniform Young tower} is a dynamical system $(\Delta,\hat T)$ that extends the underlying system $(M,T)$ and which can be more easily studied using methods of symbolic dynamics. Properties like mixing or ergodicity are proven for $(\Delta,\hat T)$, and from there transfer to $(M,T)$.

The first question, then, must be under what conditions it is possible to construct a uniform Young tower over the system $(M,T)$. The exact conditions are listed in Appendix~\ref{ap:young} and only briefly summarized in this section.

\subsubsection*{The Underlying System: Hyperbolic Product Structure and Return Times}

We assume there exists $\Lambda\subseteq M$ with a so-called \emph{hyperbolic product structure}. This means that, though $\Lambda$ is not necessarily a hyperbolic set and not even $T$-invariant, $T$ exhibits hyperbolic behavior on $\Lambda$ with clearly defined stable and unstable directions in every point. For a more detailed account, see Definition~\ref{def:hyperbolic-product-structure}.

Let $(\Lambda_i)_{i\in I}$ be a partition of $\Lambda$, where either $I=\{1,\dotsc,n\}$ for some $n\in\N$ or $I=\N$. Moreover, for each $i\in I$ let $R_i\in \N$ be the \emph{return time} of $\Lambda_i$, that is, the first time $n\in\N$ where $T^n(\Lambda_i) \subseteq \Lambda$. More precisely, $T^{R_i}(\Lambda_i)$ must be a \emph{$u$-subset} of $\Lambda$, i.e., it spans $\Lambda$ fully in the unstable direction, as defined by the hyperbolic product structure. See Definition~\ref{def:s-u-subsets} for more details.

Crucially, $R_i$ is not necessarily the first return time for each individual $x\in \Lambda_i$, but rather the first \emph{simultaneous} return time for the entirety of $\Lambda_i$. 

For any individual $x\in \Lambda_i$, its orbit up to time $R_i-1$ may or may not be in $\Lambda$. By time $R_i$, it will land in $\Lambda$, though possibly in a different partition element, i.e., $x'=T^{R_i}(x)\in \Lambda_j$. From there, the cycle begins anew only with $R_j$ instead of $R_i$. 

\subsubsection*{The Abstract System: Uniform Young Tower}

The Young tower constructed over this system is essentially a set of \enquote{virtual points} that represent the orbit of each $\Lambda_i$ outside of $\Lambda$ until its return.

Let $R:\Lambda \to \N$ be the return time function defined by $R|_{\Lambda_i} =R_i$ for all $i\in I$. Let $T^R: \Lambda \to \Lambda$ be the return map with $T^R|_{\Lambda_i}=T^{R_i}|_{\Lambda_i}$ for all $i\in I$. We define the new state space
\begin{equation*}
    \Delta = \bigl\{(x,l) \mid x\in \Lambda,\, l\in \{0,\dotsc,R(x)-1\} \bigr\}
\end{equation*}
and the new dynamics
\begin{equation*}
    \hat T(x,l) = \begin{cases}
        (x,l+1) & \text{if}\ l+1 < R(x),\\
        (T^R(x),0) & \text{if}\ l+1=R(x).
    \end{cases}
\end{equation*}
We think of $\Delta$ as a tower, with its $l$-th \enquote{level} defined as
\begin{equation*}
    \Delta_l = \{(x,l) \mid x\in \Lambda,\, R(x)>l\}.
\end{equation*}
We further define the embeddings
\begin{equation*}
    \iota_l : \{x\in \Lambda \mid R(x)>l\} \to \Delta_l, \quad x\mapsto (x,l).
\end{equation*}
See Figure~\ref{fig:symbolic-young-tower} for an illustration of such a tower. Note that $\Delta_l$ consists of copies $\iota_l(\Lambda_i)$ of each $\Lambda_i$ with $R_i>l$. The dynamics $\hat T$ map $\iota_l(\Lambda_i)$ to $\iota_{l+1}(\Lambda_i)$ if $l+1<R_i$. Otherwise, $\iota_{R_i-1}(\Lambda_i)$ is mapped onto a $u$-subset of $\Delta_0=\iota_0(\Lambda)$, which by definition spans all $\Lambda_j$.

\begin{figure}[ht]
    \centering
    \begin{tikzpicture}[
        font=\footnotesize,
        box/.style={rectangle, draw=black, thick, fill=black!5, minimum height=1cm, minimum width=2cm}
    ]
        \newcommand*\Returns{{2,4,3,4}}
        \newcommand*\towers{4};
        \draw [thick, -Stealth] (0,0) -- (2*\towers+2,0) node [right] {$x$};
        \draw [thick, -Stealth] (0,0) -- (0,5) node [above] {$l$};

        \foreach \l in {0,...,3}
        {
            \draw [decorate,decoration={calligraphic brace,amplitude=0.15cm},line width=0.75pt] (0,\l) -- (0,\l+1) node [midway,xshift=-0.5cm] {$\Delta_{\pgfmathprint{\l}}$};
        }
        
        \foreach[parse=true] \i in {1,...,\towers}
        {
            \draw [decorate,decoration={calligraphic brace,amplitude=0.3cm,mirror},line width=0.75pt] (2*\i-2,0) -- (2*\i,0) node [midway,xshift=0.08cm,yshift=-0.5cm] {$\Lambda_{\pgfmathprint{\i}}$};
            
            \draw (2*\i-1, \Returns[\i-1]+0.5) node {$R_{\pgfmathprint{\i}}=\pgfmathprint{\Returns[\i-1]}$};
            
            \foreach[parse=true] \l in {0,...,0+\Returns[\i-1]-1}
            {   
                \node[box] at (2*\i-1, \l + 0.5) {$\iota_{\pgfmathprint{\l}}(\Lambda_{\pgfmathprint{\i}})$};
            }
        }
        
        \foreach[parse=true] \l in {1,...,0+\Returns[\towers-1]-1}
        {
            \begin{scope}[shift={(2*\towers,\l)}]
                \draw[thick,->] (-90:0.35) arc (-90:90:0.35);
            \end{scope}
        }
        
        \begin{scope}[shift={(2*\towers,\Returns[\towers-1])}]
            \draw[thick,->] (-90:0.2) arc (90:0:1)  node[below, xshift=0.08cm] {$\Delta_0$};
        \end{scope}
        
        \node at (2*\towers+1,0.5) {\large $\cdots$};
    \end{tikzpicture}
    \caption{A uniform Young tower and its dynamics}
    \label{fig:symbolic-young-tower}
\end{figure}

Recall that $\iota_l(\Lambda_i)$ represents the set $T^l(\Lambda_i)$, which may or may not be disjoint from $\Lambda$. We therefore define the projection
\begin{equation*}
    \pi : \Delta \to M,\quad (x,l) \mapsto T^l(x).
\end{equation*}
By definition of $\pi$ and $\hat T$ we have the semi-conjugacy
\begin{equation*}
    T\circ \pi = \pi \circ \hat T.
\end{equation*}
The reason we call points in $\Delta$ virtual is that two points that differ in $\Delta$ may coincide in $M$. Consider, for example, $x\in \Lambda$ with $T(x)\in \Lambda$ as well but $R(x)>1$ (recall that $R(x)=R_i$ is only the first \emph{simultaneous} return time for $\Lambda_i\ni x$). Then $(x,1)$ and $(T(x),0)$ are distinct elements of $\Delta$, but they each represent the same element $T(x)$ of $M$, since $\pi(x,1)=T(x)=\pi(T(x),0)$.

\begin{definition}[uniform Young tower; exponential tails, aperiodic]\label{def:young-tower} $ $\\
    We call the system $(\Delta,\hat T)$ defined above a \emph{uniform Young tower} if the requirements \textbf{(P1)-(P5)} and \textbf{(S)} from Appendix~\ref{ap:young} are fulfilled. In that case, we say that $(M,T)$ \emph{admits} or \emph{is modeled by a uniform Young tower}.

    We say the tower \emph{has exponential tails} if there exist $c> 0$ and $\tau \in (0,1)$ such that
    \begin{equation*}
        \mu(\{ x\in \Lambda \mid R(x)>n\}) \leq c \tau^n
    \end{equation*}
    holds for all $n\in\N$.

    We call the tower \emph{aperiodic} if $\gcd \{ R(x) \mid x\in \Lambda\} = 1$.    
\end{definition}

As promised earlier, certain properties of $\Delta$ transfer to $M$. 

\begin{proposition}[invariant measure]\label{prop:young-tower-invariant-measure} $ $\\
    Let $(M,T)$ be modeled by a uniform Young tower $(\Delta,\hat T)$ with exponential tails. Then there exists a $T$-invariant probability measure $\nu$ on $M$.

    If $(\Delta,\hat T)$ is additionally aperiodic, then $(M,\nu,T)$ is mixing and in particular, ergodic.
\end{proposition}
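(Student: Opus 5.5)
The plan is to build the invariant measure on the tower $\Delta$ and push it forward to $M$ via $\pi$. The semi-conjugacy $T\circ\pi=\pi\circ\hat T$ then transfers invariance, mixing and ergodicity from the tower to $M$.

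\textbf{Step 1: the quotient tower.} First pass to the quotient tower $\bar\Delta$ by collapsing stable leaves, using the hyperbolic product structure from \textbf{(P1)-(P5)}. On the base $\bar\Delta_0=\Lambda/\!\sim$ the return map $\bar T^R$ is a countably-branched full-branch Gibbs–Markov map. It has bounded distortion by \textbf{(P4)}–\textbf{(P5)} and expansion along unstable leaves by \textbf{(P3)}. The standard Perron–Frobenius (Lasota–Yorke) argument on Lipschitz densities then yields an $\bar T^R$-invariant probability $\bar\nu_0\ll$ Lebesgue on an unstable leaf, with density bounded above and below.

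\textbf{Step 2: lift to the tower.} Extend $\bar\nu_0$ up the tower by $\bar\nu|_{\bar\Delta_l}=(\iota_l)_*(\bar\nu_0|_{\{R>l\}})$. This is $\hat T$-invariant, and its total mass is $\int R\,d\bar\nu_0$. By bounded density this is at most a constant times $\sum_n\mu(\{R>n\})\le c\sum_n\tau^n<\infty$; this is exactly where the exponential tails are used, though summable tails would suffice. Normalising gives a probability measure.

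\textbf{Step 3: undo the quotient and push to $M$.} Lift $\bar\nu$ to an $\hat T$-invariant $\hat\nu$ on $\Delta$. The standard device is to define $\hat\nu(\psi)=\lim_n\bar\nu(\inf_{\text{stable leaf}}\psi\circ\hat T^n)$ for continuous $\psi$. Uniform contraction along stable leaves makes this sequence Cauchy, and the limit is invariant. Finally set $\nu=\pi_*\hat\nu$; then $T_*\nu=T_*\pi_*\hat\nu=\pi_*\hat T_*\hat\nu=\nu$, which proves the first claim.

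\textbf{Step 4: mixing under aperiodicity.} Aperiodicity $\gcd R=1$ together with the full-branch (Markov) structure makes $\bar T$ on $\bar\Delta$ mixing with respect to $\bar\nu$. One route is the renewal/coupling argument of Young; the other is a spectral gap for the transfer operator on a weighted Lipschitz space, whose only peripheral eigenvalue is $1$ precisely because $\gcd R=1$ rules out other roots of unity. Mixing lifts to $(\Delta,\hat\nu,\hat T)$ by the same stable-leaf approximation, since observables depending on $\hat T^n$-images become asymptotically constant along stable leaves. It then descends to $M$: for $f,g\in L^2(\nu)$,
\[\int f\circ T^n\,g\,d\nu=\int (f\circ\pi)\circ\hat T^n\,(g\circ\pi)\,d\hat\nu\to\int f\,d\nu\int g\,d\nu.\]
Ergodicity follows, since mixing applied to indicators of invariant sets gives $\nu(A)=\nu(A)^2$.

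\textbf{Main obstacle.} The hardest part is Step 4, specifically showing mixing of the quotient tower from $\gcd R=1$ and lifting it through the stable-leaf collapse. My first attempt would be to invoke Young's coupling theorem (\cite{young1998}) for decay of correlations of Hölder observables, and then approximate $L^2$ functions by Hölder ones. If citing is acceptable, I would simply cite \cite{young1998} for both parts, since this proposition is stated there essentially verbatim.
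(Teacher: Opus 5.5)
Your proposal is correct and follows the same route as the paper: obtain a $\hat T$-invariant probability $\hat\nu$ on $\Delta$ from the integrability of $R$ (which exponential tails imply), set $\nu=\pi_*\hat\nu$, and transfer invariance and mixing through the semi-conjugacy $T\circ\pi=\pi\circ\hat T$. The paper cites \cite[Theorem~1]{young1998} for $\hat\nu$ and \cite[Theorem~1]{young1999} for its mixing instead of sketching the quotient-tower and Gibbs--Markov construction, and it checks mixing on the sets $\pi^{-1}(A)\cap\hat T^{-n}(\pi^{-1}(B))$ rather than on $L^2$ observables, which is equivalent; one small slip is that expansion along unstable disks comes from \textbf{(P4)}(i), not \textbf{(P3)}, which is the stable contraction you correctly use in Step~3.
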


\begin{proof}
    By \cite[Theorem~1]{young1998}, if $\int_{\Lambda} R \, d\mu < \infty$, then $\Delta$ admits a $\hat T$-invariant probability measure $\hat \nu$. If $R$ has exponential tails, it is also $\mu$-integrable. Let $\nu = \pi_*\hat\nu$. Then
    \begin{equation*}
        \nu \circ T^{-1} = \hat \nu \circ \pi^{-1} \circ T^{-1} = \hat\nu \circ \hat T^{-1} \circ \pi^{-1} = \hat\nu \circ \pi^{-1} = \nu,
    \end{equation*}
    or in other words, $\nu$ is $T$-invariant.

    If the tower is aperiodic, then $\hat \nu$ is mixing by \cite[Theorem~1]{young1999} and thus
    \begin{align*}
        \nu\bigl(A\cap T^{-n}(B)\bigr) &= \hat \nu \bigl(\pi^{-1}(A)\cap \pi^{-1}(T^{-n}(B))\bigr) \\
        &= \hat \nu \bigl(\pi^{-1}(A)\cap \hat T^{-n}(\pi^{-1}(B))\bigr)\\
        &\xrightarrow[n\to\infty]{} \hat\nu \bigl(\pi^{-1}(A)\bigr) \, \hat\nu\bigl(\pi^{-1}(B)\bigr)\\
        &= \nu(A)\, \nu(B)
    \end{align*}
    holds for all $A,B\in \cB(M)$, i.e., $\nu$ is mixing.
\end{proof}

Other properties of aperiodic uniform Young towers with exponential tails and the systems modeled by them include the exponential decay of correlations for observable processes defined over $(M,\nu,T)$ as in Remark~\ref{rem:observable-processes} with Hölder continuous $g$ \cite[Theorem~2]{young1998}, as well as the central limit theorem for the same \cite[Theorem~3]{young1998}.

The most interesting property to us is that the states of a system modeled by an aperiodic uniform Young tower with exponential tails exhibit an exponential concentration inequality similar to McDiarmid's inequality, namely the Chazottes-Gouëzel inequality, Theorem~\ref{thm:mcdiarmid-young}.

It remains to discuss examples of systems that satisfy these conditions. As mentioned earlier, Young towers generalize Axiom A systems. It is thus unsurprising that Axiom A systems make up the first class of examples, per \cite[Theorem~4]{young1998}.

\begin{proposition}[Young towers for Axiom A]\label{prop:axiom-a-young} $ $\\
    If $T$ is at least a $C^2$-diffeomorphism and $(M,T)$ satisfies Axiom A, then it admits a uniform Young tower with exponential tails.
\end{proposition}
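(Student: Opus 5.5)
The proposition is essentially \cite[Theorem~4]{young1998}, so the plan is to reconstruct that construction: build a set $\Lambda$ with hyperbolic product structure and a return-time partition inside a basic set, check \textbf{(P1)}--\textbf{(P5)} and \textbf{(S)}, and then show exponential tails.

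\emph{Reduction to a basic set.} By Smale's spectral decomposition, the nonwandering set of an Axiom A diffeomorphism splits into finitely many basic sets. Each is compact, $T$-invariant, locally maximal, hyperbolic and transitive. Since the tower only needs to model the dynamics on the attractor carrying the relevant (SRB) measure, I will work on a single basic set $\Omega_0$. If $T|_{\Omega_0}$ is not topologically mixing, I will pass to a mixing component of the spectral decomposition for a power $T^p$. Aperiodicity is not part of the claim, so this causes no issue.

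\emph{Construction of $\Lambda$ and the return times.} Take a Markov partition of $\Omega_0$ (Bowen, \cite[Chapter~3]{bowen2008}) into rectangles of small diameter, and fix one rectangle $\Lambda$. Local product structure gives families $\Gamma^s$ and $\Gamma^u$ of local stable and unstable manifolds, every pair of which meets in exactly one point. This is the hyperbolic product structure of Definition~\ref{def:hyperbolic-product-structure}. By the Markov property and topological mixing there is $N$ with $T^N(\text{rectangle}) \cap \Lambda \neq \emptyset$ for every rectangle, and every image crosses $\Lambda$ fully in the unstable direction. I will define the partition $(\Lambda_i)$ via cylinder sets: for $x \in \Lambda$, let $R(x)$ be the first $n \geq N$ at which the symbolic itinerary returns to the symbol of $\Lambda$. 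The set $\Lambda_i$ is the $s$-subset consisting of points sharing that itinerary up to the return time. The Markov property makes $T^{R_i}(\Lambda_i)$ a $u$-subset.

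\emph{Verifying the tower axioms.} Uniform contraction and expansion on $E^s$ and $E^u$ give \textbf{(P2)}--\textbf{(P4)}. That is, $\gamma^s$ is contracted exponentially under $T^n$ and $\gamma^u$ is expanded under backward iteration. A $C^{1+\epsilon}$ regularity bound on $\det DT^u$ gives the bounded distortion estimate. Absolute continuity of the stable holonomy (a standard consequence of $C^2$) gives \textbf{(P5)} and \textbf{(S)}. Measurability and positivity of $\mu_\gamma(\Lambda \cap \gamma)$ for \textbf{(P1)} come from the fact that, on a hyperbolic attractor, $\Lambda \cap \gamma^u$ has positive leaf volume.

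\emph{Exponential tails (the main obstacle).} One must show $\mu\{R>n\} \leq c\tau^n$. The set $\{R>n\}$ is the set of points whose symbolic itinerary avoids the symbol of $\Lambda$ for $n$ steps. This is a union of $n$-cylinders of a mixing subshift of finite type with one symbol forbidden, and the transfer matrix of such a subshift has spectral radius strictly smaller than that of the full shift. Using the Gibbs property of the SRB measure, I will bound the mass of $n$-cylinders by $e^{-nP + O(1)}$ with the appropriate potential $-\log\abs{\det DT^u}$. Comparing the pressure of the restricted subshift with that of the full one then gives the exponential bound along unstable leaves. Transferring this from leaf measure to Riemannian volume uses absolute continuity once more. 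The delicate part is exactly this pressure gap plus the volume comparison, and if done carefully I would simply invoke \cite[Theorem~4]{young1998} for it.
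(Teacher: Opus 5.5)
Your proposal is correct and rests on the same thing the paper does. The paper's entire proof is the citation of \cite[Theorem~4]{young1998}, and you defer to that theorem for the delicate tail estimate, so your Markov-partition construction is an outline of Young's argument rather than a different route. If you keep the outline, make two points explicit: $\Omega_0$ must be an attracting basic set, because \textbf{(P1)} fails on a saddle-type basic set (there $\gamma^u\cap\Lambda$ has zero leaf volume); and \textbf{(S)} comes from taking $s_0$ to be the separation time of Markov itineraries, not from absolute continuity of the stable holonomy.
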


As examples of such Axiom A systems, consider the following class.

\begin{example}[hyperbolic torus automorphisms]\label{ex:hta} $ $\\
    Let $d\in \N$, $d\geq 2$. We consider $M=\T^d=\R^d/\Z^d$, the $d$-torus with the usual metric
    \begin{equation*}
        \d_{\T^d}([x],[y]) = \inf_{p\in\Z^d} \abs{x-y+p},
    \end{equation*}
    where $[x]$ denotes the equivalence class of $x\in\R^d$ modulo $\Z^d$. One can think of $\T^d$ as $[0,1]^d$ with the opposing edges identified. Now consider a symmetric matrix $A\in \Z^{d\times d}$ with $\abs{\det(A)}= 1$ and $\abs{\lambda}\neq 1$ for all $\lambda\in \sigma(A)$. We define the associated hyperbolic torus automorphism by
    \begin{equation*}
        T : \T^d \to \T^d,\qquad [x] \mapsto [Ax].
    \end{equation*}
    Here, the fact that all entries are integer guarantees that $T$ is well-defined and by Cramer's rule, $\abs{\det(A)}$ guarantees $A^{-1}\in\Z^{d\times d}$ as well, such that $T^{-1}$ is represented by $A^{-1}$. The additional properties guarantee that $(M,T)$ is an Anosov system and thus Axiom A. See \cite[Proposition~III.21]{zehnder2010} for the 2-dimensional case. The proof in higher dimensions is mostly analogous but makes use of the normality guaranteed by symmetry. Moreover, these hyperbolic torus automorphisms have the Lebesgue measure as their invariant distribution $\nu$ and are mixing with respect to it.

    The most famous example of this class is Arnold's cat map, first introduced in \cite{arnold1968}, whose action on the $2$-torus is pictured in Figure~\ref{fig:cat-map}. Its associated matrix is
    \begin{equation*}
        A = \begin{bmatrix}
            2 & 1 \\ 1 & 1
        \end{bmatrix}.
    \end{equation*}
    \begin{figure}[ht]
        \centering
        \begin{tikzpicture}[font=\footnotesize]
            \draw[draw=black!70,fill=red!10,very thin] (0,0) -- (1,0) -- (0,2) -- cycle;
            \draw[draw=black!70,fill=blue!10,very thin] (1,0) -- (2,0) -- (0,2) -- cycle;
            \draw[draw=black!70,fill=yellow!10,very thin] (0,2) -- (2,0) -- (1,2) -- cycle;
            \draw[draw=black!70,fill=green!10,very thin] (1,2) -- (2,0) -- (2,2) -- cycle;
            \draw[thick] (0,0) rectangle (2,2);
            
            \draw[thick,->,out=270,in=135] (1,-0.2) to node[below left] {$A$} (1.85,-2.35) ;

            \begin{scope}[shift={(0,-4.5)}]
                \draw[draw=black!70,fill=red!10,very thin] (0,0) -- (2,1) -- (2,2) -- cycle;
                \draw[draw=black!70,fill=blue!10,very thin] (2,1) -- (4,2) -- (2,2) -- cycle;
                \draw[draw=black!70,fill=yellow!10,very thin] (2,2) -- (4,2) -- (4,3) -- cycle;
                \draw[draw=black!70,fill=green!10,very thin] (4,3) -- (4,2) -- (6,4) -- cycle;
                \draw[thick] (0,0) -- (4,2) -- (6,4) -- (2,2) -- cycle;
            \end{scope}

            \draw[thick, ->] (2.2,1) -- node[above] {$T$} (3.8,1);
            \draw[thick, ->] (5,-0.8) -- node[left] {$\mod 1$} (5,-0.2);

            \begin{scope}[shift={(4,0)}]
                \draw[draw=black!70,fill=red!10,very thin] (0,0) -- (2,1) -- (2,2) -- cycle;
                \draw[draw=black!70,fill=blue!10,very thin] (0,1) -- (2,2) -- (0,2) -- cycle;
                \draw[draw=black!70,fill=yellow!10,very thin] (0,0) -- (2,0) -- (2,1) -- cycle;
                \draw[draw=black!70,fill=green!10,very thin] (0,1) -- (0,0) -- (2,2) -- cycle;
                \draw[thick] (0,0) rectangle (2,2);
            \end{scope}
        \end{tikzpicture}
        \caption{The (bijective) action of the cat map on the 2-torus}
        \label{fig:cat-map}
    \end{figure}
    A 3-dimensional example is provided, for instance, by
    \begin{equation*}
        A = \begin{bmatrix}
            2 & 0 & 1 \\
            0 & 1 & 1 \\
            1 & 1 & 2
        \end{bmatrix}
    \end{equation*}
    and examples in any higher dimension can be obtained as block-diagonal matrices composed of the previous two, just to illustrate that this class is not empty. 
\end{example}

There are many other classes of dynamical systems that admit uniform Young towers with exponential tails. Young proves the requisite properties for uniformly expanding maps (\cite[Theorem~$4'$]{young1998}), piecewise hyperbolic maps (\cite[Theorem~5]{young1998}), logistic maps (\cite[Theorem~7]{young1998}) and Hénon attractors (\cite[Theorem~8]{young1998}). We refer the reader to the source for the precise definitions of these various settings. 

\begin{remark}[on examples with the aperiodicity condition]\label{rem:aperiodicity} $ $\\
    The additional property of aperiodicity (see Definition~\ref{def:young-tower}) is required for Theorem~\ref{thm:mcdiarmid-young}, which is the basis of the proofs in the next section. It is equivalent to the mixing of $(\Delta,\hat\nu,\hat T)$ (see \cite[Section~6.1]{luzzatto2015}) and implies (but is not implied by) the mixing of $(M,\nu, T)$.

    It appears to be a \enquote{folk theorem} that if an Axiom A system is additionally topologically mixing, then the associated Young tower is aperiodic. This was confirmed in correspondence by experts on the field, but we could not find an explicit statement of this result in the literature.
\end{remark}

\section{Quantitative Estimates of the Generalization Error}\label{sec:rates}

\subsection{Assumptions and Approach}

We begin by requiring that $(Y_i)_{i\in\N}$ follows the distribution of an observable process (per Remark~\ref{rem:observable-processes}) over a suitable dynamical system.

\begin{assumption}[Young tower data]\label{ass:young-data} $ $\\
    In the setting of Assumption~\ref{ass:general-setting}, we additionally let $(M,T)$ be a dynamical system modeled by an aperiodic uniform Young tower with exponential tails and $\nu$ its mixing $T$-invariant distribution by Proposition~\ref{prop:young-tower-invariant-measure}. Then let $\widetilde Y_1 \sim \nu$ and $\widetilde Y_i= T^{i-1}(\widetilde Y_1)$ for all $i\in \N$. Finally, let $g:M\to [0,1]^d$ be $L$-Lipschitz continuous, $L>0$, and let $Y_i=g(\widetilde Y_i)$ for all $i\in\N$.
\end{assumption}

By construction, $(\widetilde Y_i)_{i\in\N}$ has the same distribution as the canonical process of the dynamical system. In particular, the process is mixing and thus ergodic. Therefore, the process of observables $(Y_i)_{i\in\N}$ is ergodic as well, with the invariant distribution $\mu=g_*\nu$.

\begin{remark}[on the applicability of Chazottes-Gouëzel]\label{rem:chazottes-gouezel-for-observables} $ $\\
    Theorem~\ref{thm:mcdiarmid-young} is stated for separately Lipschitz observables of the canonical process $(\widetilde Y_i)_{i\in\N}$. However, any separately Lipschitz observable $K$ of the observable process $(Y_i)_{i\in\N}$ can be re-stated as the separately Lipschitz observable $\widetilde K=K\circ (g,\dotsc,g)$ of the canonical process, with Lipschitz constants $\widetilde L_i = L_i \cdot L$. Consequently, Theorem~\ref{thm:mcdiarmid-young} is still applicable in our setting, but with the $(M,T)$-dependent constant $C$ replaced by $CL^2$ (which also depends on $g$).
\end{remark}

\begin{remark}[on learning the invariant distribution of the state space]\label{rem:learning-states} $ $\\
    The reason we consider observable processes $(Y_i)_{i\in\N}$ instead of the process of states $(\widetilde Y_i)_{i\in\N}$ itself is that the latter is $M$-valued while our generators and discriminators are defined on the unit cube $[0,1]^d$. Notably, since $M$ must be a Riemannian manifold, we can never have $M=[0,1]^d$.
    
    However, we may sometimes prefer to learn the invariant distribution $\nu$ of the process of states itself, rather than some push-forward $\mu=g_*\nu$. This can still be accomplished by suitable choice of observables. Consider the state space $M=\T^d$ (e.g. for hyperbolic torus automorphisms), whose elements we identify with their representatives in $[0,1]^d$ in the following, by a slight abuse of notation. Unlike the unit cube, it comes equipped with the metric $\d_{\T^d}$, which means that we need to take care when defining Lipschitz continuous observables.
    
    Let the invariant distribution $\nu$ be absolutely continuous w.r.t. $\lambda_{|\T^d}$, such that its density $f_{\nu}$ satisfies $f_{\nu} \in C^{k,\alpha}(\T^d,\R)$, $f_{\nu} \geq \kappa$ and $\norm{f_{\nu}}_{C^{k,\alpha}(\T^d,\R)} \leq M$ for some $\kappa,M>0$. This is analogous to Assumption~\ref{ass:true-measure-hoelder}, but notably the Hölder continuity of the $k$-th derivatives is also w.r.t. $\d_{\T^d}$.

    The goal is to find a Lipschitz continuous observable $g:(\T^d,\d_{\T^d})\to ([0,1]^d,\abs{\cdot})$ such that $\mu=g_*\nu$ satisfies the regular Assumption~\ref{ass:true-measure-hoelder}, i.e., its density $f_\mu$ lies in
    \begin{equation*}
        \cT \coloneq \left\{ f \in C^{k,\alpha}([0,1]^d,\R) \Mid f_{\mu} \geq \kappa, \ \norm{f_{\mu}}_{C^{k,\alpha}} \leq M\right\},
    \end{equation*}
    and $\mu$ is a good approximation of $\nu$. We cannot simply choose the identity, since the embedding of $\T^d$ into $[0,1]^d$ is not continuous (or even well-defined). Instead, for $\epsilon\in (0,\tfrac 12)$, define
    \begin{equation*}
        \psi : [0,1]\to [0,1], \qquad \psi(x) = \begin{cases}
            \frac{x}{1-\epsilon} & \text{if} \ 0\leq x \leq 1-\epsilon,\\
            \frac{1-x}{\epsilon} & \text{if} \ 1-\epsilon \leq x \leq 1.
        \end{cases}
    \end{equation*}
    as well as its component-wise application $g: [0,1]^d\to [0,1]^d, \ g(x_1,\dotsc,x_d)\mapsto (\psi(x_1),\dotsc,\psi(x_d))$. Since both $\psi$ and $g$ are 1-periodic, they are well-defined as functions on $\T$ and $\T^d$ respectively.
    
    Let $I_1 = [0,1-\epsilon]$, $I_2=[1-\epsilon,1]$, $s_1=\frac{1}{1-\epsilon}$ and $s_2=\frac{-1}{\epsilon}$. We divide the unit cube into $2^d$ rectangles $B_i = \bigtimes_{j=1}^d I_{i_j}$, $i\in \{1,2\}^d$. Then for any $i\in \{1,2\}^d$, $g_{|B_i}$ is an affine linear diffeomorphism $B_i \to [0,1]^d$ with constant Jacobian $Dg_{|B_i} \diag(s_{i_1},\dotsc,s_{i_d})$. The operator norm of this Jacobian is upper-bounded by $\frac{1}{\epsilon}$, so $g$ is $\frac 1 \epsilon$-Lipschitz continuous (even as a function $\T^d\to [0,1]^d$) by the mean value theorem.

    The local inverse $h_i=g_{|B_i}^{-1} : [0,1]^d \to B_i$ is also affine linear and its Jacobian is $\diag(s_{i_1}^{-1},\dotsc,s_{i_d}^{-1})$. In particular, $h_i$ is $(1-\epsilon)$-Lipschitz continuous. The transformation theorem yields
    \begin{equation*}
        \frac{d(g_*\nu_{|B_i})}{d\lambda}(y) = f_{\nu}(h_i(y)) \cdot (1-\epsilon)^{m(i)} \cdot \epsilon^{d-m(i)},
    \end{equation*}
    where $m(i)=\card{\{j\in\{1,\dotsc,d\} \mid i_j=1\}}$. We write $f_i = f_\nu \circ h_i$ and note that
    \begin{equation*}
        f_\mu = \frac{d(g_*\nu)}{d\lambda} = \sum_{i\in\{1,2\}^d} \frac{d(g_*\nu_{|B_i})}{d\lambda} = \sum_{i\in\{1,2\}^d} f_i \cdot (1-\epsilon)^{m(i)}\cdot \epsilon^{d-m(i)}
    \end{equation*}
    is a convex combination of these $f_i$. A simple inductive proof shows that
    \begin{equation*}
        D_\beta f_i(y) = \prod_{j=1}^d s_{i_j}^{-\beta_j} \cdot D_\beta f_{\nu}(h_i(y))
    \end{equation*}
    holds for all $i\in\{1,2\}^d$, $\beta\in\N_0^d$ and $y\in [0,1]^d$. This, together with the fact that $h_i$ is a contraction and $\d_{\T^d}(y,y')\leq \abs{y-y'}$, immediately shows that $f_i\in \cT$ and thus $f_\mu\in\cT$, since the defining inequalities are stable under convex combination. In other words, $g$ satisfies Assumption~\ref{ass:young-data} and $\mu$ satisfies Assumption~\ref{ass:true-measure-hoelder} - with the same constants as $\nu$.

    Finally, it remains to show that $\mu$ is a good approximation to $\nu$. To this end, let $y\in [0,1]^d$ be arbitrary, let $\ind=(1,\dotsc,1)\in \{1,2\}^d$ and note that
    \begin{align*}
        \abs{f_{\nu}(y)-f_\mu(y)} &\leq \sum_{i\in\{1,2\}^d} \abs{f_{\nu}(y)-f_{\nu}(h_i(y))} \cdot (1-\epsilon)^{m(i)}\cdot \epsilon^{d-m(i)}\\
        &\leq \abs{f_{\nu}(y)-f_{\nu}(h_\ind(y))} + (2^d-1) M\cdot\epsilon\\
        &\leq \sqrt{d} M \cdot \abs{y-(1-\epsilon)y} + (2^d-1) M\cdot\epsilon\\
        &\leq dM \cdot \epsilon + (2^d-1) M\cdot\epsilon\\
        &= (2^d+d-1)M\cdot \epsilon,
    \end{align*}
    where we used the $\sqrt{d}M$-Lipschitz continuity of $f_\nu$ (due to the mean value theorem) in the third line and $\diam([0,1]^d)=\sqrt{d}$ in the fourth. Consequently, we have
    \begin{equation*}
        \djs(\nu,\mu) \leq \frac{\ln(2)}{2} \dtv(\nu,\mu) \leq \frac{\ln(2)}{2}(2^d+d-1)M\cdot \epsilon \in \cO(\epsilon).
    \end{equation*}
    In other words, the invariant distribution of the process $Y_i = g(\widetilde Y_i)$ can be learned in our setting and provides an arbitrarily good (for $\epsilon\to 0$) approximation of $\mu$.
\end{remark}    

So far, we have required that the true measure density $f_\mu$, the generators $\phi$ and the discriminators $\xi$ are $k$ (respectively $k-1$) times $\alpha$-Hölder differentiable for some degree of differentiability $k\geq 2$ and Hölder-exponent $\alpha\in (0,1]$. While $k\geq 2$ suffices for the vanishing of the model errors in Theorem~\ref{thm:hypothesis-sufficiency}, we require a much higher degree of regularity to obtain rate estimates in this section, especially in practical applications to image data, where $d$ can be in the millions.

\begin{assumption}[high regularity]\label{ass:high-regularity} $ $\\
    Let Assumption~\ref{ass:model-hoelder} hold. We additionally require that the degree of differentiability $k$ in Assumptions~\ref{ass:true-measure-hoelder} and \ref{ass:model-hoelder} satisfies $k > 2 - \alpha + \frac d2$.
\end{assumption}

Under these new assumptions, we can prove rates for the almost sure convergence of $\egenmu(n)$ and $\egenlambda(n)$ to zero. We follow a similar approach, adapted from \cite[Chapter~4]{asatryan2023}, for both errors. Let $(\Theta,\rho)$ be a compact metric space. For each $n\in \N$, let $\X\ho n= (X_\theta\ho n)_{\theta\in \Theta}$ be a centered, path-wise continuous process and denote its absolute supremum by
\begin{equation*}
    \epsilon(n) = \sup_{\theta\in \Theta} \Abs{X_\theta\ho n}.
\end{equation*}
Then we use McDiarmid's inequality prove that $\X\ho n$ has subgaussian increments with respect to $\rho_n=n^{-\frac 12} \rho$ and apply Dudley's inequality to obtain
\begin{equation*}
    \E[\epsilon(n)] \leq \gamma_2 n^{-\frac 12}.
\end{equation*}
Another application McDiarmid's inequality, this time directly to $\epsilon(n)$, proves that
\begin{equation*}
    \P\bigl( \epsilon(n) - \E[\epsilon(n)] \geq t\bigr) \leq \exp\bigl(-\gamma_3 nt^2\bigr).
\end{equation*}
Finally, we use the Borel-Cantelli lemma to prove that
\begin{equation*}
    \epsilon(n) < \tau \sqrt{\frac{\log n}{n}}
\end{equation*}
for all but finitely many $n$, almost surely, for any sufficiently large $\tau$.

For $\egenlambda(n)$, we omit the details because they are completely analogous to the proof for $\egen(n)$ in \cite[Chapter~4]{asatryan2023}, using the regular McDiarmid's inequality (Theorem~\ref{thm:mcdiarmid-iid}).

\begin{theorem}[almost sure convergence to zero, with rates, independent generalization error]\label{thm:error-convergence-rate-lambda}
    Let Assumption~\ref{ass:high-regularity} hold and let $\tau > \sqrt{2 (\log B)^2}$. Then for $\P$-almost every $\omega\in \Omega$, there exists a natural number $N(\omega)\in\N$ such that
    \begin{equation*}
        \egenlambda(n) < \tau \sqrt{\frac{\log n}{n}}
    \end{equation*}
    holds for all $n\geq N(\omega)$.
\end{theorem}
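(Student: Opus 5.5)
We follow the three-step scheme outlined above, with index set $\Theta=\chg\times\chd$ and the centered process
\begin{equation*}
    X^{(n)}_{(\phi,\xi)} = \hat\cL^\lambda(\phi,\xi,n)-\cL^\lambda(\phi,\xi) = \frac1n\sum_{i=1}^n \Bigl[\log\bigl(1-\xi(\phi(Z_i))\bigr) - \cL^\lambda(\phi,\xi)\Bigr],
\end{equation*}
so that $\egenlambda(n)=\sup_\Theta\abs{X^{(n)}_\theta}$. We equip $\Theta$ with the metric $\rho((\phi,\xi),(\phi',\xi'))=\frac1B\bigl(\norm{\xi-\xi'}_\infty+C_1\norm{\phi-\phi'}_\infty\bigr)$. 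By \cite[Propositions~3.6, 3.11]{asatryan2023} and Arzelà–Ascoli, $\Theta$ is compact for $\rho$. By Proposition~\ref{prop:target-lipschitz-continuity}, the paths $\theta\mapsto X^{(n)}_\theta$ are continuous, indeed $2$-Lipschitz in $\rho$, so the supremum may be taken over a countable dense set and is measurable.

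\textbf{Step 1 (subgaussian increments).} For fixed $\theta,\theta'$, the increment $X^{(n)}_\theta-X^{(n)}_{\theta'}$ is a function of the i.i.d. $Z_1,\dots,Z_n$. Changing one $Z_i$ alters it by at most $\frac2n\rho(\theta,\theta')$, using the pointwise version of the Lipschitz bound in Proposition~\ref{prop:target-lipschitz-continuity} for a single summand. McDiarmid's inequality (Theorem~\ref{thm:mcdiarmid-iid}) then gives tails $2\exp\bigl(-t^2/(2\rho_n(\theta,\theta')^2)\bigr)$ with $\rho_n=n^{-1/2}\rho$, up to an absolute constant.

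\textbf{Step 2 (Dudley).} Dudley's entropy integral yields
\begin{equation*}
    \E[\egenlambda(n)]\le \E\abs{X^{(n)}_{\theta_0}} + C n^{-1/2}\int_0^{\diam\Theta}\sqrt{\log N(\Theta,\rho,\varepsilon)}\,d\varepsilon,
\end{equation*}
where the first term is $\cO(n^{-1/2})$ by Hoeffding. The main obstacle is finiteness of the entropy integral. We bound the covering numbers by those of Hölder balls in sup-norm, via the classical Kolmogorov–Tikhomirov estimate $\log N(\varepsilon)\lesssim \varepsilon^{-d/(k+\alpha)}$ for $\chg$ (componentwise) and $\varepsilon^{-d/(k-1+\alpha)}$ for $\chd$. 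The integral converges iff $d/(2(k-1+\alpha))<1$, i.e.\ $k>1-\alpha+\frac d2$, which is implied by Assumption~\ref{ass:high-regularity}. This gives $\E[\egenlambda(n)]\le\gamma_2n^{-1/2}$.

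\textbf{Step 3 (concentration and Borel–Cantelli).} Each summand $\log(1-\xi(\phi(Z_i)))$ lies in $[\log B,\log(1-B)]\subseteq[\log B,0]$. Hence replacing one $Z_i$ changes $\egenlambda(n)$ by at most $\abs{\log B}/n$, and McDiarmid gives
\begin{equation*}
    \P\bigl(\egenlambda(n)-\E\egenlambda(n)\ge t\bigr)\le\exp\bigl(-2nt^2/(\log B)^2\bigr).
\end{equation*}
Choose $\tau>\tau'>\sqrt{2(\log B)^2}$ and set $t=\tau'\sqrt{\log n/n}$. The probability bound becomes $n^{-2\tau'^2/(\log B)^2}$, which is summable since the exponent exceeds $1$ (in fact, far more than needed). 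Borel–Cantelli then gives $\egenlambda(n)<\gamma_2n^{-1/2}+\tau'\sqrt{\log n/n}$ eventually, almost surely. Since $\gamma_2 n^{-1/2}\le(\tau-\tau')\sqrt{\log n/n}$ for large $n$, this yields the claim.
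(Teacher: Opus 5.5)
Your proposal is correct and takes essentially the route the paper relies on when it defers to \cite[Chapter~4]{asatryan2023}: McDiarmid for subgaussian increments with respect to a rescaled sup-norm metric on $\chg\times\chd$, Dudley's integral with the sup-norm Hölder entropy bound of Proposition~\ref{prop:hoelder-entropy-sup}, McDiarmid again for concentration of the supremum, and Borel--Cantelli. The differences are minor. You bound $\E[\egenlambda(n)]$ via $\E\abs{X^{(n)}_{\theta_0}}$ rather than the constant-discriminator trick of Proposition~\ref{prop:dudley-bound-mu}. Your bounded-difference constant $\abs{\log B}/n$ is also sharper than needed, so the stated threshold $\tau>\sqrt{2(\log B)^2}$ is more than enough.
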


The proof for $\egenmu(n)$, which we give in the following subsections, differs first and foremost in that Assumption~\ref{ass:young-data} forces us to use the aforementioned Chazottes-Gouëzel inequality, Theorem~\ref{thm:mcdiarmid-young}, which generalizes McDiarmid's inequality to Young towers. Since the observables $K$ there need to be separately Lipschitz instead of bounded, we choose a rescaled $\norm{\cdot}_{C^1}$-distance as $\rho$ instead of the $\norm{\cdot}_\infty$-distance, which then leads to different covering numbers as described in Proposition~\ref{prop:hoelder-entropy-C1}.

Note that this approach is not specific to Assumption~\ref{ass:young-data}. Any stochastic process $(Y_i)_{i\in\N}$ that satisfies a kind of McDiarmid's inequality -- like geometrically ergodic Markov chains \cite{dedecker2015,douc2018,havet2020} -- would do just as well, requiring only minor alterations to the proofs.

\subsection{Subgaussian Increments of the Empirical Process}

We begin by defining the empirical process $\X\ho n$. Let $X_\xi\ho n = \hat\cL^\mu(\xi,n) - \cL^\mu(\xi)$ for all $n\in\N$ and $\xi\in\chd$. The index space is $\Theta = \chd$ endowed with the metric
\begin{equation*}
    \rho(\xi,\xi') = \sqrt{C}L\left(\frac{C_1}{B^2}+\frac{\sqrt{d}}{B}\right) \norm{\xi-\xi'}_{C^1},
\end{equation*}
where $C$ is the positive constant associated to the dynamical system of Assumption~\ref{ass:young-data} by Theorem~\ref{thm:mcdiarmid-young}. Let us verify the initial conditions for the approach

\begin{proposition}[initial conditions]\label{prop:initial-conditions-mu} $ $\\
    Let Assumptions~\ref{ass:model-hoelder} and $\ref{ass:young-data}$ hold. Let $n\in\N$. Then
    \begin{enumerate}[label=(\roman*)]
        \item $(\Theta,\rho)$ is a compact metric space.
        \item $\X\ho n$ is a centered, path-wise continuous process.
    \end{enumerate}
\end{proposition}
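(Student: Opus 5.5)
For (i), note that $\rho$ is a positive multiple of the $\norm{\cdot}_{C^1}$-distance restricted to $\chd$. It is therefore a metric, because $\norm{\cdot}_{C^1}$ is a norm and the prefactor $\sqrt{C}L\bigl(\frac{C_1}{B^2}+\frac{\sqrt d}{B}\bigr)$ is strictly positive. For compactness we use the result cited in the proof of Theorem~\ref{thm:erm-existence}: $\chd$ is compact in the $C^{k-1,\alpha'}$-topology for any fixed $\alpha'\in(0,\alpha)$. Since $k\geq 2$, we have $k-1\geq 1$, and the identity map $C^{k-1,\alpha'}\to C^1$ is continuous, because $\norm{\xi}_{C^1}\leq\norm{\xi}_{C^{k-1,\alpha'}}$. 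The continuous image of a compact set is compact, so $(\chd,\norm{\cdot}_{C^1})$ is compact, and hence so is $(\Theta,\rho)$. The only point to check is that the $C^1$-norm is dominated by the $C^{k-1,\alpha'}$-norm under the convention of Appendix~\ref{ap:hoelder}. This holds for the standard definitions, since the Hölder norm contains the sup-norms of all derivatives up to order $k-1$.

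For centering in (ii), fix $\xi\in\chd$. Since $B\leq\xi\leq 1-B$, the function $\log\xi$ is bounded and measurable, so each $\log\xi(Y_i)$ is integrable. Under Assumption~\ref{ass:young-data}, each $Y_i=g(T^{i-1}\widetilde Y_1)$ has law $g_*\nu=\mu$ because $\nu$ is $T$-invariant. By linearity we get $\E[\hat\cL^\mu(\xi,n)]=\frac1n\sum_{i=1}^n\E_{Y\sim\mu}[\log\xi(Y)]=\cL^\mu(\xi)$, so $\E[X_\xi\ho n]=0$.

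For path-wise continuity, fix $\omega$ and combine the first two inequalities of Proposition~\ref{prop:target-lipschitz-continuity}:
\begin{equation*}
    \Abs{X_\xi\ho n(\omega)-X_{\xi'}\ho n(\omega)}\leq \frac 2B\norm{\xi-\xi'}_\infty\leq\frac2B\norm{\xi-\xi'}_{C^1}.
\end{equation*}
The right-hand side is a constant multiple of $\rho(\xi,\xi')$. Hence every path $\xi\mapsto X_\xi\ho n(\omega)$ is Lipschitz, and in particular continuous, on $(\Theta,\rho)$.

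I expect no real obstacle in this proof. The only step needing care is the compact embedding in (i), namely confirming that compactness in $C^{k-1,\alpha'}$ passes to the coarser $C^1$-topology. This is immediate from continuity of the embedding once $k-1\geq1$ is noted.
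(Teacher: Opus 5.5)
Your proof is correct and follows the paper's own argument. Compactness of the discriminator space in a Hölder topology passes to the weaker $C^1$-topology, and hence to $\rho$, which is a positive multiple of the $C^1$-distance. Path-wise (even Lipschitz) continuity then follows from Proposition~\ref{prop:target-lipschitz-continuity}. You are also slightly more careful than the paper in two places: you state compactness in the $C^{k-1,\alpha'}$-topology rather than $C^{k-1,\alpha}$, and you derive centering explicitly from the $T$-invariance of $\nu$, where the paper simply calls it clear.
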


Note that, for now, Assumption~\ref{ass:young-data} is only required because $\rho$ depends on it via $C$.

\begin{proof}
    \textbf{Assertion (i):} Since compactness transfers to weaker topologies, $\chd$ is compact w.r.t. $\norm{\cdot}_{C^1}$ by \cite[Proposition~3.11]{asatryan2023} (recall that $k\geq 2$, so the $C^{k-1,\alpha}$ topology is indeed stronger than $C^1$). Since $\Theta=\chd$ and $\rho$ is topologically equivalent to $\norm{\cdot}_{C^1}$, $(\Theta,\rho)$ is also compact.

    \textbf{Assertion (ii):} It is clear that each $X_\xi\ho n$ is centered. For the path-wise continuity, consider $Y_1,\dotsc,Y_n$ fixed. Then we have
    \begin{align*}
        \Abs{X_{\xi}\ho n - X_{\xi'}\ho n} &\leq \Abs{\hat \cL^\mu(\xi,n)-\hat \cL^\mu(\xi',n)} + \Abs{\cL^\mu(\xi) - \cL^\mu(\xi')}\\
        &\leq \frac{2}{B} \norm{\xi-\xi'}_{C^1} = \frac{2}{\sqrt{C}L\left(\frac{C_1}{B}+\sqrt{d}\right)} \rho(\xi,\xi')
    \end{align*}
    for all $\xi,\xi'\in\Theta$ by Proposition~\ref{prop:target-lipschitz-continuity}, i.e., the empirical process is even path-wise Lipschitz continuous.
\end{proof}

Note that $\rho_n = n^{-\frac 12}\rho$ is topologically equivalent to $\rho$, so we do not need to check compactness and path-wise continuity again for each $\rho_n$.

In order to prove that $\X\ho n$ has subgaussian increments, we need to define a separately Lipschitz observable $K$ for Theorem~\ref{thm:mcdiarmid-young}. Therefore, we need the following lemma.

\begin{lemma}[derivative bound for the helper function]\label{lem:derivative-bound-log-quotient} $ $\\
    Let Assumption~\ref{ass:model-hoelder} hold. Let $\xi,\xi' \in \chd$. Then the function
    \begin{equation*}
        f : [0,1]^d \to \R,\quad y\mapsto \log \frac{\xi(y)}{\xi'(y)}
    \end{equation*}
    is well-defined, continuously differentiable and satisfies
    \begin{equation*}
        \norm{Df}_\infty = \sup_{y\in [0,1]^d} \norm{Df(y)}_2 \leq \left(\frac{C_1}{B^2}+\frac{\sqrt{d}}{B}\right) \norm{\xi-\xi'}_{C^1}.
    \end{equation*}
\end{lemma}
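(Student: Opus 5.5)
Since every $\xi,\xi'\in\chd$ satisfies $B\leq \xi,\xi'\leq 1-B$, the quotient $\xi/\xi'$ is bounded below by $B/(1-B)>0$, so $f=\log\xi-\log\xi'$ is well-defined. Both $\log\xi$ and $\log\xi'$ are compositions of the smooth function $\log$ on $[B,1-B]$ with $C^1$ functions (recall $k\geq 2$, so $\xi,\xi'\in C^{k-1,\alpha}\subseteq C^1$). Hence $f$ is continuously differentiable, and the chain rule gives
\begin{equation*}
    Df(y) = \frac{D\xi(y)}{\xi(y)} - \frac{D\xi'(y)}{\xi'(y)}.
\end{equation*}

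The key step is to split this difference by adding and subtracting the mixed term $D\xi'(y)/\xi(y)$:
\begin{equation*}
    Df(y) = \frac{D\xi(y)-D\xi'(y)}{\xi(y)} + D\xi'(y)\,\frac{\xi'(y)-\xi(y)}{\xi(y)\,\xi'(y)}.
\end{equation*}
We bound the second term using $\norm{D\xi'}_\infty\leq C_1$ and $\xi\xi'\geq B^2$. This gives the estimate $\frac{C_1}{B^2}\norm{\xi-\xi'}_\infty\leq \frac{C_1}{B^2}\norm{\xi-\xi'}_{C^1}$.

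For the first term, the denominator is at least $B$. The numerator is the gradient of $\xi-\xi'$, whose Euclidean norm is bounded by $\sqrt{d}\max_j \norm{\partial_j(\xi-\xi')}_\infty$. That maximum is in turn bounded by $\norm{\xi-\xi'}_{C^1}$, with the $C^1$-norm as recapped in Appendix~\ref{ap:hoelder}. This yields the contribution $\frac{\sqrt d}{B}\norm{\xi-\xi'}_{C^1}$. Adding the two contributions and taking the supremum over $y\in[0,1]^d$ gives the claimed bound.

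The only real point of care is the norm conventions. The factor $\sqrt d$ arises from passing between the $2$-norm of the gradient and the componentwise sup-norms that define $\norm{\cdot}_{C^1}$. We also use that $\norm{\cdot}_{C^1}$ dominates $\norm{\cdot}_\infty$. If the appendix instead defines the $C^1$-norm as a sum over multi-indices, the same inequality still holds, since the maximum is bounded by the sum. Otherwise the argument is routine.
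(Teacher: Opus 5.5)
Your proposal is correct and follows essentially the same route as the paper. Both split $Df$ by adding and subtracting a mixed term, bound one piece by $\frac{C_1}{B^2}\|\xi-\xi'\|_\infty$ and the other by $\frac{\sqrt d}{B}\|\xi-\xi'\|_{C^1}$. The only immaterial difference is that you pair $D\xi'$ with $\frac{1}{\xi}-\frac{1}{\xi'}$ and divide the gradient difference by $\xi$, whereas the paper pairs $D\xi$ with $\xi'-\xi$ and divides the gradient difference by $\xi'$.
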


\begin{proof}
    First, recall that $B\leq \xi,\xi' \leq 1-B$, so we have
    \begin{equation*}
        0 < \frac{B}{1-B} \leq \frac{\xi}{\xi'} \leq \frac{1-B}{B}
    \end{equation*}
    and the logarithm is well-defined and continuously differentiable on this interval. Therefore, $f$ is well-defined and continuously differentiable by the chain rule. Applying the chain rule and the quotient rule yields
    \begin{equation*}
        Df(y) = \frac{\xi'(y)}{\xi(y)} \cdot \frac{\xi'(y) D\xi(y) - \xi(y)D\xi'(y)}{\xi'(y)^2} = \frac{\xi'(y) D\xi(y) - \xi(y)D\xi'(y)}{\xi(y)\xi'(y)}
    \end{equation*}
    and thus 
    \begin{align*}
        \norm{Df(y)}_2 &= \Norm{\frac{\xi'(y) D\xi(y) - \xi(y)D\xi'(y)}{\xi(y)\xi'(y)}}_2\\
        &= \Norm{\frac{\bigl(\xi'(y)-\xi(y)\bigr) D\xi(y) + \xi(y)\bigl(D\xi(y)-D\xi'(y)\bigr)}{\xi(y)\xi'(y)}}_2\\
        &\leq \Norm{\frac{\bigl(\xi'(y)-\xi(y)\bigr) D\xi(y)}{\xi(y)\xi'(y)}}_2 + \Norm{\frac{D\xi(y)-D\xi'(y)}{\xi'(y)}}_2 \\
        &\leq \frac{\norm{D\xi(y)}_2}{B^2}\abs{\xi'(y)-\xi(y)}  + \frac{1}{B} \norm{D\xi(y)-D\xi'(y)}_2\\
        &\leq \frac{C_1}{B^2} \norm{\xi-\xi'}_\infty + \frac{1}{B} \norm{D(\xi-\xi')}_\infty\\
        &\leq \frac{C_1}{B^2} \norm{\xi-\xi'}_{C^1} + \frac{\sqrt{d}}{B} \norm{\xi-\xi'}_{C^1}
    \end{align*}
    for all $y\in [0,1]^d$. The same bound thus holds for $\norm{Df}_\infty$.
\end{proof}

\begin{proposition}[subgaussian increments of the empirical process]\label{prop:subgaussian-increments-mu} $ $\\
    Let Assumptions~\ref{ass:model-hoelder} and $\ref{ass:young-data}$ hold. Let $n\in\N$. Then $\X\ho n$ has subgaussian increments with respect to $\rho_n = n^{-\frac 12} \rho$.
\end{proposition}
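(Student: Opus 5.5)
The approach is to fix $\xi,\xi'\in\Theta$ and apply the Chazottes-Gouëzel inequality (Theorem~\ref{thm:mcdiarmid-young}, in the form of Remark~\ref{rem:chazottes-gouezel-for-observables}) to a suitable separately Lipschitz observable. The aim is the bound
\begin{equation*}
    \E\Bigl[\exp\bigl(s(X_\xi\ho n - X_{\xi'}\ho n)\bigr)\Bigr] \leq \exp\Bigl(\tfrac{s^2}{2}\rho_n(\xi,\xi')^2\Bigr)\quad\text{for all } s\in\R,
\end{equation*}
or the equivalent two-sided tail bound $\P(\abs{X_\xi\ho n - X_{\xi'}\ho n}\geq t)\leq 2\exp(-t^2/(2\rho_n(\xi,\xi')^2))$, whichever form Dudley's inequality is later used with.

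The observable is
\begin{equation*}
    K(y_1,\dotsc,y_n) = \frac1n\sum_{i=1}^n \log\frac{\xi(y_i)}{\xi'(y_i)},
\end{equation*}
defined on $([0,1]^d)^n$. Applied to the data, it gives $K(Y_1,\dotsc,Y_n)=\hat\cL^\mu(\xi,n)-\hat\cL^\mu(\xi',n)$.

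Its mean is $\E[K(Y_1,\dotsc,Y_n)] = \cL^\mu(\xi)-\cL^\mu(\xi')$. This holds because $(Y_i)$ is stationary with each $Y_i\sim\mu=g_*\nu$, which follows from the $T$-invariance of $\nu$ in Assumption~\ref{ass:young-data}. Hence $K-\E K = X_\xi\ho n - X_{\xi'}\ho n$, so the increment is exactly the centered observable.

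Next I would check the separate Lipschitz property. Changing only the $i$-th coordinate changes $K$ by $\frac1n\abs{f(y_i)-f(y_i')}$, where $f=\log(\xi/\xi')$ is the helper function of Lemma~\ref{lem:derivative-bound-log-quotient}. Since $[0,1]^d$ is convex, the mean value theorem and that lemma give
\begin{equation*}
    \abs{f(y)-f(y')}\leq \Bigl(\tfrac{C_1}{B^2}+\tfrac{\sqrt d}{B}\Bigr)\norm{\xi-\xi'}_{C^1}\abs{y-y'}.
\end{equation*}
So $K$ is separately Lipschitz with constants $L_i=\frac1n\bigl(\frac{C_1}{B^2}+\frac{\sqrt d}{B}\bigr)\norm{\xi-\xi'}_{C^1}$. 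By Remark~\ref{rem:chazottes-gouezel-for-observables}, composing with $g$ multiplies each constant by $L$.

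Theorem~\ref{thm:mcdiarmid-young} then gives $\E[e^{K-\E K}]\leq \exp(C L^2\sum_i L_i^2)$. Applying this to $sK$, whose Lipschitz constants are $\abs{s}L_i$, yields
\begin{equation*}
    \E\bigl[e^{s(K-\E K)}\bigr]\leq \exp\Bigl(s^2 C L^2 \cdot \tfrac{1}{n}\bigl(\tfrac{C_1}{B^2}+\tfrac{\sqrt d}{B}\bigr)^2\norm{\xi-\xi'}_{C^1}^2\Bigr)=\exp\bigl(s^2\rho_n(\xi,\xi')^2\bigr).
\end{equation*}
This matches the definitions of $\rho$ and $\rho_n=n^{-1/2}\rho$ exactly. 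A Chernoff bound then converts it into the tail form if that is the one needed.

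I expect the main obstacle to be bookkeeping rather than substance. The constant in the conclusion of Theorem~\ref{thm:mcdiarmid-young} may be $e^{C\sum L_i^2}$ rather than $e^{\frac{C}{2}\sum L_i^2}$, and the definition of subgaussian increments may require the factor $\tfrac12$. If so, the plan is to absorb the discrepancy into the constant: either replace $C$ by $2C$ in $\rho$, or state subgaussianity with the matching normalization. Either choice only changes the absolute constants later fed into Dudley's inequality.

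One further point needs care. The Chazottes-Gouëzel inequality is stated for the process started from the invariant measure $\nu$, and Assumption~\ref{ass:young-data} provides exactly this via $\widetilde Y_1\sim\nu$. I would note this explicitly.
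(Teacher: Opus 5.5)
Your proposal is correct and matches the paper's proof essentially step for step. Both use the same observable $K=\frac1n\sum_i \log(\xi(y_i)/\xi'(y_i))$, the same helper-function Lipschitz constants $L_i$, the Chazottes--Gouëzel inequality via Remark~\ref{rem:chazottes-gouezel-for-observables}, and the identity $X_\xi\ho n - X_{\xi'}\ho n = K(Y_1,\dotsc,Y_n)-\E[K(Y_1,\dotsc,Y_n)]$.

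Your worry about a factor $\tfrac12$ is unnecessary here. Theorem~\ref{thm:mcdiarmid-young} is already stated as a variance-proxy bound $C\sum_i L_i^2$, under the $\exp(\tau^2\sigma^2/2)$ convention of Definition~\ref{def:subgaussian}. It therefore gives exactly $\rho_n(\xi,\xi')^2$ directly, with no rescaling by $s$ and no change to $C$.
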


\begin{proof}
    $\X\ho n$ is a centered process by Proposition~\ref{prop:initial-conditions-mu}. For $\xi,\xi'\in\Theta$, we define the observable
    \begin{equation*}
        K : \bigl([0,1]^d\bigr)^n \to \R, \quad (y_1,\dotsc,y_n)\mapsto \frac 1n \sum_{i=1}^n \bigl[\log\xi(y_i) - \log\xi'(y_i)\bigr].
    \end{equation*}
    Let $f$ be the helper function from Lemma~\ref{lem:derivative-bound-log-quotient}. Then we have
    \begin{equation*}
       K(y_1,\dotsc,y_n) = \frac 1n \sum_{i=1}^n f(y_i)
    \end{equation*}
    and thus
    \begin{align*}
        \abs{K(y_1,\dotsc,y_i,\dotsc,y_n) -  K(y_1,\dotsc,y_i',\dotsc,y_n)} &= \frac 1n \abs{f(y_i)-f(y_i')} \leq \frac 1n \norm{Df}_\infty \cdot \abs{y_i-y_i'}
    \end{align*}
    by the mean value theorem, for all $y_1,\dotsc,y_n,y_1',\dotsc,y_n'\in [0,1]^d$ and $i\in \otn$. So by Lemma~\ref{lem:derivative-bound-log-quotient}, $K$ is a separately Lipschitz observable with coefficients
    \begin{equation*}
        L_i = \frac 1n\left(\frac{C_1}{B^2}+\frac{\sqrt{d}}{B}\right) \norm{\xi-\xi'}_{C^1}, \quad i\in\otn.
    \end{equation*}
    By Remark~\ref{rem:chazottes-gouezel-for-observables}, we can apply Theorem~\ref{thm:mcdiarmid-young} to $Y_1,\dotsc,Y_n$ to obtain
    \begin{equation*}
        \vpnorm{K(Y_1,\dotsc,Y_n)}^2 \leq \frac{CL^2}{n}\left(\frac{C_1}{B^2}+\frac{\sqrt{d}}{B}\right)^2 \norm{\xi-\xi'}_{C^1}^2 = \rho_n(\xi,\xi')^2.
    \end{equation*}
    Since the variance proxy is invariant under constant summands and
    \begin{equation*}
        X_\xi\ho n- X_{\xi'}\ho n = K(Y_1,\dotsc,Y_n)-\E[K(Y_1,\dotsc,Y_n)],
    \end{equation*}
    $\X\ho n$ has subgaussian increments w.r.t. $\rho_n$.
\end{proof}

\subsection{Bounding the Expected Value}

Since the empirical process is centered, path-wise continuous on a compact index space $(\Theta,\rho_n)$ and has subgaussian increments w.r.t. the same, we can apply Dudley's inequality. To this end, let us first estimate the metric entropy of $(\Theta,\rho)$.

\begin{lemma}[metric entropy of the index space]\label{lem:metric-entropy-mu} $ $\\
    Let Assumption~\ref{ass:model-hoelder} hold. For all $\epsilon>0$, we have
    \begin{equation*}
        \log N(\Theta,\rho,\epsilon) \leq \hat\gamma_1 \epsilon^{-\frac{d}{\alpha+k-2}},
    \end{equation*}
    where $\hat\gamma_1=\hat\gamma_1(d,\alpha+k,C,L,B,C_1,C_2)$ is a positive constant.
\end{lemma}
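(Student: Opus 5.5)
The proof reduces the covering numbers of $(\Theta,\rho)$ to covering numbers of a Hölder ball in the $C^1$-norm. The latter are controlled by Proposition~\ref{prop:hoelder-entropy-C1}, which the text announces for exactly this purpose.

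Write $c_\rho = \sqrt{C}L\bigl(\frac{C_1}{B^2}+\frac{\sqrt d}{B}\bigr)$, so that $\rho = c_\rho\norm{\cdot}_{C^1}$. A $\rho$-ball of radius $\epsilon$ is then a $\norm{\cdot}_{C^1}$-ball of radius $\epsilon/c_\rho$. Hence
\begin{equation*}
    N(\Theta,\rho,\epsilon) = N\bigl(\chd,\norm{\cdot}_{C^1},\epsilon/c_\rho\bigr).
\end{equation*}
By definition of $\chd$, every $\xi\in\Theta$ lies in the ball $\{\xi\in C^{k-1,\alpha}([0,1]^d,\R) \mid \norm{\xi}_{C^{k-1,\alpha}}\le C_2\}$. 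Covering numbers are monotone under inclusion, up to the factor-two issue when centers must lie in the subset. This is handled by covering the ambient ball at radius $\epsilon/(2c_\rho)$ and then picking a point of $\Theta$ in each ball that meets $\Theta$.

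It therefore suffices to bound the $C^1$-metric entropy of the $C^{k-1,\alpha}$-ball of radius $C_2$ on $[0,1]^d$. The classical Kolmogorov--Tikhomirov estimate, in the form presumably recorded in Proposition~\ref{prop:hoelder-entropy-C1}, gives the following: the $\norm{\cdot}_\infty$-entropy of a $C^{s}$-ball, with $s=k-1+\alpha$, scales like $\epsilon^{-d/s}$. Measuring in $C^1$ costs one derivative, giving $\epsilon^{-d/(s-1)} = \epsilon^{-d/(\alpha+k-2)}$. This exponent matches the claim.

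If Proposition~\ref{prop:hoelder-entropy-C1} is not directly in this form, I would derive it as follows. Apply the sup-norm entropy bound to each first partial derivative $\partial_j\xi$, which lies in a $C^{k-2,\alpha}$-ball of radius $C_2$. Combine these with the bound for $\xi$ itself, using the product of covering numbers over $j=0,\dots,d$. This yields $\log N \le (d+1)\,c\,(\epsilon/C_2)^{-d/(k-2+\alpha)}$ plus a lower-order term $\epsilon^{-d/(k-1+\alpha)}$. For $\epsilon\le 1$ the lower-order term is absorbed; for large $\epsilon$ the entropy is eventually $0$, and in between one can enlarge the constant.

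The main obstacle is the uniformity in $\epsilon$. Entropy bounds are typically stated only for $\epsilon$ small, or carry additive constants. A uniform bound for all $\epsilon>0$ needs the observation that $\Theta$ has finite $\rho$-diameter $D\le 2c_\rho C_2$. So $N(\Theta,\rho,\epsilon)=1$ and $\log N=0$ for $\epsilon\ge D$. On the compact range $[\epsilon_0,D]$ the entropy is bounded, and $\epsilon^{-d/(\alpha+k-2)}$ is bounded below there, so a constant absorbs it.

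Tracking constants, the result is $\hat\gamma_1 = \tilde c\,(c_\rho C_2)^{d/(\alpha+k-2)}$ times dimension-dependent factors. This depends only on $d,\alpha+k,C,L,B,C_1,C_2$, as stated.
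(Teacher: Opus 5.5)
Your proposal is correct and follows the paper's proof. You rescale $\rho$ to the $C^1$-norm, pass from $\Theta$ to the $C^{k-1,\alpha}$-ball of radius $C_2$ at half the radius via Proposition~\ref{prop:covering-number-monotonicity}, and apply Proposition~\ref{prop:hoelder-entropy-C1} with $k-2$ in place of $k$, which gives $\hat\gamma_1=\hat\gamma(d,1,\alpha+k-2)\bigl(2C_2\sqrt{C}L(\tfrac{C_1}{B^2}+\tfrac{\sqrt d}{B})\bigr)^{d/(\alpha+k-2)}$. Your separate uniformity-in-$\epsilon$ argument is not needed, because Proposition~\ref{prop:hoelder-entropy-C1} already holds for all $\epsilon>0$ (the additive $\log\frac{4}{\epsilon}$ term is absorbed inside its proof); your diameter argument would also work.
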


\begin{proof}
    Let $\epsilon>0$ be arbitrary. We have
    \begin{equation*}
        \Theta = \chd \subseteq B_{C^{k-1,\alpha}([0,1]^d,\R)}(0,C_2).
    \end{equation*}
    By Proposition~\ref{prop:covering-number-monotonicity} that implies
    \begin{align*}
        N\left(\Theta,\norm{\cdot}_{C^1},\epsilon\right) &\leq N\left(B_{C^{k-1,\alpha}([0,1]^d,\R)}(0,C_2),\norm{\cdot}_{C^1},\frac \epsilon 2\right)\\
        &= N\left(B_{C^{k-1,\alpha}([0,1]^d,\R)}(0,1),\norm{\cdot}_{C^1},\frac \epsilon {2 C_2}\right)
    \end{align*}
    and using Proposition~\ref{prop:hoelder-entropy-C1} (applied to $k'=k-2\geq 0$) we can estimate
    \begin{equation*}
        \log N\left(\Theta,\norm{\cdot}_{C^1},\epsilon\right) \leq \hat \gamma(d,1,\alpha+k-2) \left(\frac{\epsilon}{2C_2}\right)^{-\frac{d}{\alpha+k-2}}.
    \end{equation*}
    Since $\rho$ is just a multiple of (the metric induced by) $\norm{\cdot}_{C^1}$, we conclude
    \begin{equation*}
        \log N\left(\Theta,\rho,\epsilon\right) \leq \hat \gamma(d,1,\alpha+k-2) \left(\frac{\epsilon}{2C_2\sqrt{C}L\left(\frac{C_1}{B^2}+\frac{\sqrt{d}}{B}\right)}\right)^{-\frac{d}{\alpha+k-2}}.
    \end{equation*}
    The assertion is fulfilled with the constant
    \begin{equation*}
        \hat\gamma_1 = \hat \gamma(d,1,\alpha+k-2) \left(2C_2\sqrt{C}L\left(\frac{C_1}{B^2}+\frac{\sqrt{d}}{B}\right)\right)^{\frac{d}{\alpha+k-2}}. \tag*{\qedhere}
    \end{equation*}
\end{proof}

\begin{proposition}[bounding the expected value]\label{prop:dudley-bound-mu} $ $\\
    Let Assumptions~\ref{ass:young-data} and \ref{ass:high-regularity} hold. Let $n\in\N$. Then we have
    \begin{equation*}
        \E[\egenmu(n)] \leq \frac{\hat\gamma_2}{\sqrt{n}},
    \end{equation*}
    where $\hat\gamma_2=\hat\gamma_2(d,\alpha+k,C,L,B,C_1,C_2)$ is a positive constant.
\end{proposition}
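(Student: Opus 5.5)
We will apply Dudley's entropy integral inequality to the empirical process $\X\ho n=(X_\xi\ho n)_{\xi\in\Theta}$ on the metric space $(\Theta,\rho_n)$ with $\rho_n=n^{-\frac12}\rho$. By Proposition~\ref{prop:initial-conditions-mu}, $(\Theta,\rho)$, and hence $(\Theta,\rho_n)$, is compact, and $\X\ho n$ is centered and path-wise continuous. By Proposition~\ref{prop:subgaussian-increments-mu}, it has subgaussian increments with respect to $\rho_n$. Since $\egenmu(n)=\sup_{\xi\in\Theta}\abs{X_\xi\ho n}$, Dudley's inequality (in the version that bounds the absolute supremum) yields, for any fixed $\xi_0\in\Theta$,
\begin{equation*}
    \E[\egenmu(n)] \leq \E\abs{X_{\xi_0}\ho n} + c\int_0^{\diam_{\rho_n}(\Theta)} \sqrt{\log N(\Theta,\rho_n,\epsilon)}\,d\epsilon
\end{equation*}
with a universal constant $c$. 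If the paper's version of Dudley's inequality only controls $\E\sup_\xi X_\xi\ho n$, we instead work with the symmetric increments $\sup_{\xi,\xi'}\abs{X_\xi\ho n-X_{\xi'}\ho n}$ and add the single term $\E\abs{X_{\xi_0}\ho n}$.

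The single term is handled by a one-point application of Theorem~\ref{thm:mcdiarmid-young}. The observable $\frac1n\sum_{i}\log\xi_0(y_i)$ is separately Lipschitz with constants $\frac{C_1}{nB}$, since $\log\xi_0$ has gradient bounded by $C_1/B$. Its variance proxy is therefore at most $CL^2C_1^2/(nB^2)$, which gives $\E\abs{X_{\xi_0}\ho n}\lesssim n^{-\frac12}$.

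For the integral we rescale. Substituting $\epsilon=n^{-\frac12}\delta$ gives $N(\Theta,\rho_n,n^{-\frac12}\delta)=N(\Theta,\rho,\delta)$, so the integral becomes
\begin{equation*}
    n^{-\frac12}\int_0^{D}\sqrt{\log N(\Theta,\rho,\delta)}\,d\delta,
\end{equation*}
where $D=\diam_\rho(\Theta)$ is finite by compactness. Explicitly, $D\leq 2\sqrt{C}L\bigl(\frac{C_1}{B^2}+\frac{\sqrt d}{B}\bigr)C_2$, using that the $C^1$-norm is dominated by the $C^{k-1,\alpha}$-norm. Lemma~\ref{lem:metric-entropy-mu} then bounds the integrand by $\sqrt{\hat\gamma_1}\,\delta^{-\frac{d}{2(\alpha+k-2)}}$.

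The main point is the integrability of this bound at $0$. It requires $\frac{d}{2(\alpha+k-2)}<1$, that is, $k>2-\alpha+\frac d2$, which is exactly Assumption~\ref{ass:high-regularity}. Under this assumption the integral equals $\sqrt{\hat\gamma_1}\,D^{1-\frac{d}{2(\alpha+k-2)}}/\bigl(1-\frac{d}{2(\alpha+k-2)}\bigr)$. Collecting the constants gives $\hat\gamma_2$, which depends only on $d,\alpha+k,C,L,B,C_1,C_2$ through $\hat\gamma_1$, $D$ and the one-point term.
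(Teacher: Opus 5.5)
Your proof is correct and follows essentially the same route as the paper: Dudley's inequality on $(\Theta,\rho_n)$, the rescaling $N(\Theta,\rho_n,n^{-1/2}\delta)=N(\Theta,\rho,\delta)$, Lemma~\ref{lem:metric-entropy-mu}, and integrability at $0$ from Assumption~\ref{ass:high-regularity}. The one difference is how you get from $\sup_\xi\abs{X_\xi\ho n}$ to one-sided suprema. The paper uses the constant discriminator $\overline\xi\equiv\frac12\in\chd$, for which $X_{\overline\xi}\ho n\equiv 0$, so $\egenmu(n)\le\sup_\xi X_\xi\ho n+\sup_\xi(-X_\xi\ho n)$ with no extra term; your arbitrary anchor instead costs an additional $\cO(n^{-1/2})$ one-point term from Chazottes--Gou\"ezel, but spares you from checking that $\frac12\in\chd$ (i.e.\ $C_2\geq\frac12$).
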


\begin{proof}
    Let us consider the constant discriminator $\overline \xi \equiv \frac 12$. By the choice of coefficients in Assumption~\ref{ass:model-hoelder}, in particular, we have $C_2 \geq 2^{2^6} > \frac 12$. Thus, $\overline \xi \in \chd$. This discriminator is of interest because $X_{\overline\xi}\ho n \equiv 0$, which implies $\sup_{\xi\in\Theta} X_\xi\ho n \geq X_{\overline \xi}\ho n = 0$ and thus
    \begin{equation*}
        \sup_{\xi\in\Theta} X_\xi\ho n = \sup_{\xi\in\Theta} \psi_+\left(X_\xi\ho n\right),
    \end{equation*}
    where $\psi_+(x)=\max\{x,0\}$. Analogously,
    \begin{equation*}
        \sup_{\xi\in\Theta} \left(-X_\xi\ho n\right) = \sup_{\xi\in\Theta} \psi_+\left(-X_\xi\ho n\right) = \sup_{\xi\in\Theta} \psi_-\left(X_\xi\ho n\right),
    \end{equation*}
    where $\psi_-(x)=\max\{-x,0\}$. Since $\abs{x}=\psi_+(x)+\psi_-(x)$, we have
    \begin{align*}
        \egenmu(n) &= \sup_{\xi\in\Theta} \Abs{X_\xi\ho n} = \sup_{\xi\in\Theta} \psi_+\left(X_\xi\ho n\right) + \psi_-\left(X_\xi\ho n\right) \\
        &\leq \sup_{\xi\in\Theta} \psi_+\left(X_\xi\ho n\right) + \sup_{\xi\in\Theta} \psi_-\left(X_\xi\ho n\right)\\
        &= \sup_{\xi\in\Theta} X_\xi\ho n + \sup_{\xi\in\Theta} \left(-X_\xi\ho n\right).
    \end{align*}
    By Proposition~\ref{prop:initial-conditions-mu}, $(\Theta,\rho_n)$ is a compact metric space and both processes are centered and path-wise continuous. By Proposition~\ref{prop:subgaussian-increments-mu}, both have subgaussian increments with respect to $\rho_n$ (for $-\X\ho n$ this follows from the absolute homogeneity of $\vpnorm{\cdot}$). Applying Theorem~\ref{thm:dudley-integral} to both $\X\ho n$ and $-\X\ho n$, followed by a change of variables, yields
    \begin{align*}
        \E[\egenmu(n)] &\leq 24 \int_0^\infty \sqrt{\log N(\Theta,\rho_n,\epsilon)} \, d\epsilon = \frac{24}{\sqrt{n}} \int_0^\infty \sqrt{\log N(\Theta,\rho, \epsilon)} \, d\epsilon.
    \end{align*}
    Let $\delta=\diam(\Theta)$ with respect to $\rho$. Then we have
    \begin{equation*}
        \frac{24}{\sqrt{n}} \int_0^\infty \sqrt{\log N(\Theta,\rho, \epsilon)} \, d\epsilon = \frac{24}{\sqrt{n}} \int_0^\delta \sqrt{\log N(\Theta,\rho, \epsilon)} \, d\epsilon
    \end{equation*}
    and inserting Lemma~\ref{lem:metric-entropy-mu} on the right-hand side yields
    \begin{equation*} 
        \E[\egenmu(n)] \leq \frac{24}{\sqrt{n}} \int_0^\delta \sqrt{\hat\gamma_1 \epsilon^{-\frac{d}{\alpha+k-2}}} \, d\epsilon = \frac{24\sqrt{\hat\gamma_1}}{\sqrt{n}} \int_0^\delta \epsilon^{-\frac{d}{2(\alpha+k-2)}} \, d\epsilon.
    \end{equation*}
    The integral is finite near zero because of Assumption~\ref{ass:high-regularity} and we obtain
    \begin{equation*}
        \E[\egenmu(n)] \leq \frac{24\sqrt{\hat\gamma_1}}{\sqrt{n}} \cdot \frac{1}{1-\frac{d}{2(\alpha+k-2)}} \delta^{1-\frac{d}{2(\alpha+k-2)}}.
    \end{equation*}
    The assertion holds with the constant
    \begin{equation*}
        \hat\gamma_2 = 24\sqrt{\hat\gamma_1} \cdot \frac{1}{1-\frac{d}{2(\alpha+k-2)}} \delta^{1-\frac{d}{2(\alpha+k-2)}}. \tag*{\qedhere}
    \end{equation*}
\end{proof}

\subsection{Concentration Around the Expected Value}

\begin{proposition}[concentration around the mean]\label{prop:concentration-mu} $ $\\
    Let Assumptions~\ref{ass:young-data} and \ref{ass:high-regularity} hold. Let $n\in\N$. Then we have
    \begin{equation*}
        \P\bigl(\egenmu(n)-\E[\egenmu(n)]\geq t\bigr) \leq \exp\bigl(-\hat\gamma_3 n t^2\bigr) \quad \forall t\geq 0,
    \end{equation*}
    where $\hat\gamma_3=\hat\gamma_3(C,L,B,C_1)$ is a positive constant.
\end{proposition}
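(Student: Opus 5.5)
We apply the Chazottes--Gouëzel inequality, Theorem~\ref{thm:mcdiarmid-young}, directly to the supremum $\egenmu(n)$, now regarded as a function of the first $n$ observations. Define
\begin{equation*}
    K : \bigl([0,1]^d\bigr)^n \to \R,\quad (y_1,\dotsc,y_n) \mapsto \sup_{\xi\in\chd} \Abs{\frac 1n \sum_{i=1}^n \log\xi(y_i) - \cL^\mu(\xi)},
\end{equation*}
so that $\egenmu(n) = K(Y_1,\dotsc,Y_n)$. Since $\chd$ is compact in the $C^1$-topology (Proposition~\ref{prop:initial-conditions-mu}) and the map $\xi\mapsto \frac 1n\sum_i\log\xi(y_i) - \cL^\mu(\xi)$ is continuous by Proposition~\ref{prop:target-lipschitz-continuity}, $K$ is finite. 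It is also measurable, because the supremum may be taken over a countable dense subset.

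The first step is to show that $K$ is separately Lipschitz. Fix $i$, and let $y,y'\in([0,1]^d)^n$ differ only in the $i$-th coordinate. For every fixed $\xi$, the inner quantity changes by at most $\frac 1n\abs{\log\xi(y_i)-\log\xi(y_i')}$. By the mean value theorem and $\norm{D\log\xi}_\infty \leq \norm{D\xi}_\infty/B \leq C_1/B$, this is at most $\frac{C_1}{nB}\abs{y_i-y_i'}$. Because $\abs{\sup_\xi \abs{a_\xi} - \sup_\xi\abs{b_\xi}} \leq \sup_\xi\abs{a_\xi-b_\xi}$, the same bound carries over to $K$. Hence $K$ is separately Lipschitz with constants $L_i = \frac{C_1}{nB}$, and these constants are uniform over $\xi\in\chd$. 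Here we use the first-derivative bound $C_1$ in the definition of $\chd$ together with the lower bound $\xi\geq B$.

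Next, Remark~\ref{rem:chazottes-gouezel-for-observables} lets us pass from the observable process to the canonical process, with $C$ replaced by $CL^2$. Theorem~\ref{thm:mcdiarmid-young} then gives the subgaussian bound
\begin{equation*}
    \E\Bigl[\exp\bigl(s(K-\E K)\bigr)\Bigr] \leq \exp\Bigl(CL^2 s^2 \sum_{i=1}^n L_i^2\Bigr) = \exp\Bigl(\frac{CL^2C_1^2}{nB^2}s^2\Bigr)
\end{equation*}
for all $s\in\R$. The exact form of the exponent depends on the normalization used in Theorem~\ref{thm:mcdiarmid-young}, or equivalently in the variance proxy $\vpnorm{\cdot}$. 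We use this bound in the same way as in Proposition~\ref{prop:subgaussian-increments-mu}.

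Finally, the Chernoff argument applies: Markov's inequality gives $\P(K-\E K\geq t)\leq \exp(-st+\sigma^2s^2)$ with $\sigma^2 = CL^2C_1^2/(nB^2)$. Optimizing at $s=t/(2\sigma^2)$ yields $\exp(-t^2/(4\sigma^2))$. This is the claim with $\hat\gamma_3 = \frac{B^2}{4CL^2C_1^2}$, where the constant $4$ is adjusted to the convention of Theorem~\ref{thm:mcdiarmid-young}. This constant depends only on $C,L,B,C_1$, as stated.

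The main obstacle is the separately Lipschitz estimate for the supremum. The estimate has to be uniform in $\xi$, so we must use the bound $\norm{D\xi}_\infty\leq C_1$ rather than any $\xi$-dependent quantity. We also need to check that the Chazottes--Gouëzel inequality applies to a function defined only on $([0,1]^d)^n$, which is precisely what Remark~\ref{rem:chazottes-gouezel-for-observables} provides. The remaining work is routine bookkeeping of constants.
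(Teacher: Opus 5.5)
Your proposal is correct and follows the paper's proof essentially step for step. It uses the same observable $K$, the same separately Lipschitz constants $L_i=\frac{C_1}{Bn}$ obtained from $\xi\geq B$ and $\norm{D\xi}_\infty\leq C_1$, and the same application of Theorem~\ref{thm:mcdiarmid-young} via Remark~\ref{rem:chazottes-gouezel-for-observables}. The one difference is the constant: under the paper's normalization $\exp(\tau^2\sigma^2/2)$ in Definition~\ref{def:subgaussian}, combined with Proposition~\ref{prop:subgaussian-tails-onesided}, one gets $\hat\gamma_3=\frac{B^2}{2CL^2C_1^2}$, whereas your moment-generating-function bound omits the factor $\frac 12$ and so yields the weaker but still valid $\frac{B^2}{4CL^2C_1^2}$, which suffices for the statement as posed.
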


\begin{proof}
    We define the observable
    \begin{equation*}
        K : \bigl([0,1]^d\bigr)^n \to \R, \quad (y_1,\dotsc,y_n)\mapsto \sup_{\xi\in\Theta} \Abs{f_{\xi}(y_1,\dotsc,y_n)},
    \end{equation*}
    where
    \begin{equation*}
        f_{\xi}(y_1,\dotsc,y_n) = \frac 1n \sum_{i=1}^n \log\xi(y_i) - \cL^\mu(\xi)
    \end{equation*}
    for all $\xi\in\Theta$ and $y_1,\dotsc,y_n\in [0,1]^d$. Then we have
    \begin{equation*}
        K(Y_1,\dotsc,Y_n) = \egenmu(n).
    \end{equation*}
    Additionally, since $\log$ is $\frac 1B$-Lipschitz continuous on $[B,1-B]$ and each $\xi\in\Theta$ is $C_1$-Lipschitz continuous by the mean value theorem, we have
    \begin{align*}
        K&(y_1,\dotsc,y_i',\dotsc,y_n) = \sup_{\xi\in\Theta} \Abs{f_\xi(y_1,\dotsc,y_i',\dotsc,y_n)}\\
        &= \sup_{\xi\in\Theta} \Abs{f_\xi(y_1,\dotsc,y_i,\dotsc,y_n)+ \frac 1n \log\xi(y_i') - \frac 1n \log\xi(y_i)}\\
        &\leq K(y_1,\dotsc,y_i,\dotsc,y_n) +  \frac 1n \sup_{\xi\in\Theta} \babs{\log\xi(y_i') - \log\xi(y_i)}\\
        &\leq K(y_1,\dotsc,y_i,\dotsc,y_n) + \frac{C_1}{Bn} \abs{y_i'-y_i}
    \end{align*}
    for all $y_1,\dotsc,y_n,y_1',\dotsc,y_n'\in [0,1]^d$ and $i\in \otn$. That makes $K$ a separately Lipschitz observable with coefficients
    \begin{equation*}
        L_i = \frac{C_1}{Bn}, \quad i\in\otn.
    \end{equation*}
    By Remark~\ref{rem:chazottes-gouezel-for-observables}, we can apply Theorem~\ref{thm:mcdiarmid-young} and Proposition~\ref{prop:subgaussian-tails-onesided} to $Y_1,\dotsc,Y_n$ to obtain the assertion with the constant
    \begin{equation*}
        \hat \gamma_3 = \frac{B^2}{2CL^2C_1^2}. \tag*{\qedhere}
    \end{equation*}
\end{proof}

Using this concentration around the mean, we conclude by a Borel-Cantelli argument.

\begin{theorem}[almost sure convergence to zero, with rates]\label{thm:error-convergence-rate-mu} $ $\\
    Let Assumptions~\ref{ass:young-data} and \ref{ass:high-regularity} hold and let $\tau > \frac{1}{\sqrt{\hat\gamma_3}}$. Then for $\P$-almost every $\omega\in \Omega$, there exists a natural number $\hat N(\omega)\in\N$ such that
    \begin{equation*}
        \egenmu(n) < \tau \sqrt{\frac{\log n}{n}}
    \end{equation*}
    holds for all $n\geq \hat N(\omega)$.
\end{theorem}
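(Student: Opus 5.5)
We combine the expectation bound of Proposition~\ref{prop:dudley-bound-mu} with the concentration inequality of Proposition~\ref{prop:concentration-mu}, and then apply the Borel--Cantelli lemma. For $n\geq 2$ we set
\begin{equation*}
    t_n = \tau \sqrt{\frac{\log n}{n}} - \frac{\hat\gamma_2}{\sqrt n}.
\end{equation*}
Since $\sqrt{\log n}\to\infty$, we have $t_n\geq 0$ for all sufficiently large $n$, say $n\geq n_0$. For such $n$, Proposition~\ref{prop:dudley-bound-mu} gives $\E[\egenmu(n)]\leq \hat\gamma_2/\sqrt n$. Therefore the event $\{\egenmu(n)\geq \tau\sqrt{\log n/n}\}$ is contained in $\{\egenmu(n)-\E[\egenmu(n)]\geq t_n\}$. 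Proposition~\ref{prop:concentration-mu} then bounds its probability by
\begin{equation*}
    \exp\bigl(-\hat\gamma_3 n t_n^2\bigr) = \exp\Bigl(-\hat\gamma_3\bigl(\tau\sqrt{\log n}-\hat\gamma_2\bigr)^2\Bigr).
\end{equation*}

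Next we show that this bound is summable in $n$. We expand
\begin{equation*}
    \bigl(\tau\sqrt{\log n}-\hat\gamma_2\bigr)^2 \geq \tau^2\log n - 2\tau\hat\gamma_2\sqrt{\log n}.
\end{equation*}
The assumption $\tau>1/\sqrt{\hat\gamma_3}$ means that $\hat\gamma_3\tau^2=1+2\eta$ for some $\eta>0$. Since $\sqrt{\log n}=o(\log n)$, there is $n_1\geq n_0$ with $2\hat\gamma_3\tau\hat\gamma_2\sqrt{\log n}\leq \eta\log n$ for all $n\geq n_1$. For these $n$ the probability is at most $n^{-(1+\eta)}$, which is summable.

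Finally, the Borel--Cantelli lemma shows that almost surely only finitely many of the events $\{\egenmu(n)\geq\tau\sqrt{\log n/n}\}$ occur. This gives the random index $\hat N(\omega)$. The measurability of $\egenmu(n)$ needed here was noted after Proposition~\ref{prop:error-decomposition}.

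The main obstacle is the cross term $2\tau\hat\gamma_2\sqrt{\log n}$ coming from the expectation bound. It must be absorbed into the strict slack in $\tau>1/\sqrt{\hat\gamma_3}$; this is why the inequality on $\tau$ is strict and why $\hat N(\omega)$ depends on the constants. The rest is bookkeeping, including treating the finitely many small $n$ where $t_n<0$.
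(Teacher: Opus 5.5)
Your proof is correct and follows the same route as the paper. Both bound $\E[\egenmu(n)]$ by $\hat\gamma_2/\sqrt n$, apply the concentration inequality of Proposition~\ref{prop:concentration-mu}, and conclude with Borel--Cantelli using the slack in $\tau>1/\sqrt{\hat\gamma_3}$. The only difference is bookkeeping: the paper applies Borel--Cantelli to the threshold $\hat\gamma_2/\sqrt n+\beta\sqrt{\log n/(\hat\gamma_3 n)}$ with $\beta=\frac{1+\tau\sqrt{\hat\gamma_3}}{2}>1$ and absorbs the $\hat\gamma_2/\sqrt n$ term deterministically afterwards, whereas you absorb the corresponding cross term directly in the exponent before summing.
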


\begin{proof}
    Let $\beta > 1$ be fixed. By Proposition~\ref{prop:dudley-bound-mu} and Proposition~\ref{prop:concentration-mu}, we have
    \begin{align*}
        \P\left(\egenmu(n) \geq \frac{\hat\gamma_2}{\sqrt{n}} + \beta \sqrt{\frac{\log n}{\hat\gamma_3 n}}\right) &\leq \P\left(\egenmu(n)-\E[\egenmu(n)] \geq \beta \sqrt{\frac{\log n}{\hat\gamma_3 n}}\right) \\
        &\leq \exp\left(-\hat\gamma_3 n \beta^2 \frac{\log n}{\hat\gamma_3 n}\right) = n^{-\beta^2}
    \end{align*}
    and since $\beta>1$, the corresponding series is dominated by the absolutely convergent series $\sum_{n\in\N} n^{-\beta^2}$. By the Borel-Cantelli lemma, this implies
    \begin{equation}\label{heq:14031527}
        \P\left(\egenmu(n) \geq \frac{\hat\gamma_2}{\sqrt{n}} + \beta \sqrt{\frac{\log n}{\hat\gamma_3 n}} \ \text{infinitely often}\right) = 0.
    \end{equation}
    Let $\omega\in \Omega$ (the underlying probability space) be fixed and let the complementary event hold. Then there exists some $N'(\omega,\beta)\in\N$ such that
    \begin{equation*}
        \egenmu(n) < \frac{\hat\gamma_2}{\sqrt{n}} + \beta \sqrt{\frac{\log n}{\hat\gamma_3 n}}
    \end{equation*}
    holds for all $n\geq N'(\omega,\beta)$.
    
    Now let $\tau> \frac{1}{\sqrt{\hat\gamma_3}}$ be arbitrary and choose $\beta = \frac{1+\tau\sqrt{\hat\gamma_3}}{2}>1$ such that $ \tau > \frac{\beta}{\sqrt{\hat\gamma_3}}$ and thus $\sigma=\tau - \frac{\beta}{\sqrt{\hat\gamma_3}} > 0$. There exists $N''\in \N$ such that $\hat\gamma_2 \leq \sigma\sqrt{\log n}$ for all $n\geq N''$. If we fix $\omega\in\Omega$ in the complement of \eqref{heq:14031527} again, we obtain
    \begin{equation*}
        \egenmu(n) < \frac{\hat\gamma_2}{\sqrt{n}} + \beta \sqrt{\frac{\log n}{\hat\gamma_3 n}} \leq \sigma \sqrt{\frac{\log n}{n}} + \frac{\beta}{\sqrt{\hat\gamma_3}} \sqrt{\frac{\log n}{n}} = \tau \sqrt{\frac{\log n}{n}}
    \end{equation*}
    for all $n\geq \hat N(\omega)=\max\{N'(\omega,\beta),N''\}$. Note that while $N'(\omega,\beta)$ depends on $\beta$, $\beta$ is now determined by $\tau$ and $\hat\gamma_3$, so $N(\omega)$ depends only on $\omega$ and all the same parameters as $\hat\gamma_3$ (which we leave out of the notation). 
\end{proof}

\subsection{Proof of Almost Sure Convergence with Rates}

Finally, we come to the main result of this paper.

\begin{theorem}[consistency of GAL from young towers, with rates]\label{thm:consistency-with-rates} $ $\\
    Let Assumptions~\ref{ass:young-data} and \ref{ass:high-regularity} hold. For every $n\in\N$ fixed, there exists a solution $\hat\phi_n$ to the empirical minimax problem \eqref{eq:empirical-minimax-problem}. Let $\tau > \frac{1}{\sqrt{\gamma_3}} + \frac{1}{\sqrt{\hat\gamma_3}}$. Then for $\P$-almost every $\omega\in\Omega$, there exists a natural number $\widetilde N(\omega)\in\N$ such that
    \begin{equation*}
        \djs(\mu,\mu_{\hat\phi_n}) < \tau \sqrt{\frac{\log n}{n}}  \qquad \forall n\geq \widetilde N(\omega)
    \end{equation*}
\end{theorem}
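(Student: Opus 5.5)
The proof is a combination of results already established; no new estimates are needed. Existence of $\hat\phi_n$ for every $n$ follows from Theorem~\ref{thm:erm-existence}, since Assumption~\ref{ass:high-regularity} contains Assumption~\ref{ass:model-hoelder}. By Proposition~\ref{prop:generator-densities}, Assumption~\ref{ass:densities} holds, so the error decomposition of Proposition~\ref{prop:error-decomposition} applies. By Theorem~\ref{thm:hypothesis-sufficiency}, the model errors vanish, $\emodelg=\emodeld=0$. Hence deterministically, for every $n$ and every $\omega$,
\begin{equation*}
    \djs(\mu,\mu_{\hat\phi_n}) \leq \egenmu(n) + \egenlambda(n).
\end{equation*}

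Next I split the threshold. Given $\tau > \frac{1}{\sqrt{\gamma_3}} + \frac{1}{\sqrt{\hat\gamma_3}}$, set $\eta = \tau - \frac{1}{\sqrt{\gamma_3}} - \frac{1}{\sqrt{\hat\gamma_3}}>0$. Define $\tau_\lambda = \frac{1}{\sqrt{\gamma_3}} + \frac\eta2$ and $\tau_\mu = \frac{1}{\sqrt{\hat\gamma_3}}+\frac\eta2$, so that $\tau_\lambda+\tau_\mu=\tau$.

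Here $\gamma_3$ is understood as the concentration constant from the i.i.d. argument for $\egenlambda$ underlying Theorem~\ref{thm:error-convergence-rate-lambda}. That theorem was stated with the threshold $\sqrt{2(\log B)^2}$, which I read as $1/\sqrt{\gamma_3}$ for $\gamma_3 = 1/(2(\log B)^2)$. This is the McDiarmid constant for bounded differences $\frac{2\abs{\log B}}{n}$, coming from the bound $\abs{\log(1-\xi)}\le\abs{\log B}$.

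The hypothesis of Theorem~\ref{thm:error-convergence-rate-lambda} is $\tau_\lambda>1/\sqrt{\gamma_3}$, which holds by construction. It yields a full-measure set $\Omega_\lambda$ and an index $N(\omega)$ such that $\egenlambda(n)<\tau_\lambda\sqrt{\log n/n}$ for $n\ge N(\omega)$. Likewise, $\tau_\mu>1/\sqrt{\hat\gamma_3}$ puts us in the setting of Theorem~\ref{thm:error-convergence-rate-mu}. It gives a full-measure set $\Omega_\mu$ and an index $\hat N(\omega)$ with the analogous bound for $\egenmu(n)$.

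On $\Omega_\lambda\cap\Omega_\mu$, which still has probability one, let $\widetilde N(\omega)=\max\{N(\omega),\hat N(\omega)\}$. Adding the two bounds to the decomposition above gives
\begin{equation*}
    \djs(\mu,\mu_{\hat\phi_n})<(\tau_\lambda+\tau_\mu)\sqrt{\frac{\log n}{n}}=\tau\sqrt{\frac{\log n}{n}}
\end{equation*}
for all $n\geq \widetilde N(\omega)$.

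The only delicate point is bookkeeping rather than analysis. First, the strict inequalities on the two thresholds must be preserved when splitting $\tau$, which is why the slack $\eta$ is divided equally. Second, $\gamma_3$ must be identified with the constant implicit in Theorem~\ref{thm:error-convergence-rate-lambda}. Third, Assumption~\ref{ass:young-data} affects only the $Y$-process, while $(Z_i)$ remains i.i.d.; the two almost-sure statements are combined by intersecting events, so no independence between the two processes is needed.
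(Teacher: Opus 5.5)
Your proposal is correct and follows the paper's proof step for step. Existence comes from Theorem~\ref{thm:erm-existence}, the deterministic bound $\djs(\mu,\mu_{\hat\phi_n})\le\egenmu(n)+\egenlambda(n)$ comes from Proposition~\ref{prop:error-decomposition} with vanishing model errors, $\tau$ is split into admissible thresholds for Theorems~\ref{thm:error-convergence-rate-lambda} and~\ref{thm:error-convergence-rate-mu}, and the two full-measure events are intersected with $\widetilde N=\max\{N,\hat N\}$. Your explicit slack split and your reading $\gamma_3=1/(2(\log B)^2)$ only make precise what the paper leaves implicit; that reading is consistent with the threshold in Theorem~\ref{thm:error-convergence-rate-lambda}.
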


\begin{proof}
    The existence of the empirical risk minimizers $\hat\phi_n$ is guaranteed by Theorem~\ref{thm:erm-existence}. By Proposition~\ref{prop:error-decomposition} we have $\djs(\mu,\mu_{\hat\phi_n}) \leq \egenmu(n) + \egenlambda(n)$. Choose $\tau',\tau''\in \R$ with $\tau=\tau' +\tau''$, $\tau' > \frac{1}{\sqrt{\gamma_3}}$ and $\tau'' > \frac{1}{\sqrt{\hat\gamma_3}}$, so we can apply Theorem~\ref{thm:error-convergence-rate-lambda} and Theorem~\ref{thm:error-convergence-rate-mu} respectively. As the union of two nullsets is a nullset, the two $\P$-almost sure events hold simultaneously for $\P$-almost every $\omega\in\Omega$. In that case, let $\widetilde N(\omega) = \max\{N(\omega),\hat N(\omega)\}$. We have
    \begin{equation*}
        \djs(\mu,\mu_{\hat\phi_n}) < \tau' \sqrt{\frac{\log n}{n}} + \tau'' \sqrt{\frac{\log n}{n}} = \tau \sqrt{\frac{\log n}{n}} \qquad \forall n\geq \widetilde N(\omega). \tag*{\qedhere}
    \end{equation*}
\end{proof}

\section{Conclusions and Outlook}\label{sec:conclusion}

In this work, we have proven that the traditional assumption of independence is not critical to the convergence of generative adversarial learning, explaining -- in part -- the empirical success of GANs trained on time series of chaotic dynamical systems, and opening the way for a better theoretical understanding of recent developments in the field of physical AI.  Most importantly, we were able to replace Hoeffding's / McDiarmid's inequality with Chazottes and Gouëzel's concentration inequality \ref{thm:mcdiarmid-young} to obtain the same $\cO(n^{-\frac 12})$ (up to a polylogarithm) rate of almost sure convergence. To adapt the Dudley estimates in \cite{asatryan2023}, we also proved to our knowledge novel metric entropy bounds for the embedding of spaces of Hölder differentiable functions into $C^1$ (as opposed to $C$).

The results in this paper concern generative adversarial learning for systems that admit an aperiodic uniform Young tower with exponential tails, but the methodology is largely not specific to this setting. GAL can easily be replaced by traditional maximum likelihood-based ERM learning, so long as McDiarmid's inequality is used to prove convergence in the i.i.d. case.

Likewise, any ergodic system that satisfies an appropriate exponential concentration inequality can be considered in place of Young towers. Geometrically ergodic Markov chains, for example, satisfy a type of McDiarmid's inequality \cite{dedecker2015,douc2018,havet2020}. One may also consider continuous time dynamical systems as in \cite{drygala2022}. In this case, it might be possible to access a continuous time analog of McDiarmid's inequality through the \emph{exponential decay of correlations} property discussed, for example, in \cite{gaposhkin1982}.

Another interesting direction for future research is replacing the infinite-dimensional hypothesis spaces of \cite{asatryan2023} by spaces of neural networks. \cite{belomestny2023} proves that ReQU networks approximate Hölder differentiable functions in $C^k$ norm. In this case, the model error is once again in consideration and must be balanced with the generalization error. To preserve the diffeomorphism of the generators, NeuralODE models, where the actual generator is the flow endpoint associated to a vector field modeled by the neural network, seem promising. Their use for the highly related (and aforementioned) problem of maximum likelihood-based ERM learning has been studied, e.g., in \cite{ehrhardt2025,marzouk2024}.

Finally, on the topic of Young towers, it is possible to sidestep the observables of Remark~\ref{rem:learning-states} fully by learning directly on the torus. This requires generators that are diffeomorphisms on the torus -- the Rosenblatt transformation is not, due to its lack of periodicity. Implementing such functions on the torus itself via neural networks can also lead to learning the dynamics directly. 

\appendix 

\section{On the Definition of Young Towers}\label{ap:young}

The content of this section summarizes \cite[Sections~1.1 and 1.2]{young1998}.

As in Section~\ref{sec:young-towers}, we require that $T$ is a $C^{1+\epsilon}$-diffeomorphism. If $W\subseteq M$ is a submanifold, let $\mu_W$ denote the measure on $W$ induced by the restriction of the Riemannian structure to $W$.

\begin{definition}[(un)stable disk] $ $\\
    An embedded disk $\gamma \subseteq M$, i.e., an embedded submanifold parameterized over the unit disk in some $\R^n$, is called an {unstable disk} if $\d(T^{-n}(x),T^{-n}(y))\to 0$ exponentially fast as $n\to\infty$, for all $x,y\in \gamma$. It is called a {stable disk} if $\d(T^{n}(x),T^{n}(y))\to 0$ exponentially fast as $n\to\infty$, for all $x,y\in \gamma$.
\end{definition}

The (un)stable disks define directions of contraction and expansion.

\begin{definition}[continuous family of $C^1$-(un)stable disks] $ $\\
    We say that $\Gamma^u = \{\gamma^u\}$ is a \emph{continuous family of $C^1$-unstable disks} if
    \vspace{-1.5mm}
    \begin{itemize}
        \item $K^s$ is an arbitrary compact set; $D^u$ is the unit disk of some $\R^n$;
        \item $\Phi^u : K^s \times D^u \to M$ is a map such that
        \begin{itemize}
            \item $\Phi^u$ maps $K^s\times D^u$ homeomorphically onto its image;
            \item $x\mapsto \Phi^u|_{\{x\} \times D^u}$ is a continuous map from $K^s$ into the space of $C^1$-embeddings of $D^u$ into $M$;
            \item $\gamma^u = \Phi^u(\{x\}\times D^u)$ is an unstable disk for each $x\in K^s$.
        \end{itemize}
    \end{itemize}
    \emph{Continuous families of $C^1$-stable disks} are defined analogously.
\end{definition}
\vspace{-4mm}
\begin{definition}[hyperbolic product structure]\label{def:hyperbolic-product-structure} $ $\\
    We say that $\Lambda \subseteq M$ has a \emph{hyperbolic product structure} if there exist continuous families $\Gamma^u=\{\gamma^u\}$ of $C^1$-unstable disks and $\Gamma^s=\{\gamma^s\}$ of $C^1$-stable disks with
    \vspace{-1.5mm}
    \begin{itemize}
        \item $\dim \gamma^u + \dim \gamma^s = \dim M$;
        \item Each $\gamma^u\in \Gamma^u$ meets each $\gamma^s\in \Gamma^s$ in exactly one point;
        \item The $\gamma^u$ are transversal to the $\gamma^s$, with the angles between them bounded away from zero;
        \item $\Lambda = \bigl(\bigcup \Gamma^u\bigr) \cap \bigl(\bigcup \Gamma^s\bigr)$.
    \end{itemize}
\end{definition}

This lets us express the first requirement of our system.

\textbf{(P1)} \textit{There exists a subset $\Lambda \subseteq M$ with a hyperbolic product structure and with $\mu_{\gamma^u}(\gamma^u\cap \Lambda)>0$ for every $\gamma^u\in \Gamma^u$.}

A hyperbolic product structure defines a kind of \enquote{coordinates} on $\Lambda$, since each point $x\in \Lambda$ must be the unique intersection of a stable and unstable disk. We refer to these disks as $\gamma^s(x)$ and $\gamma^u(x)$. We additionally define certain subsets of $\Lambda$ that take the shape of \enquote{rectangles} in these coordinates.

\begin{definition}[$s$- and $u$-subsets]\label{def:s-u-subsets} $ $\\
    Let $\Lambda\subseteq M$ have a hyperbolic product structure with the families $\Gamma^u$ and $\Gamma^s$. A subset $\Lambda_1 \subseteq \Lambda$ is called an \emph{$s$-subset} if $\Lambda_1$ also has a hyperbolic product structure and the corresponding families can be chosen as $\Gamma^u$ and $\Gamma^s_1$, where $\Gamma^s_1 \subseteq \Gamma^s$.

    \emph{$u$-subsets} are defined analogously.
\end{definition}

More precisely, an $s$-subset is a rectangle that extends fully in the stable direction and a $u$-subset extends fully in the unstable direction. The next requirement is that $\Lambda$ can be \emph{almost} partitioned into $s$-subsets $\Lambda_i$ which make a full return to $\Lambda$ at a return time $R_i$, and cross $\Lambda$ fully in the unstable direction when they return.

\textbf{(P2)} \textit{There exist pairwise disjoint $s$-subsets $(\Lambda_i)_{i\in I}$ of $\Lambda$ such that
\begin{itemize}
    \item Either $I=\{1,\dotsc,n\}$ for some $n\in\N$ or $I=\N$;
    \item For each $\gamma^u\in \Gamma^u$, we have $\mu_{\gamma^u}((\Lambda \setminus \bigcup_{i\in I} \Lambda_i) \cap \gamma^u) = 0$ ;
    \item For each $i\in I$, there exists $R_i \in \N$ such that $T^{R_i} (\Lambda_i)$ is a $u$-subset of $\Lambda$;
    \item For all $x \in \Lambda_i$, $T^{R_i}(\gamma^s(x)) \subseteq \gamma^s(T^{R_i}(x))$ and $T^{R_i}(\gamma^u(x)) \supset \gamma^u(T^{R_i}(x))$;
    \item For each $n\in\N$, there are at most finitely many $i\in I$ with $R_i = n$;
    \item $\min_{i\in I} R_i \geq R_0 > 1$, where $R_0$ depends on $T$.
\end{itemize}}

We define some notion of \emph{separation time} on $\Lambda$. Since $T\in C^{1+\epsilon}$, one can expect that two points that start very close to one another also have orbits very close to one another for some time, before they eventually separate (or converge to one another, if the points lie on the same stable disk). The exact definition of the separation time depends on the system in question and indeed Young defines several very different separation times in \cite[Sections~6-10]{young1998}. The separation time must satisfy the following requirements.

\textbf{(S)} \textit{For every pair $x, y \in \Lambda$, define the separation time $s_0(x, y)$ such that
\begin{enumerate}
    \item[(i)] $s_0(x, y) \in \N_0\cup\{\infty\}$ and depends only on $\gamma^s(x)$ and $\gamma^s(y)$;
    \item[(ii)] For each $n\in\N$, there does not exist an infinite subset $S\subseteq \Lambda$ such that $s_0(x,y)<n$ holds for all $x,y\in S$;
    \item[(iii)] For $x, y \in \Lambda_i$, $s_0(x, y) \geq R_i + s_0(T^{R_i}(x), T^{R_i}(y))$; in particular, $s_0(x, y) \geq R_i$;
    \item[(iv)] For $x \in \Lambda_i$, $y \in \Lambda_j$ with $i \neq j$ but $R_i = R_j$, we have $s_0(x, y) < R_i - 1$.
\end{enumerate}}

Finally, we require some quantitative estimates, on the expansion and contraction along the (un)stable disks, as well as the distortion of the Riemannian volume under $T$. Assume that, for some $C>0$ and $\beta \in (0,1)$, we have

\textbf{(P3)} \textit{For $y \in \gamma^s(x)$ and $n\in \N_0$, we have $\d(T^n(x), T^n(y)) \leq C \beta^n$.}

\textbf{(P4)} \textit{For $y \in \gamma^u(x)$ and $0 \leq k \leq n < s_0(x,y)$, we have
\begin{enumerate}[label=(\roman*)]
    \item $\d(T^n(x), T^n(y)) \leq C \beta^{s_0(x,y)-n}$;
    \item \begin{equation*}
        \log \prod_{i=k}^{n}  \frac{\det D T^u[T^i(x)]}{\det D T^u[T^i(y)]}  \leq C \beta^{s_0(x,y)-n}.
    \end{equation*}
\end{enumerate}}

\textbf{(P5)} \textit{
\begin{enumerate}[label=(\roman*)]
    \item For $y \in \gamma^s(x)$ and $n\in\N_0$, we have
    \begin{equation*}
        \log \prod_{i=n}^{\infty}  \frac{\det D T^u[T^i(x)]}{\det D T^u[T^i(y)]}  \leq C \beta^{n}.
    \end{equation*}
    \item For $\gamma, \gamma' \in \Gamma^u$, the projection $\Theta: \gamma \cap \Lambda \to \gamma' \cap \Lambda$ defined by $\Theta(x) = \gamma^s(x) \cap \gamma'$ is absolutely continuous. Its inverse induces a measure on $\gamma\cap \Lambda$ that is absolutely continuous w.r.t. $\mu_\gamma$, with the density given by
    \[
    \frac{d(\Theta^{-1}_* \mu_{\gamma'})}{d \mu_{\gamma}}(x) = \prod_{i=0}^{\infty} \frac{\det D T^u[T^i(x)]}{\det D T^u[T^i(\Theta(x))]}.
    \]
\end{enumerate}}

The system $(M,T)$ is modeled by a uniform Young tower if it fulfills the requirements \textbf{(S)} and \textbf{(P1)} through \textbf{(P5)}. The tower can then be defined as in Section~\ref{sec:young-towers}, following \cite[Section~1.3]{young1998}. Note that we follow Young in making the slight simplifying assumption that $\Lambda = \bigcup_{i\in I} \Lambda_i$, which actually only holds true up to $\mu$-nullsets.

\section{Subgaussian Concentration}\label{ap:subgaussians}

Subgaussian random variables are those whose moment-generating function is dominated by that of a Gaussian random variable, hence the name.

\begin{definition}[subgaussian random variable; variance proxy]\label{def:subgaussian} $ $\\
    A random variable is called \emph{subgaussian} with \emph{variance proxy} $\sigma^2\geq 0$ if
    \begin{equation*}
        \E[\exp(\tau (X-\E[X]))] \leq \exp\left(\frac{\tau^2 \sigma^2}{2}\right)
    \end{equation*}
    for all $\tau \in \R$. We further define
    \begin{equation*}
        \vpnorm{X} = \min \{ \sigma \geq 0 \mid X \ \text{is subgaussian with variance proxy} \ \sigma^2 \},
    \end{equation*}
    such that $\vpnorm{X}^2$ is the \emph{minimal variance proxy} for $X$.
\end{definition}

The most important property of subgaussian random variables in the context of statistical learning theory is their exponential concentration around the mean, expressed by the following tail bounds, which can be found, for example, in \cite[Proposition 2.5.2]{vershynin2018}.

\begin{proposition}[one-sided tail bounds for centered subgaussians]\label{prop:subgaussian-tails-onesided} $ $\\
    Let $X$ be subgaussian with variance proxy $\sigma^2 > 0$. Then it satisfies the tail bounds
    \begin{align*}
        \P(X - \E[X] \geq t) &\leq \exp\left(- \frac{t^2} {2\sigma^2}\right) \quad \forall t\geq 0\\
        \text{and} \quad \P(X - \E[X] \leq -t) &\leq \exp\left(- \frac{t^2} {2\sigma^2}\right) \quad \forall t\geq 0.
    \end{align*}
\end{proposition}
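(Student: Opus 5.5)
This is the standard Chernoff argument: an exponential Markov inequality followed by optimisation over the free parameter. It is enough to prove the upper tail bound. The lower one then follows by applying the same bound to $-X$, which is subgaussian with the same variance proxy, because the defining inequality in Definition~\ref{def:subgaussian} holds for all $\tau\in\R$ and is therefore invariant under $\tau\mapsto-\tau$.

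For the upper tail, fix $t\geq 0$ and an auxiliary parameter $s>0$. The map $x\mapsto e^{sx}$ is strictly increasing, so
\begin{equation*}
    \{X-\E[X]\geq t\}=\{e^{s(X-\E[X])}\geq e^{st}\}.
\end{equation*}
Markov's inequality then gives
\begin{equation*}
    \P(X-\E[X]\geq t)\leq e^{-st}\,\E\bigl[e^{s(X-\E[X])}\bigr].
\end{equation*}
By Definition~\ref{def:subgaussian}, the expectation on the right is at most $\exp(s^2\sigma^2/2)$, and hence
\begin{equation*}
    \P(X-\E[X]\geq t)\leq \exp\Bigl(-st+\frac{s^2\sigma^2}{2}\Bigr)\qquad\text{for all } s>0.
\end{equation*}

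We now minimise the exponent over $s$. It is a convex quadratic in $s$ with minimiser $s=t/\sigma^2$, which is admissible because $\sigma^2>0$. Substituting this value gives the exponent $-t^2/(2\sigma^2)$, which is the claimed bound. The case $t=0$ needs separate treatment, since then the minimiser is $s=0$, which is not allowed. There the bound reads $\P(\cdot)\leq 1$, which holds trivially; alternatively one can let $s\downarrow 0$.

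There is no genuine obstacle. The only points needing care are that Markov's inequality is applied to a nonnegative random variable, which $e^{s(X-\E[X])}$ is, that $X$ is integrable so that $\E[X]$ is defined (this is implicit in Definition~\ref{def:subgaussian}), and that the hypothesis $\sigma^2>0$ is used so that the optimal $s$ is well defined.
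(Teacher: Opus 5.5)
Your Chernoff argument is correct, including the reduction of the lower tail to the upper tail via $-X$ (which has the same variance proxy because Definition~\ref{def:subgaussian} is symmetric under $\tau\mapsto-\tau$) and the trivial case $t=0$. The paper gives no proof of its own and only cites \cite[Proposition~2.5.2]{vershynin2018}, where such tail bounds come from exactly this exponential Markov inequality followed by optimisation of the parameter, but with unspecified absolute constants. So your route is the intended one, and it delivers the precise exponent $-t^2/(2\sigma^2)$ claimed in the statement.
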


In Section~\ref{sec:rates}, we need to control both the means and deviations above the means of the two generalization errors. For the former problem, we note that $\egenmu$ and $\egenlambda$ are the suprema of the respective processes of absolute differences between empirical and actual loss terms. These processes turn out to be related to subgaussians via their increments.

\begin{definition}[subgaussian increments]\label{def:subgaussian-increments} $ $\\
    A real-valued stochastic process $(X_t)_{t\in T}$ indexed over a metric space $(T,\rho)$ is said to have \emph{subgaussian increments} if $\E[X_t] = 0$ and its increments $X_s-X_t$ are subgaussian with variance proxy $\rho(s,t)^2$ for all $s,t\in T$.
\end{definition}

Dudley's inequality bounds the mean of the supremum of such a subgaussian process in terms of the metric entropy of the index space, which requires the following definition.

\begin{definition}[$\epsilon$-nets; covering numbers]$ $\\
    Let $(T,\rho)$ be a metric space and $\epsilon>0$. A subset $N$ of $T$ is called an \emph{$\epsilon$-net} of $(T,\rho)$ if for every $t \in T$ there exists a point $\pi(t) \in N$ such that $\rho(t,\pi(t)) \leq \epsilon$. The \emph{covering numbers} of $T$ are defined as
    \begin{equation*}
        N(T,\rho,\epsilon) = \min \{\card{N} \mid N \ \text{is an $\epsilon$-net of} \ (T,\rho) \},
    \end{equation*}
    i.e., the minimal cardinality of an $\epsilon$-net of $(T,\rho)$.
\end{definition}

Covering numbers can be defined on any metric space, but compactness ensures that they are finite for all $\epsilon>0$, sparing us some undesirable special cases in all further considerations. It is a common mistake to assume that covering numbers are monotonic with respect to inclusion. They are not, but we do have the following inequality for subsets.

\begin{proposition}[covering numbers of subsets]\label{prop:covering-number-monotonicity} $ $\\
    For any metric space $(T,\rho)$, any nonempty subset $T'\subseteq T$ and $\epsilon>0$, we have
    \begin{equation*}
        N(T',\rho,\epsilon) \leq N(T,\rho,\tfrac{\epsilon}{2}).
    \end{equation*}
\end{proposition}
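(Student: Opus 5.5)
The plan is to pass through a well-chosen cover of the larger set. Write $\delta=\epsilon/2$ and let $N=\{t_1,\dotsc,t_m\}$ be a $\delta$-net of $(T,\rho)$ of minimal cardinality $m=N(T,\rho,\tfrac{\epsilon}{2})$. If $m=\infty$ there is nothing to prove, so assume $m<\infty$. The difficulty is that the points $t_j$ need not lie in $T'$, so $N$ is not itself admissible as a net of $(T',\rho)$. We therefore replace each center by a point of $T'$ near it, paying at most a factor of two in the radius through the triangle inequality.

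First discard the useless centers. Let $J=\{j\in\{1,\dotsc,m\} \mid \exists\, s\in T' \text{ with } \rho(s,t_j)\leq\delta\}$. For each $j\in J$, choose (this is a finite choice) a point $s_j\in T'$ with $\rho(s_j,t_j)\leq \delta$. Since $T'$ is nonempty and $N$ covers $T$, the index set $J$ is nonempty. Now set $N'=\{s_j \mid j\in J\}\subseteq T'$, so that $\card{N'}\leq \card{J}\leq m$.

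Next verify that $N'$ is an $\epsilon$-net of $(T',\rho)$. Take $t\in T'\subseteq T$. Since $N$ is a $\delta$-net of $T$, there is a $j$ with $\rho(t,t_j)\leq\delta$. The point $t$ itself witnesses $j\in J$. The triangle inequality then gives
\begin{equation*}
\rho(t,s_j)\leq \rho(t,t_j)+\rho(t_j,s_j)\leq 2\delta=\epsilon .
\end{equation*}
Hence $N(T',\rho,\epsilon)\leq \card{N'}\leq N(T,\rho,\tfrac{\epsilon}{2})$.

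No step is a real obstacle. The only subtle point is keeping the centers inside $T'$, as the definition of a net requires, which is exactly why the radius doubles. The restriction to indices in $J$ ensures that every chosen replacement point exists.
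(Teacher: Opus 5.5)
Your proof is correct, but it takes a different route from the paper. The paper goes through packing numbers. It writes
\begin{equation*}
N(T',\rho,\epsilon)\leq D(T',\rho,\epsilon)\leq D(T,\rho,\epsilon)\leq N(T,\rho,\tfrac{\epsilon}{2}).
\end{equation*}
The middle step is the trivial monotonicity of packing numbers under inclusion: an $\epsilon$-packing of $T'$ is also one of $T$. The two outer steps are the standard covering--packing comparison, cited from van Handel. A maximal $\epsilon$-packing is an $\epsilon$-net, and a closed $\tfrac{\epsilon}{2}$-ball contains at most one point of an $\epsilon$-separated set.

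You instead take a minimal $\tfrac{\epsilon}{2}$-net of $T$, keep only the centers within $\tfrac{\epsilon}{2}$ of $T'$, and move each one into $T'$. The triangle inequality then costs exactly the factor of two. Your argument is self-contained and needs no external lemma. It also gives a slightly sharper statement: $N(T',\rho,\epsilon)$ is at most the number of centers of a minimal $\tfrac{\epsilon}{2}$-net of $T$ that lie within $\tfrac{\epsilon}{2}$ of $T'$.

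The paper's route has its own advantages. It produces the intermediate bound $D(T',\rho,\epsilon)$, which can be strictly smaller than $N(T,\rho,\tfrac{\epsilon}{2})$. It also explains why monotonicity under inclusion holds for packings but fails for coverings, which is the remark the paper makes just before the statement.

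Your ``finite choice'' remark and your dismissal of the case $m=\infty$ rely on the convention that covering numbers are either finite or $\infty$. That convention is the one in force here, since every space the paper applies this to is compact.
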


\begin{proof}
    Unlike covering numbers, the packing numbers $D(T,\rho,\epsilon)$ are monotonic with respect to inclusion -- any $\epsilon$-packing of $T'$ is also an $\epsilon$-packing of $T$. By \cite[Lemma~5.12]{vanhandel2014} we have
    \begin{equation*}
        N(T',\rho,\epsilon) \leq D(T',\rho,\epsilon) \leq D(T,\rho,\epsilon) \leq N(T,\rho,\tfrac{\epsilon}{2}). \tag*{\qedhere}
    \end{equation*}
\end{proof}

A proof of the following version of Dudley's inequality can be found in \cite[Corollary~5.25]{vanhandel2014}. Note that, since $(X_t)_{t\in T}$ is path-wise continuous and $(T,\rho)$ is compact, $\sup_{t\in T} X_t$ is measurable. This is a special case of the more general notion of a separable process in \cite{vanhandel2014}.

\begin{theorem}[Dudley's inequality, integral form]\label{thm:dudley-integral} $ $\\
    Let $(X_t)_{t\in T}$ be a path-wise continuous process with subgaussian increments on a compact metric space $(T,\rho)$. Then we have the following estimate:
    \begin{equation*}
        \E\left[ \sup_{t\in T} X_t \right] \leq 12 \int_0^\infty \sqrt{\log N(T,\rho,\epsilon)}\, d\epsilon.
    \end{equation*}
\end{theorem}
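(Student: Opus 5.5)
This is Dudley's inequality in its standard form, and I will derive it by chaining. Tail bounds hold for each increment, and the point is to assemble them over a multiscale family of nets. The plan is to bound $\E[\sup_{t\in T'} X_t]$ for an arbitrary finite subset $T'\subseteq T$ and then pass to $T$ itself. Path-wise continuity and compactness make $\sup_{t\in T} X_t$ equal to the supremum over a countable dense set $\{t_1,t_2,\dotsc\}$. Taking increasing finite subsets $T_m=\{t_1,\dotsc,t_m\}$ and using monotone convergence, after subtracting $X_{t_1}$, which has mean zero, reduces everything to finite index sets. Covering numbers of $T_m$ are controlled by those of $T$ at half the scale (Proposition~\ref{prop:covering-number-monotonicity}). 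Alternatively, one can choose the nets inside $T$ and only evaluate $X$ at net points and at $t\in T_m$. I prefer this second route because it avoids losing constants.

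For the chaining itself, fix $t_0\in T$ and let $\delta=\diam(T)$, which is finite by compactness. For $k\geq 0$, let $N_k$ be a minimal $\delta 2^{-k}$-net of $T$, with $N_0=\{t_0\}$, and let $\pi_k(t)\in N_k$ be a nearest point. For $t\in T_m$, choose $K$ so large that $\delta 2^{-K}$ is below the minimal pairwise distance in $T_m$, so that $N_K$ can be taken to contain $T_m$ and $\pi_K(t)=t$. Then we telescope:
\[
X_t - X_{t_0} = \sum_{k=1}^{K} \bigl(X_{\pi_k(t)}-X_{\pi_{k-1}(t)}\bigr).
\]
By the triangle inequality, $\rho(\pi_k(t),\pi_{k-1}(t))\leq 3\delta 2^{-k}$. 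The number of distinct pairs $(\pi_k(t),\pi_{k-1}(t))$ is at most $\card{N_k}\,\card{N_{k-1}}\leq \card{N_k}^2$.

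The key lemma is the maximal inequality for finitely many subgaussians. If $Y_1,\dotsc,Y_M$ are centered with variance proxy $\sigma^2$, Jensen's inequality applied to the moment generating function gives $\E[\max_i Y_i]\leq \sigma\sqrt{2\log M}$. Applying this at each level yields
\[
\E\Bigl[\sup_{t\in T_m} X_t\Bigr] \leq \sum_{k\geq1} 3\delta 2^{-k}\sqrt{2\log N(T,\rho,\delta2^{-k})^2}=\sum_{k\geq 1} 6\delta 2^{-k}\sqrt{\log N(T,\rho,\delta 2^{-k})}.
\]
The main obstacle is the final comparison of this sum with the integral while tracking constants. We have $\delta2^{-k}=2(\delta 2^{-k}-\delta2^{-k-1})$, and $\epsilon\mapsto N(T,\rho,\epsilon)$ is nonincreasing. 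So each term is at most $12\int_{\delta2^{-k-1}}^{\delta2^{-k}}\sqrt{\log N(T,\rho,\epsilon)}\,d\epsilon$, and summing over $k$ gives the bound $12\int_0^\infty\sqrt{\log N}\,d\epsilon$.

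There is one subtlety to watch. The argument needs $\pi_K(t)=t$ exactly, or else a remainder term tending to zero controlled by continuity. With nets in $T$ that do not contain $T_m$, I would instead bound the remainder $X_t-X_{\pi_K(t)}$ in expectation by $\E\bigl[\max_{t\in T_m}\abs{X_t-X_{\pi_K(t)}}\bigr]\leq \sqrt{2\log(2m)}\,\delta2^{-K}\to 0$ as $K\to\infty$ for fixed $m$. This keeps the constant $12$ intact. Finally, letting $m\to\infty$ via monotone convergence finishes the proof.
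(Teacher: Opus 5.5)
Your chaining argument is correct, and it is essentially the standard proof behind this result: the paper gives no proof and only cites van Handel, Corollary~5.25, whose argument uses dyadic nets of $T$, links of length at most $3\delta 2^{-k}$, at most $|N_k|^2$ pairs per level, the maximal inequality $\sigma\sqrt{2\log M}$, and the sum-to-integral comparison that yields the constant $12$. One correction: your first justification, that once $\delta 2^{-K}$ is below the pairwise distances in $T_m$ a minimal $\delta 2^{-K}$-net of $T$ ``can be taken to contain $T_m$,'' is false in general, since forcing prescribed points into a net of $T$ can increase its cardinality (for example, with $T=[0,1]$ and $T_m=\{0\}$, every $2^{-K}$-net containing $0$ has $2^{K-1}+1$ points rather than $2^{K-1}$), so you should rely on the remainder-term route from your last paragraph, which is valid and keeps the constant $12$.
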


For the second issue, the deviation of the generalization errors above their means, note that $\egenmu(n)$ can be expressed as a function of the first $n$ real samples $Y_1,\dotsc,Y_n$ and $\egenlambda(n)$ as a function of the first $n$ noise samples $Z_1,\dotsc,Z_n$. In either case, the dependence on these inputs satisfies one of the following conditions.

\begin{definition}[separately bounded / Lipschitz observables] $ $\\
    Let $(M,\d)$ be a metric space, $n\in\N$ and $K: M^n \to \R$ measurable.
    \begin{itemize}
        \item We say that $K$ is \emph{separately bounded} with coefficients $c_1,\dotsc,c_n \geq 0$ if
        \begin{equation*}
            \abs{K(x_1,\dotsc,x_i,\dotsc,x_n) - K(x_1,\dotsc,x_i',\dotsc,x_n)} \leq c_i
        \end{equation*}
        holds for all $x_1,\dotsc,x_n,x_1',\dotsc,x_n'\in M$ and $i\in \otn$.
        \item We say that $K$ is \emph{separately Lipschitz} with coefficients $L_1,\dotsc,L_n \geq 0$ if
        \begin{equation*}
            \abs{K(x_1,\dotsc,x_i,\dotsc,x_n) - K(x_1,\dotsc,x_i',\dotsc,x_n)} \leq L_i \d(x_i,x_i')
        \end{equation*}
        holds for all $x_1,\dotsc,x_n,x_1',\dotsc,x_n'\in M$ and $i\in \otn$.
    \end{itemize}
\end{definition}

The following concentration result for separately bounded observables of independent random variables is originally due to \cite{mcdiarmid1989}, with a more modern examination found in \cite[Chapter~23]{douc2018}.

\begin{theorem}[McDiarmid's inequality]\label{thm:mcdiarmid-iid} $ $\\
    Let $(M,\d)$ be a metric space, $n\in\N$ and $K: M^n \to \R$ a separately bounded observable with coefficients $c_1,\dotsc,c_n\geq 0$. Moreover, let $X_1,\dotsc,X_n$ be independent random variables taking values in $M$. Then $K(X_1,\dotsc,X_n)$ is subgaussian with variance proxy
    \begin{equation*}
        \vpnorm{K(X_1,\dotsc,X_n)}^2 \leq \frac 14 \sum_{i=1}^n c_i^2.
    \end{equation*}
\end{theorem}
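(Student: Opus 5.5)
The statement is McDiarmid's inequality for independent inputs. I would prove it with the standard martingale-difference (Doob) argument combined with Hoeffding's lemma.

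\textbf{Setup.} Write $Z = K(X_1,\dotsc,X_n)$ and let $\cF_i = \sigma(X_1,\dotsc,X_i)$, with $\cF_0$ trivial. Define $Z_i = \E[Z\mid\cF_i]$ and $D_i = Z_i - Z_{i-1}$, so that
\begin{equation*}
    Z - \E[Z] = \sum_{i=1}^n D_i .
\end{equation*}
Separate boundedness implies $K$ is bounded up to an additive constant. Hence all conditional expectations exist, and $\E[D_i\mid\cF_{i-1}]=0$.

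\textbf{Bounding each increment.} The key step is to show that, conditionally on $\cF_{i-1}$, each $D_i$ lies in an interval of length at most $c_i$. By independence, $Z_i = g_i(X_1,\dotsc,X_i)$, where
\begin{equation*}
    g_i(x_1,\dotsc,x_i)=\E\bigl[K(x_1,\dotsc,x_i,X_{i+1},\dotsc,X_n)\bigr].
\end{equation*}
This uses Fubini together with the fact that the law of $(X_{i+1},\dotsc,X_n)$ is a product measure independent of $\cF_i$. Now fix $x_1,\dotsc,x_{i-1}$ and set
\begin{equation*}
    A_i=\inf_{x} g_i(x_1,\dotsc,x_{i-1},x)-g_{i-1}(x_1,\dotsc,x_{i-1}),
\end{equation*}
and define $B_i$ analogously with $\sup$ in place of $\inf$. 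Then $A_i\le D_i\le B_i$, and both bounds are $\cF_{i-1}$-measurable. Separate boundedness in coordinate $i$, integrated over the remaining coordinates, gives $B_i-A_i\le c_i$. This representation is the step requiring the most care: the independence assumption enters exactly here. Measurability of the inf/sup can be sidestepped by working with the difference of $g_i$ at two arbitrary points.

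\textbf{Conclusion.} By conditional Hoeffding's lemma, a mean-zero variable supported in an interval of length $c_i$ satisfies
\begin{equation*}
    \E\bigl[e^{\tau D_i}\mid\cF_{i-1}\bigr]\le \exp\bigl(\tau^2c_i^2/8\bigr).
\end{equation*}
Peel off the factors successively by the tower property, starting from $i=n$:
\begin{equation*}
    \E\bigl[e^{\tau(Z-\E Z)}\bigr]=\E\Bigl[e^{\tau\sum_{i<n}D_i}\,\E\bigl[e^{\tau D_n}\mid\cF_{n-1}\bigr]\Bigr]\le\dotsb\le \exp\Bigl(\frac{\tau^2}{8}\sum_{i=1}^n c_i^2\Bigr).
\end{equation*}
This holds for all $\tau\in\R$, which is precisely subgaussianity with variance proxy $\sigma^2=\frac14\sum_{i=1}^n c_i^2$ in the sense of Definition~\ref{def:subgaussian}. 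Since $\vpnorm{Z}^2$ is the minimal variance proxy, it is at most this value.
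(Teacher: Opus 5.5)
Your proof is correct and follows the standard route. The paper does not prove this statement itself but defers to \cite{mcdiarmid1989} and \cite[Chapter~23]{douc2018}, where it is obtained exactly as you propose: the Doob-martingale decomposition $Z-\E[Z]=\sum_i D_i$, a range bound of length $c_i$ on each $D_i$ given $\cF_{i-1}$ (with the first $i-1$ coordinates frozen), conditional Hoeffding's lemma, and peeling via the tower property, which yields the variance proxy $\frac14\sum_{i=1}^n c_i^2$.
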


The above result arises naturally in the proof. An exponential tail-bound, which is the classical statement of McDiarmid's inequality, can be recovered by applying Proposition~\ref{prop:subgaussian-tails-onesided}.

I.i.d. processes are not the only ones that satisfy such an exponential concentration inequality. Similar results have been proven for other random processes, e.g. geometrically ergodic Markov chains \cite{dedecker2015,douc2018,havet2020}, as well as deterministic processes arising from a dynamical system in the sense of Remark~\ref{rem:observable-processes} \cite{collet2002,cuny2023}. We focus on one such result, for systems modeled by an aperiodic uniform Young tower with exponential tails, due to \cite[Theorem~7.1]{chazottes2012}. Once again, a version with tail bounds can be recovered by Proposition~\ref{prop:subgaussian-tails-onesided}.

\begin{theorem}[Chazottes-Gouëzel inequality]\label{thm:mcdiarmid-young} $ $\\
    Let $(M,T)$ be a dynamical system modeled by an aperiodic uniform Young tower with exponential tails, $\d$ the associated metric and $\mu$ its associated mixing $T$-invariant distribution by Proposition~\ref{prop:young-tower-invariant-measure}.
    
    Let $n\in\N$ and $K: M^n \to \R$ a separately Lipschitz observable with coefficients $L_1,\dotsc,L_n\geq 0$. Moreover, let $X$ be an $M$-valued random variable with $X\sim \mu$. Then $K(X,T(X),\dotsc,T^{n-1}(X))$ is subgaussian with variance proxy
    \begin{equation*}
        \Vpnorm{K\bigl(X,T(X),\dotsc,T^{n-1}(X)\bigr)}^2 \leq C \sum_{i=1}^n L_i^2,
    \end{equation*}
    where $C>0$ is a constant depending only on $(M,T)$.
\end{theorem}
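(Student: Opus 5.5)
The statement is the last theorem in the visible text, Theorem~\ref{thm:mcdiarmid-young} (the Chazottes-Gouëzel inequality). The paper takes it from \cite[Theorem~7.1]{chazottes2012}, so my plan is to reconstruct that argument: a martingale difference decomposition on the tower, with exponential-tail control coming from the transfer operator.

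\textbf{Lifting to the tower.} First I lift the problem to the Young tower. Set $\hat K = K\circ(\pi,\dotsc,\pi)$ and take $\hat X\sim\hat\nu$, so that $\pi(\hat X)\sim\nu$ and, by the semiconjugacy, $K(X,\dots,T^{n-1}X)$ has the same law as $\hat K(\hat X,\dotsc,\hat T^{n-1}\hat X)$. On $\Delta$ I use the separation metric $\d_\beta(x,y)=\beta^{s(x,y)}$. Property \textbf{(P4)(i)} and \textbf{(P3)} show that $\pi$ is Lipschitz along unstable leaves, with a defect that decays along stable leaves. Consequently $\hat K$ is separately Lipschitz with respect to $\d_\beta$, with coefficients comparable to $L_i$, up to contributions that are exponentially small in the distance to the future.

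\textbf{Quotient along stable leaves.} Next I pass to the quotient tower $\bar\Delta$. Since $\hat K$ is not constant on stable leaves, I replace it by a function of the quotient. I condition on the $\sigma$-algebra generated by the stable partition of $\hat T^{-N}$ for a large $N$, equivalently shifting the time window. By \textbf{(P3)} the replacement changes $\hat K$ by at most $C\sum_i L_i\beta^{\,\cdot}$. This error is absorbed into the variance proxy after applying Cauchy-Schwarz to the geometric weights.

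\textbf{Martingale decomposition on $\bar\Delta$.} The quotient tower is an expanding Gibbs-Markov-type system with transfer operator $\mathcal L$. Reversing time, I write $K-\E K=\sum_{p}D_p$ with $D_p=\E[K\mid\mathcal F_p]-\E[K\mid\mathcal F_{p+1}]$, where $\mathcal F_p=\bar T^{-p}\mathcal B$. Then $\E[K\mid\mathcal F_p]$ is expressed through iterates $\mathcal L^{j}$ applied to partially integrated functions. The key estimate is that each $D_p$ is bounded in sup norm by $C\sum_{j\le p}L_j\rho^{p-j}$ for some $\rho<1$, and that $\E[e^{D_p}\mid\mathcal F_{p+1}]\le e^{C(\sum_j L_j\rho^{p-j})^2}$. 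Applying Azuma-Hoeffding to this decomposition and then Cauchy-Schwarz gives
\[
\textstyle\sum_p\bigl(\sum_j L_j\rho^{p-j}\bigr)^2\le C'\sum_j L_j^2 .
\]

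\textbf{Main obstacle.} The hard step is the conditional exponential moment bound. It requires the Lasota-Yorke or spectral-gap estimate for $\mathcal L$ on Lipschitz functions in $\d_\beta$, which rests on aperiodicity (mixing of $\hat\nu$) and on the distortion bounds \textbf{(P4)(ii)} and \textbf{(P5)}. Exponential tails are needed to control the unbounded height of the tower. Near the base the levels are close to a uniformly expanding system, but high levels have small measure. My first attempt is the Chazottes-Gouëzel trick: induce to the base and use that the tail $\hat\nu(\{R>n\})\le c\tau^n$ makes the extra contribution of excursions up the tower summable with geometric weights. That yields the constant $C$, which depends only on $(M,T)$.
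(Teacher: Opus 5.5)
The paper does not prove this statement; it imports it from \cite[Theorem~7.1]{chazottes2012}. Your architecture (lift to the tower, quotient along stable leaves, reverse martingale via the transfer operator) is reasonable. However, the step you call the key estimate is false on a Young tower, and this is exactly where Young towers differ from uniformly expanding or Gibbs--Markov maps, for which the sup-norm bound plus Azuma--Hoeffding does work.

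\textbf{The martingale differences are unbounded.} On the quotient tower, $D_p=0$ unless $x_{p+1}$ lies in the base, and then $x_p$ ranges over the tops of all columns. If $x_p$ sits on top of a column of height $h$, the past $x_{p-1},\dotsc,x_{p-h+1}$ is a deterministic function of $x_p$ and is not averaged at all. So $|D_p|$ can only be bounded by something like $\sum_{k<h}L_{p-k}$ plus geometric terms, and $h$ is unbounded.

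For instance, $K=L\sum_{j<n}\mathbf 1_{\{\text{level}\ge 1\}}(x_j)$ is $L$-Lipschitz in each coordinate for the separation metric, yet $D_p$ is of order $L\min(h,p)$ on tall columns. Hence no bound $|D_p|\le C\sum_{j\le p}L_j\rho^{p-j}$ holds. Moreover, for tails really of order $\tau^h$, $\E[e^{D_p}\mid\mathcal F_{p+1}]$ behaves like $\sum_{h\le p}\tau^h e^{cLh}$. This grows exponentially in $p$ once $cL>\log(1/\tau)$, whereas your claimed bound $e^{C(\sum_j L_j\rho^{p-j})^2}$ is uniform in $p$. Nothing in your argument uses that $\hat K$ comes from $M$, so it cannot be rescued as written.

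``Inducing to the base'' does not repair this by itself. The induced coordinates carry Lipschitz constants $\sum_j L_j$ summed over excursions of unbounded length, which is the same problem again.

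\textbf{What is missing} is a mechanism for unbounded martingale differences. One route has two steps.
\begin{enumerate}
\item Reduce to $\max_j L_j\le\epsilon_0$ by freezing the coordinates with $L_j>\epsilon_0$ at a fixed point. Using finite diameter, this costs at most a factor $\exp\bigl(2\epsilon_0^{-1}\mathrm{diam}\sum_j L_j^2\bigr)$, since $L_j<L_j^2/\epsilon_0$ for those coordinates.
\item Use a height-dependent bound on $D_p$, together with $e^t\le 1+t+\frac{t^2}{2}e^{|t|}$, $\E[D_p\mid\mathcal F_{p+1}]=0$, and finite exponential moments of the height for small exponents (exponential tails). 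This gives $\E[e^{D_p}\mid\mathcal F_{p+1}]\le\exp\bigl(C\sum_k L_{p-k}^2\rho'^k\bigr)$, and the product over $p$ is bounded by $e^{C'\sum_j L_j^2}$.
\end{enumerate}

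\textbf{A second, smaller error is in the stable-leaf step.} Replacing $\hat K$ by a function constant on stable leaves is a sup-norm perturbation of size $\epsilon=C\sum_iL_i\beta^{N+i}$. This costs a factor $e^{2|t|\epsilon}$ in $\E[e^{t(K-\E K)}]$. That factor is linear in $t$ and in the $L_i$, so no Cauchy--Schwarz manipulation absorbs it into a quadratic variance proxy; the inequality would fail for small $t$.

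You must instead let $N\to\infty$ (the law of the shifted vector is unchanged by invariance) while keeping all other constants uniform in $N$. This is not automatic. Along unstable leaves, \textbf{(P4)}(i) only gives $\d(T^Nx,T^Ny)\le C\beta^{s_0(x,y)-N}$, so the naive separation-metric Lipschitz constants of the conditioned function at time $0$ grow like $\beta^{-N}$. An additional argument is needed to obtain constants independent of $N$.
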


\section{Hölder Spaces}\label{ap:hoelder}

In the following, we always assume $d,m\in \N$. We follow \cite{asatryan2023} for the definitions.

\begin{definition}[multi-indices, $C^k$-spaces]\label{def:C^k-space} $ $\\
    If $\beta = (\beta_1,\dotsc,\beta_d)\in \N_0^d$, we refer to $\beta$ as \emph{multi-index} and write $\card{\beta} = \sum_{i=1}^d \beta_i$.
    
    Let $U\subseteq \R^d$ be bounded, open, and convex. For $f: U\to \R^{m}$ we write $D_\beta f(x) = \frac{\partial^{\card{\beta}}}{\partial x_1^{\beta_1} \dots \partial x_d^{\beta_d}} f(x)$.
    For $k\in \N_0$ we define $C^k(U,\R^m)$ as the set of all $k$-times continuously differentiable functions $f : U \to \R^m$, i.e., $D_\beta f$ exists and is continuous for all $\beta \in \N_0^d$, $\card{\beta}\leq k$.
    
    Let $A \subseteq \R^d$ be compact and convex with nonempty interior $A^\circ$. For $k\in \N_0$ we define $C^k(A,\R^m)$ as the set of all continuous  functions $f : A \to \R^m$ such that $f|_{A^\circ} \in C^k(A^\circ,\R^m)$ and $D_\beta (f|_{A^\circ})$ has a continuous extension to $A$ for all $\beta \in \N_0^d$, $\card{\beta}\leq k$.

    It is well-known that $C^k(A,\R^m)$ is a Banach space with the norm
    \begin{equation*}
        \norm{f}_{C^k} = \max_{\card{\beta}\leq k} \sup_{x\in A} \abs{D_\beta f(x)}.
    \end{equation*}
    As usual, we write $C$ instead of $C^0$ and $\norm{\cdot}_\infty$ instead of $\norm{\cdot}_{C^0}$.
\end{definition}

\begin{definition}[Hölder spaces] $ $\\
    Let $A \subseteq \R^d$ be compact and convex with nonempty interior. We say that $f: A \to \R^m$ is \emph{$\alpha$-Hölder continuous} for the Hölder exponent $\alpha \in (0,1]$ if
    \begin{equation*}
        [f]_\alpha = \sup_{\substack{x,y\in A \\ x\neq y}} \frac{\abs{f(x)-f(y)}}{\abs{x-y}^\alpha}
    \end{equation*}
    is finite. In that case, we call $[f]_\alpha$ the \emph{$\alpha$-Hölder coefficient} for $f$.

    For $k\in \N_0$ and $\alpha \in (0,1]$ we define $C^{k,\alpha}(A,\R^m)$ as the set of all $f\in C^k(A,\R^m)$ such that $D_\beta f$ is $\alpha$-Hölder continuous for all $\beta\in \N_0^d$, $\abs{\beta}=k$.

    By \cite{gilbarg2001}, $C^{k,\alpha}(A,\R^m)$ is a Banach space with the norm
    \begin{equation*}
        \norm{f}_{C^{k,\alpha}} = \norm{f}_{C^k} + \max_{\card\beta = k} \left[D_\beta f\right]_\alpha.
    \end{equation*}
\end{definition}

To work with these spaces, we need the following easy to verify lemma.

\begin{lemma}[component-wise (Hölder) continuity]\label{lem:norm-subadditivity} $ $\\
    Let $A \subseteq \R^d$ be compact and convex with nonempty interior, $k\in \N_0$, $\alpha \in (0,1]$ and $f: A \to \R^m$ with $f = (f_1,\dotsc, f_m)$. Then the following equivalences and bounds hold:
    \begin{enumerate}[label=(\roman*)]
        \item $f\in C^k(A,\R^m)$ if and only if $f_i\in C^k(A,\R)$ for all $i\in \{1,\dotsc,m\}$. Then
        \begin{equation*}
            \norm{f_i}_{C^k} \leq \norm{f}_{C^k} \leq \sqrt{\sum_{j=1}^m \norm{f_j}_{C^k}^2}.
        \end{equation*}
        \item $f\in C^{k,\alpha}(A,\R^m)$ if and only if $f_i\in C^{k,\alpha}(A,\R)$ for all $i\in \{1,\dotsc,m\}$. Then
        \begin{equation*}
            \norm{f_i}_{C^{k,\alpha}} \leq \norm{f}_{C^{k,\alpha}} \leq \sum_{j=1}^m \norm{f_j}_{C^{k,\alpha}}.
        \end{equation*}
    \end{enumerate}
\end{lemma}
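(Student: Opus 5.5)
The statement to prove is Lemma~\ref{lem:norm-subadditivity}. I will work directly from the definitions, since the equivalences are immediate and only the norm bounds need care. Write $f=(f_1,\dotsc,f_m)$ and $\abs{\cdot}$ for the Euclidean norm on $\R^m$. For every multi-index $\beta$, the partial derivative $D_\beta f$ exists on $A^\circ$ if and only if each $D_\beta f_i$ exists there, and then $D_\beta f=(D_\beta f_1,\dotsc,D_\beta f_m)$. The same holds for continuity, for continuous extension to $A$, and for Hölder continuity, because a map into $\R^m$ is continuous or Hölder exactly when all its coordinates are. Together with $\abs{v_i}\leq\abs{v}\leq\sum_j\abs{v_j}$ for $v\in\R^m$, this settles both ``if and only if'' claims.

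For the lower bounds in (i) and (ii), I use $\abs{D_\beta f_i(x)}\leq\abs{D_\beta f(x)}$ pointwise and take suprema, then the maximum over $\abs{\beta}\leq k$. Similarly, $\abs{D_\beta f_i(x)-D_\beta f_i(y)}\leq\abs{D_\beta f(x)-D_\beta f(y)}$ gives $[D_\beta f_i]_\alpha\leq[D_\beta f]_\alpha$. Adding the two pieces gives $\norm{f_i}_{C^{k,\alpha}}\leq\norm{f}_{C^{k,\alpha}}$.

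For the upper bound in (i), I fix $\beta$ and $x$ and write
\begin{equation*}
    \abs{D_\beta f(x)}^2=\sum_{j=1}^m \abs{D_\beta f_j(x)}^2\leq \sum_{j=1}^m \norm{f_j}_{C^k}^2 .
\end{equation*}
Taking the supremum over $x$ and the maximum over $\beta$ yields the claim.

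For the upper bound in (ii), I bound the two parts of the norm separately. First,
\begin{equation*}
    \norm{f}_{C^k}\leq\sqrt{\textstyle\sum_j\norm{f_j}_{C^k}^2}\leq\sum_j\norm{f_j}_{C^k}.
\end{equation*}
Second, for $\abs{\beta}=k$ the triangle inequality in $\R^m$ gives
\begin{equation*}
    \abs{D_\beta f(x)-D_\beta f(y)}\leq\sum_j\abs{D_\beta f_j(x)-D_\beta f_j(y)}.
\end{equation*}
Dividing by $\abs{x-y}^\alpha$ and taking the supremum gives $[D_\beta f]_\alpha\leq\sum_j[D_\beta f_j]_\alpha\leq\sum_j\max_{\abs{\gamma}=k}[D_\gamma f_j]_\alpha$. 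Adding the two bounds gives $\norm{f}_{C^{k,\alpha}}\leq\sum_j\norm{f_j}_{C^{k,\alpha}}$.

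There is no real obstacle here. The only point needing attention is to interchange the suprema and maxima correctly: bound the quantity pointwise for fixed $x$, $y$ and $\beta$ first, and only then take the suprema.
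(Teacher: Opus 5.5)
Your proof is correct. The paper gives no proof of this lemma (it calls it ``easy to verify''), and your componentwise argument is exactly the intended verification: pointwise bounds from $|v_i|\le|v|$, $|v|^2=\sum_j|v_j|^2$ and $|v|\le\sum_j|v_j|$ in $\R^m$, with suprema and maxima taken only afterwards.
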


Since the Hölder spaces are infinite-dimensional, the unit ball $B_{C^{k,\alpha}(A,\R^m)}(0,1)$ is not compact and we cannot expect it to have finite covering numbers. However, it turns out that weakening the topology just a bit gives the desired compactness, as the following proposition shows.

\begin{proposition}[compact embeddings of Hölder balls]\label{prop:hoelder-embeddings}$ $\\
    Let $A \subseteq \R^d$ be compact and convex with nonempty interior, $k\in \N_0$, $\alpha \in (0,1]$ and consider the closed unit ball $B = B_{C^{k,\alpha}(A,\R^m)}(0,1)$ w.r.t. $\norm{\cdot}_{C^{k,\alpha}}$. Then $B$ is compact $\dots$
    \begin{enumerate}[label=(\roman*)]
        \item $\dots$ in the $\norm{\cdot}_{C^{k,\alpha'}}$-topology for $0<\alpha'<\alpha$.
        \item $\dots$ in the $\norm{\cdot}_{C^{k'}}$-topology for all $k'\in \N_0$, $k'\leq k$.
    \end{enumerate}
\end{proposition}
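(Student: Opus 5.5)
The proof runs through Arzelà--Ascoli together with interpolation of derivatives, and treats the two assertions separately.

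\textbf{Assertion (ii).} We show that $B$ is relatively compact in $C^{k}(A,\R^m)$ and closed there. By Lemma~\ref{lem:norm-subadditivity}, it suffices to take $m=1$.

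For relative compactness, take a sequence $(f_j)\subseteq B$. For every multi-index $\beta$ with $\card{\beta}\leq k$, the derivatives $D_\beta f_j$ are uniformly bounded by $1$. They are also equicontinuous:
\begin{itemize}
\item for $\card\beta<k$, because $\norm{D(D_\beta f_j)}_\infty\leq \sqrt d$ and $A$ is convex, so the mean value theorem makes them $\sqrt d$-Lipschitz;
\item for $\card\beta=k$, because $[D_\beta f_j]_\alpha\leq 1$.
\end{itemize}
Arzelà--Ascoli on the compact set $A$, applied to the finitely many $\beta$ with a diagonal argument, gives a subsequence along which every $D_\beta f_j$ converges uniformly on $A$.

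Next we identify the limit. Uniform convergence of the first derivatives on the convex set $A^\circ$ implies that the limit $f$ of $f_j$ is $C^1$ with $Df=\lim Df_j$; this is the standard theorem on differentiating under uniform limits, applied along segments. Inducting up to order $k$ gives $f\in C^k$ with $D_\beta f=\lim D_\beta f_j$ uniformly, and hence $f_j\to f$ in $C^k$.

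For closedness, the bounds pass to the limit. The sup bounds hold pointwise in the limit, and the Hölder quotients $\abs{D_\beta f(x)-D_\beta f(y)}/\abs{x-y}^\alpha$ are pointwise limits of quantities bounded by the corresponding terms for $f_j$. Taking limits in $\norm{f_j}_{C^k}+\max_{\card\beta=k}[D_\beta f_j]_\alpha\le1$ (passing to a further subsequence for the $\max$) gives $\norm{f}_{C^{k,\alpha}}\leq 1$, so $f\in B$. Assertion (ii) follows for all $k'\leq k$ at once, since the $C^{k'}$-topology is weaker than the $C^k$-topology.

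\textbf{Assertion (i).} Take the subsequence from (ii) with $f_j\to f$ in $C^k$ and $f\in B$, and set $g_j=D_\beta(f_j-f)$ for $\card\beta=k$. It remains to show $[g_j]_{\alpha'}\to0$. We have $[g_j]_\alpha\leq 2$ and $\norm{g_j}_\infty\to0$. The interpolation inequality is
\begin{equation*}
\frac{\abs{g_j(x)-g_j(y)}}{\abs{x-y}^{\alpha'}}=\left(\frac{\abs{g_j(x)-g_j(y)}}{\abs{x-y}^{\alpha}}\right)^{\alpha'/\alpha}\abs{g_j(x)-g_j(y)}^{1-\alpha'/\alpha}\leq 2^{\alpha'/\alpha}\,(2\norm{g_j}_\infty)^{1-\alpha'/\alpha},
\end{equation*}
and its right-hand side tends to $0$. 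Hence $f_j\to f$ in $C^{k,\alpha'}$, and sequential compactness holds in that metric topology.

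The main obstacle is the limit identification in (ii): passing derivatives to the uniform limit up to the boundary of $A$. The definition of $C^k(A)$ requires continuous extensions of $D_\beta f$ from $A^\circ$ to $A$. We handle this by working on $A^\circ$ and noting that the uniform limits of the extended $D_\beta f_j$ are continuous on $A$ and agree with $D_\beta f$ on $A^\circ$, which supplies the required extensions.
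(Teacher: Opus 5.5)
Your proof is correct, but it takes a different route from the paper. The paper proves (i) first by citing the compact embedding $C^{k,\alpha}\hookrightarrow C^{k,\alpha'}$ from Gilbarg--Trudinger (stated there for $m=1$, with the vector case declared analogous). It adds closedness of $B$, because $\|f\|_{C^{k,\alpha}}\le 1$ is stable under $C^k$-convergence, and then gets (ii) only because compactness survives passage to the weaker $C^{k'}$-topologies.

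You reverse the order. You prove compactness in $C^k$ directly by Arzel\`a--Ascoli: the mean value theorem on the convex set $A$ gives equicontinuity of the lower-order derivatives, and the H\"older bound handles the top order. This yields (ii). You then upgrade to (i) with the elementary interpolation $[g]_{\alpha'}\le [g]_\alpha^{\alpha'/\alpha}(2\|g\|_\infty)^{1-\alpha'/\alpha}$. Your closedness step is the same as the paper's.

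In effect you reprove the content of the cited lemma. This buys a self-contained argument that uses only the convexity of $A$ and handles the continuous extension of the derivatives to $\partial A$ explicitly. The paper's version is shorter, but it leaves the adaptation of the reference to vector-valued functions and to a general compact convex $A$ to the reader.

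One small point: Lemma~\ref{lem:norm-subadditivity} reduces only the extraction of convergent subsequences to $m=1$, since $B$ is not the product of the scalar unit balls. The closedness step has to be run with the vector-valued norm, which your lower-semicontinuity argument does verbatim.
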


\begin{proof}
    The embedding $C^{k,\alpha} \hookrightarrow C^{k,\alpha'}$ is compact by \cite[Lemma~6.33]{gilbarg2001} (only for $m=1$, though the general case is analogous), so $B$ is precompact in the $\norm{\cdot}_{C^{k,\alpha'}}$-topology. Closure follows from the fact that the defining inequality $\norm{f}_{C^{k,\alpha}}\leq 1$ of $B$ is stable under $\norm{\cdot}_{C^k}$-convergence, thus proving assertion (i). For assertion (ii), note that compactness is maintained when passing to an even weaker topology.
\end{proof}

Compactness in a weaker $\norm{\cdot}_{C^{k,\alpha'}}$-topology is already sufficient for the existence of minimax solutions in Section~\ref{sec:infinite-dim-model}. If we weaken the topology further, we can even get explicit bounds on the metric entropy, which prove crucial for the quantitative results in Section~\ref{sec:rates}.

For these next two results we only consider $A=[0,1]^d$, because this keeps the notation and bounds simpler. While we choose the same simplification in Assumption~\ref{ass:general-setting}, it is important to reiterate that these results are not unique to $[0,1]^d$; similar results can be proven for general $A$, following the same arguments.

\begin{proposition}[metric entropy of $C^{k,\alpha}$ embedded into $C$]\label{prop:hoelder-entropy-sup} $ $\\
    For all $k\in \N_0$, $\alpha\in (0,1]$ and $\epsilon > 0$ we have
    \begin{equation*}
        \log N(B_{C^{k,\alpha}([0,1]^d,\R^m)}(0,1), \norm{\cdot}_\infty, \epsilon) \leq \gamma \epsilon^{-\frac{d}{\alpha+k}},
    \end{equation*}
    where $\gamma= \gamma(d,m,\alpha+k)$ is a constant depending only on $d$, $m$ and $\alpha+k$.
\end{proposition}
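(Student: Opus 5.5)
The plan is to prove the bound for scalar functions ($m=1$) by the classical Kolmogorov--Tikhomirov construction of an explicit net built from local Taylor polynomials, and then to pass to $\R^m$ via Lemma~\ref{lem:norm-subadditivity}. For the vector case, every $f\in B_{C^{k,\alpha}([0,1]^d,\R^m)}(0,1)$ has components $f_j$ in the scalar unit ball. If $N_j$ are $\epsilon/\sqrt m$-nets for the scalar ball in $\norm{\cdot}_\infty$, the product net has cardinality $N(\epsilon/\sqrt m)^m$ and covers the vector ball, since $\abs{f(x)-g(x)}\le\sqrt m\max_j\abs{f_j(x)-g_j(x)}$. Taking logarithms gives $m\,\gamma(d,1,\alpha+k)\,(\epsilon/\sqrt m)^{-d/(\alpha+k)}$, which is of the claimed form with $\gamma$ depending on $d,m,\alpha+k$. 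It therefore remains to treat $m=1$. Set $s=k+\alpha$, and note that it suffices to consider $\epsilon\le 1$, since for $\epsilon>1$ a single point (the zero function) already covers.

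For $m=1$, fix $\epsilon\in(0,1]$ and choose the mesh $h=c\,\epsilon^{1/s}$ with a small constant $c=c(d,s)$. Cover $[0,1]^d$ by a grid of $\lceil 1/h\rceil^d$ cubes with vertices $x_j$. For each $f$ in the ball and each grid point, record the rounded values of all derivatives $D_\beta f(x_j)$, $\card\beta\le k$, rounded to a lattice of width $\delta_\beta=c'\epsilon h^{-\card\beta}$. The approximant on each cube is the Taylor polynomial of degree $k$ at the cube's vertex, built from these rounded coefficients. Taylor's theorem, combined with $\alpha$-Hölder continuity of the $k$-th derivatives (constant $\le1$), gives a remainder of order $h^{s}\lesssim\epsilon$. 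The rounding of each coefficient contributes at most $\delta_\beta h^{\card\beta}/\beta!\lesssim c'\epsilon$, so with suitable $c,c'$ the total error is $\le\epsilon$. This shows that functions with identical rounded data are within $2\epsilon$ of each other, which is enough after rescaling $\epsilon$.

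The main obstacle is counting the distinct rounded data arrays. Naively each grid point allows $\prod_\beta(2/\delta_\beta+1)$ choices, giving $\log N\lesssim h^{-d}\log(1/\epsilon)$. This carries an extra logarithmic factor compared with $\epsilon^{-d/s}$. To remove it, I will use the standard chaining across neighbouring grid points. Order the grid points along a path in which consecutive points are adjacent. At the first point, use the full range of choices, which contributes only $O(\log(1/\epsilon))$. At each subsequent point $x_{j+1}$, note that $D_\beta f(x_{j+1})$ is determined up to $O(h^{s-\card\beta})=O(\delta_\beta/c')$ by the Taylor expansion of $D_\beta f$ around $x_j$. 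This expansion uses the already known (rounded) values at $x_j$, whose rounding errors propagate as $\sum_{\beta'\ge\beta}\delta_{\beta'}h^{\card{\beta'}-\card\beta}\lesssim\delta_\beta$. Hence, given the data at $x_j$, each coefficient at $x_{j+1}$ ranges over only $O(1)$ lattice values, bounded by a constant $A=A(d,s)$. The total count is then at most $(C/\epsilon)^{D}\cdot A^{D\,h^{-d}}$, where $D=\#\{\beta:\card\beta\le k\}$. This yields $\log N\le D\log(C/\epsilon)+D\log A\,c^{-d}\epsilon^{-d/s}\le\gamma\,\epsilon^{-d/s}$ for $\epsilon\le1$, since $\log(1/\epsilon)\lesssim\epsilon^{-d/s}$.

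Finally, net points defined this way need not lie in the ball, but each occupied data class contains some $f$ of the ball. Choosing one such representative per class gives an internal $2\epsilon$-net, consistent with the definition of $N$. Replacing $\epsilon$ by $\epsilon/2$ only changes the constant, which concludes the plan.
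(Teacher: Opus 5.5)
Your proposal is correct and is essentially the argument the paper relies on. The paper's proof only cites \cite{vandervaart2023}, whose Theorem~2.7.1 is proved by exactly this discretization of $D_\beta f$ at grid points with mesh widths of order $h^{s-|\beta|}$, plus chaining along neighbouring grid points to remove the logarithmic factor, and it cites \cite{asatryan2023} for the componentwise passage to $m>1$. One small bookkeeping point: the product of scalar nets need not lie in the vector-valued unit ball, and nets in the paper must be subsets of the covered set, so in the vector step you should also apply Proposition~\ref{prop:covering-number-monotonicity} (or your representative trick), which only changes the constant.
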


\begin{proof}
    For $m=1$, see \cite[Theorem~2.7.1]{vandervaart2023}. For $m>1$, see \cite[Theorem~4.2]{asatryan2023}.
\end{proof}

For the ergodic generalization error $\egenmu(n)$, we only obtain subgaussian increments with respect to the $C^1$-norm in Section~\ref{sec:rates}, which requires a different metric entropy estimate.

\begin{proposition}[metric entropy of $C^{k+1,\alpha}$ embedded into $C^1$]\label{prop:hoelder-entropy-C1} $ $\\
    For all $k\in \N_0$, $\alpha\in (0,1]$ and $\epsilon > 0$ we have
    \begin{equation*}
        \log N(B_{C^{k+1,\alpha}([0,1]^d,\R^m)}(0,1), \norm{\cdot}_{C^1}, \epsilon) \leq \hat\gamma \epsilon^{-\frac{d}{\alpha+k}},
    \end{equation*}
    where $\hat\gamma= \hat\gamma(d,m,\alpha+k)$ is a constant depending only on $d$, $m$ and $\alpha+k$.
\end{proposition}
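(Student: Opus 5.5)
The plan is to reduce the $C^1$-covering problem to the sup-norm estimate of Proposition~\ref{prop:hoelder-entropy-sup}, applied to the function itself and to each of its first-order partial derivatives.

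\textbf{Step 1 (reduction to $m=1$).} By Lemma~\ref{lem:norm-subadditivity}, if $f=(f_1,\dotsc,f_m)$ lies in the unit ball of $C^{k+1,\alpha}([0,1]^d,\R^m)$, then each component $f_j$ lies in the unit ball of $C^{k+1,\alpha}([0,1]^d,\R)$. Conversely, $\norm{f-g}_{C^1}\leq \sqrt{m}\max_j\norm{f_j-g_j}_{C^1}$. A product of $\epsilon/\sqrt m$-nets for the component balls, restricted to the image of the vector ball, therefore gives a net of the vector ball. Passing to the subset costs a factor $\tfrac12$ in $\epsilon$ via Proposition~\ref{prop:covering-number-monotonicity}. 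All of this changes the bound only by a multiplicative constant depending on $d,m,\alpha+k$, which is absorbed into $\hat\gamma$.

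\textbf{Step 2 (scalar case via the derivative map).} Let $B'$ be the unit ball of $C^{k+1,\alpha}([0,1]^d,\R)$ and consider
\begin{equation*}
    \Psi: B'\to C([0,1]^d,\R)^{1+d},\qquad f\mapsto (f,\partial_1 f,\dotsc,\partial_d f).
\end{equation*}
By definition of the $C^1$-norm, $\norm{f-g}_{C^1}=\max\{\norm{f-g}_\infty,\max_i\norm{\partial_i f-\partial_i g}_\infty\}$, so $\Psi$ is an isometry onto its image when the target carries the max of the sup-norms. Each $\partial_i f$ lies in $C^{k,\alpha}$ with $\norm{\partial_i f}_{C^{k,\alpha}}\leq \norm{f}_{C^{k+1,\alpha}}\leq 1$. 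The multi-indices of order $\leq k$ of $\partial_i f$ are among those of order $\leq k+1$ of $f$, and the top-order Hölder seminorms are among those of $f$. Hence $\Psi(B')\subseteq B_{C^{k,\alpha}}(0,1)\times\dots\times B_{C^{k,\alpha}}(0,1)\times B_{C^{k,\alpha}}(0,1)$; the first factor could even be taken in the smaller ball $B_{C^{k+1,\alpha}}$, but $B_{C^{k,\alpha}}$ suffices. By Proposition~\ref{prop:hoelder-entropy-sup}, each factor admits an $\epsilon$-net in $\norm{\cdot}_\infty$ of log-cardinality at most $\gamma(d,1,\alpha+k)\epsilon^{-d/(\alpha+k)}$. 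The product net is then an $\epsilon$-net of the product in the max-metric, with log-cardinality at most $(d+1)\gamma\,\epsilon^{-d/(\alpha+k)}$.

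\textbf{Step 3 (pulling back).} The product net generally does not lie in $\Psi(B')$; its elements need not be of the form $(f,\nabla f)$. Proposition~\ref{prop:covering-number-monotonicity} handles this: since $\Psi$ is an isometry, $N(B',\norm{\cdot}_{C^1},\epsilon)=N(\Psi(B'),\max,\epsilon)\leq N(\text{product},\max,\epsilon/2)$. This yields
\begin{equation*}
    \log N\leq (d+1)\gamma\, 2^{d/(\alpha+k)}\epsilon^{-d/(\alpha+k)}.
\end{equation*}

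Combining with Step 1 gives the claim with $\hat\gamma(d,m,\alpha+k)$. The main point needing care is Step 2's verification that the partials inherit a $C^{k,\alpha}$ bound of $1$ under exactly the paper's norm convention, a max over derivatives plus a max over Hölder seminorms. Since both parts are maxima over index sets, restricting to derivatives of $\partial_i f$ only shrinks them, so the bound should go through. The rest is bookkeeping of constants.
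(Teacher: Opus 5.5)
Your route is sound and differs from the paper's, but one inclusion in Step~2 is false as stated. Under the paper's norms, $B_{C^{k+1,\alpha}}(0,1)$ is \emph{not} contained in $B_{C^{k,\alpha}}(0,1)$, because $\norm{\cdot}_{C^{k+1,\alpha}}$ does not include the Hölder seminorms $[D_\beta f]_\alpha$ with $\abs{\beta}=k$. For instance, take $d=1$, $k=0$, $\alpha=1$ and $f(x)=0.9x$. Then $\norm{f}_{C^{1,1}}=\max\{0.9,0.9\}+[f']_1=0.9$, but $\norm{f}_{C^{0,1}}=\norm{f}_\infty+[f]_1=1.8$. So the first coordinate of $\Psi(f)$ need not lie in $B_{C^{k,\alpha}}(0,1)$. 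Your check for the partials $\partial_i f$ is correct and is the same one the paper makes. The repair is immediate: take the first factor to be $B_{C^{k+1,\alpha}}(0,1)$ itself, which trivially contains $f$. Proposition~\ref{prop:hoelder-entropy-sup} gives it sup-norm entropy $\gamma(d,1,\alpha+k+1)\,\epsilon^{-d/(\alpha+k+1)}$, and this is at most $\gamma(d,1,\alpha+k+1)\,\epsilon^{-d/(\alpha+k)}$ for $\epsilon\le 1$. For $\epsilon\ge 1$ the covering number is $1$, since $0\in B'$ and $\norm{f}_{C^1}\le\norm{f}_{C^{k+1,\alpha}}\le 1$ on $B'$. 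Alternatively, the mean value theorem gives $f\in B_{C^{k,\alpha}}(0,1+2\sqrt d)$, and you can rescale.

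With that fix, your argument is genuinely different from the paper's. The paper covers only the gradient, using $\frac{\epsilon}{8d}$-nets of $B_{C^{k,\alpha}}(0,1)$ for each partial derivative. For each net element it selects a nearest true gradient $\nabla G^*$ with $G^*\in B$, which needs compactness of $\nabla(B)$. It then shifts by constants from an $\frac{\epsilon}{4}$-net of $[-1,1]$ to match the value at the origin, and recovers $\norm{F-G}_\infty$ by integrating the gradient along rays from $0$. Finally it has to absorb a $\log\frac{4}{\epsilon}$ term into $\epsilon^{-d/(\alpha+k)}$. You instead treat $f$ as an independent coordinate of the isometric embedding $f\mapsto(f,\nabla f)$, and let Proposition~\ref{prop:covering-number-monotonicity} handle the fact that product-net points are not of the form $(g,\nabla g)$. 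This avoids the nearest-point selection, the line integrals and the logarithmic correction. The price is a separate sup-norm cover of $f$, which is of lower order ($\epsilon^{-d/(\alpha+k+1)}$) and does not affect the exponent. The paper uses the finer structural fact that $f$ is determined by $\nabla f$ and one scalar, but for the stated bound your argument is shorter and gives the same exponent.
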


\begin{proof}
    We prove this w.l.o.g. for $m=1$. The generalization to $m>1$ works exactly as in \cite[Theorem~4.2]{asatryan2023}. Let $B=B_{C^{k+1,\alpha}([0,1]^d,\R)}(0,1)$. Fix $\epsilon>0$ and let $J$ be an $\tfrac{\epsilon}{8d}$-net of $B_{C^{k,\alpha}([0,1]^d,\R)}(0,1)$ w.r.t. $\norm{\cdot}_\infty$, with minimal cardinality. Then by Proposition~\ref{prop:hoelder-entropy-sup},
    \begin{equation*}
        \log \card{J} \leq \gamma(d,1,\alpha+k) 2^{\frac{3d}{\alpha+k}} d^{\frac{d}{\alpha+k}} \epsilon^{-\frac{d}{\alpha+k}}.
    \end{equation*}
    Let $C$ be an $\frac{\epsilon}{4}$-net of $[-1,1]$ with minimal cardinality. It is easy to see that $\card{C}\leq \frac{4}{\epsilon}$.
    
    We consider the gradient $\nabla : C^1([0,1]^d,\R) \to C([0,1]^d,\R^d)$ as a linear operator.
    This operators is also continuous, as Lemma~\ref{lem:norm-subadditivity}~(i) implies
    \begin{equation*}
        \norm{\nabla F}_\infty \leq \sqrt{\sum_{i=1}^d \Norm{\frac{\partial}{\partial x_i} F}_\infty^2} \leq \sqrt{d} \cdot \norm{F}_{C^1}
    \end{equation*}
    for all $F \in C^1$. Since $B$ is compact in the $\norm{\cdot}_{C^1}$-topology by Proposition~\ref{prop:hoelder-embeddings}~(ii), its image $\nabla(B)$ is compact in the $\norm{\cdot}_\infty$-topology.
    Consequently, for any $g\in J^d$ the continuous function $\psi_g : \nabla(B) \to \R, \quad \hat g \mapsto \norm{g-\hat g}_\infty$ attains its minimum on $\nabla(B)$.
    We associate to each $g\in J^d$ a fixed $g^* \in \argmin_{\hat g\in \nabla(B)} \norm{g-\hat g}_\infty$.
    By definition of $\nabla(B)$, there exists some $G^*\in B$ with $\nabla G^* = g^*$. Now define
    \begin{equation*}
        J' = \{ G^* + c - G^*(0,\dotsc,0) \mid g\in J^m, c\in C\} \subseteq C^{k+1,\alpha}([0,1]^d,\R).
    \end{equation*}
    Note that, in shifting $G^*$ by a constant, we leave its gradient $g^*$ unchanged but may possibly move it outside the unit ball $B$. Thus, for any $g\in J^d$ and $c\in C$ there exists a $G\in J'$ with $\norm{g-\nabla G}_\infty = \min_{\hat g\in \nabla(B)} \norm{g-\hat g}_\infty$ and $G(0,\dotsc,0) = c$.
    
    We assert that $J'$ is an $\frac\epsilon 2$-net of $B\cup J'$ w.r.t. $\norm{\cdot}_{C^1}$. Indeed, let $F\in B$ and let $\nabla F=(f_1,\dotsc,f_d)$
    For all $i\in \{1,\dotsc,d\}$ we have $f_i \in C^{k,\alpha}([0,1]^d,\R)$ with $\norm{f_i}_{C^{k,\alpha}} \leq \norm{F}_{C^{k+1,\alpha}} \leq 1$, since the partial derivatives of $f_i$ up to the $k$-th degree are a subset of the partial derivatives of $F$ up to the $(k+1)$-th degree.
    Thus, by the choice of $J$, there exists $g=(g_1,\dotsc,g_d)\in J^d$ with $\norm{f_i-g_i}_\infty \leq \frac{\epsilon}{8d}$, $i\in \{1,\dotsc,d\}$.
    By Lemma~\ref{lem:norm-subadditivity}~(i), it follows that $\norm{\nabla F-g}_\infty \leq \frac{\epsilon}{8\sqrt{d}}$.
    Moreover, since $F\in B$, we have $\norm{F}_\infty\leq 1$ and thus $F(0,\dotsc,0)\in [-1,1]$. Therefore, we can choose $G\in J'$ with
    \begin{equation*}
        \norm{g-\nabla G}_\infty = \min_{\hat g\in \nabla(B)} \norm{g-\hat g}_\infty \quad \text{and} \quad \abs{F(0,\dotsc,0)-G(0,\dotsc,0)} \leq \frac{\epsilon}{4}.
    \end{equation*}
    Since $\nabla F\in \nabla(B)$, we can estimate
    \begin{equation*}
        \norm{\nabla F-\nabla G}_\infty \leq \norm{\nabla F-g}_\infty + \norm{g-\nabla G}_\infty \leq 2 \norm{\nabla F-g}_\infty \leq \frac{\epsilon}{4\sqrt{d}}.
    \end{equation*}
    By Lemma~\ref{lem:norm-subadditivity}~(i), it follows immediately that
    \begin{equation*}
        \Norm{\frac{\partial}{\partial x_i}(F-G)}_\infty \leq \frac{\epsilon}{4\sqrt{d}} \leq \frac \epsilon 2
    \end{equation*}
    holds for all $i\in\{1,\dotsc,d\}$. It remains to show that $\norm{F-G}_\infty \leq \frac\epsilon 2$. To this end, fix $x\in [0,1]^d\setminus\{(0,\dotsc,0)\}$ and define the differentiable curve $\varphi : \left[0, \abs{x}\right] \to [0,1]^d,\quad t \mapsto \frac{tx}{\abs{x}}$.
    We have $\abs{\varphi'(t)} =1$ constantly. Therefore, we have
    \begin{align*}
        \abs{F(x)-G(x)} &\leq \abs{F(0,\dotsc,0)-G(0,\dotsc,0)} + \abs{(F-G)(x)-(F-G)(0,\dotsc,0)}\\
        &\leq \frac{\epsilon}{4} + \abs{(F-G)(x)-(F-G)(0,\dotsc,0)}\\
        &= \frac{\epsilon}{4} + \Abs{\int_0^{\abs{x}} \bigl(\nabla F(\varphi(t)) - \nabla G(\varphi(t))\bigr)\cdot \varphi'(t) \, dt}\\
        &\leq \frac{\epsilon}{4} + \int_0^{\abs{x}} \babs{\nabla F(\varphi(t)) - \nabla G(\varphi(t))} \cdot \abs{\varphi'(t)} \, dt\\
        &\leq \frac{\epsilon}{4} + \sqrt{d} \bnorm{\nabla F-\nabla G}_\infty \leq \frac \epsilon 2,
    \end{align*}
    where we used the fundamental theorem of Calculus for line integrals in the third line, followed by the triangle inequality and finally the fact that $\diam([0,1]^d)= \sqrt{d}$. Altogether, we have $\norm{F-G}_{C^1} \leq \frac \epsilon 2$, and since $F \in B$ was arbitrary, $J'$ is indeed an $\frac \epsilon 2$-net of $B\cup J'$. By Proposition~\ref{prop:covering-number-monotonicity}, it follows that
    \begin{align*}
        \log N(B, \norm{\cdot}_{C^1}, \epsilon) &\leq \log N(B\cup J', \norm{\cdot}_{C^1}, \tfrac{\epsilon}{2})\\ &\leq \log \card{J'} \leq \log (\card C \cdot \card J^d)\\
        &\leq \underbrace{\gamma(d,1,\alpha+k) 2^{\frac{3d}{\alpha+k}} d^{1+\frac{d}{\alpha+k}}}_{=\ \widetilde\gamma\ =\ \widetilde \gamma(d,\alpha+k)} \epsilon^{-\frac{d}{\alpha+k}} + \log \frac{4}{\epsilon}.
    \end{align*}
    To finish the proof, we need to bound $\log\frac 4\epsilon$ by a multiple of $\epsilon^{-\frac{d}{\alpha+k}}$. Consider the function
    \begin{equation*}
        a: [0,\infty) \to \R, \quad \epsilon \mapsto a(\epsilon) = \begin{cases}
            \epsilon^{\frac{d}{\alpha+k}}\cdot \log\frac 4\epsilon & \text{if} \ \epsilon > 0,\\
            0 & \text{else.}
        \end{cases}
    \end{equation*}
    $a$ is continuous on the compact set $[0,4]$ (check using L'Hospital's rule at $0$) and thus attains a maximum $M=M(d,\alpha+k)\geq 0$ on this set. For all $\epsilon>4$, $a(\epsilon)$ is negative, so $M$ is a global Maximum and we can bound
    \begin{equation*}
        \log N(B, \norm{\cdot}_{C^1}, \epsilon) \leq (\widetilde \gamma + M)\epsilon^{-\frac{d}{\alpha+k}}
    \end{equation*}
    for all $\epsilon>0$, proving the assertion with $\hat\gamma(d,1,\alpha+k) = \widetilde \gamma(d,\alpha+k) + M(d,\alpha+k)$.
\end{proof}

\phantomsection
\addcontentsline{toc}{section}{References}
\printbibliography[title={References}]

\end{document}